\renewcommand{\theequation}{\thesection\arabic{equation}}
\newtheorem{theorem}{Theorem}
\newtheorem{lemma}{Lemma}
\newtheorem{corollary}{Corollary}
\theoremstyle{definition}
\newtheorem{definition}{Definition}
\newtheorem{example}{Example}
\newtheorem{condition}{Condition}
\newtheorem{assump}{Assumption}
\DeclareMathOperator{\diag}{diag}
\DeclareMathOperator{\spann}{span}
\DeclareMathOperator{\Diam}{Diam}
\DeclareMathOperator*{\argmin}{arg\,min}
\begin{document}


\renewcommand{\baselinestretch}{1.25}

\markright{ \hbox{\footnotesize\rm Statistica Sinica
}\hfill\\[-13pt]
\hbox{\footnotesize\rm
}\hfill }

\markboth{\hfill{\footnotesize\rm Jean Feng and Noah Simon} \hfill}
{\hfill {\footnotesize\rm Hyper-parameter selection via split-sample validation} \hfill}

\renewcommand{\thefootnote}{}
$\ $\par


\fontsize{12}{14pt plus.8pt minus .6pt}\selectfont \vspace{0.8pc}
\centerline{\large\bf An analysis of the cost of hyper-parameter selection via split-}
\vspace{2pt} \centerline{\large\bf sample validation, with applications to penalized regression}
\vspace{.4cm} \centerline{Jean Feng, Noah Simon} \vspace{.4cm} \centerline{\it
Department of Biostatistics, University of Washington} \vspace{.55cm} \fontsize{9}{11.5pt plus.8pt minus
.6pt}\selectfont


\begin{quotation}
\noindent {\it Abstract:}
In the regression setting, given a set of hyper-parameters, a model-estimation procedure constructs a model from training data. The optimal hyper-parameters that minimize generalization error of the model are usually unknown. In practice they are often estimated using split-sample validation. Up to now, there is an open question regarding how the generalization error of the selected model grows with the number of hyper-parameters to be estimated. To answer this question, we establish finite-sample oracle inequalities for selection based on a single training/test split and based on cross-validation. We show that if the model-estimation procedures are smoothly parameterized by the hyper-parameters, the error incurred from tuning hyper-parameters shrinks at nearly a parametric rate. Hence for semi- and non-parametric model-estimation procedures with a fixed number of hyper-parameters, this additional error is negligible. For parametric model-estimation procedures, adding a hyper-parameter is roughly equivalent to adding a parameter to the model itself. In addition, we specialize these ideas for penalized regression problems with multiple penalty parameters. We establish that the fitted models are Lipschitz in the penalty parameters and thus our oracle inequalities apply. This result encourages development of regularization methods with many penalty parameters.
\vspace{9pt}

\noindent {\it Key words and phrases:}
Cross-validation, Regression, Regularization.
\par
\end{quotation}\par

\def\thefigure{\arabic{figure}}
\def\thetable{\arabic{table}}

\renewcommand{\theequation}{\thesection.\arabic{equation}}

\fontsize{12}{14pt plus.8pt minus .6pt}\selectfont

\setcounter{section}{1} 
\setcounter{equation}{0} 

\section{Introduction}

Per the usual regression framework, suppose we observe response $y \in \mathbb{R}$ and predictors $\boldsymbol {x} \in \mathbb{R}^p$. Suppose $y$ is generated by a true model $g^*$ plus random error $\epsilon$ with mean zero, e.g.
$y = g^*(\boldsymbol x) + \epsilon$.
Our goal is to estimate $g^*$.
Many model-estimation procedures can be formulated as selecting a model from some function class $\mathcal{G}$ given training data $T$ and $J$-dimensional hyper-parameter vector $\boldsymbol{\lambda}$. For example, in penalized regression problems, the fitted model can be expressed as the minimizer of the penalized training criterion
\begin{equation}
\label{eq:intro_pen_reg}
\hat{g}(\boldsymbol \lambda | T) = \argmin_{g\in \mathcal{G}} \sum_{(\boldsymbol{x}_i, y_i) \in T} \left (y_i -  g(\boldsymbol{x}_i) \right )^2 + \sum_{j=1}^J \lambda_j P_j(g),
\end{equation}
where $P_j$ are penalty functions and $\lambda_j$ are penalty parameters that serve as hyper-parameters of the model-estimation procedure.

If $\Lambda$ is a set of possible hyper-parameters, the goal is to find a penalty parameter $\boldsymbol{\lambda} \in \Lambda$ that minimizes the expected generalization error
$
\mathbb{E} \left [
\left ( y - \hat{g}(\boldsymbol{\lambda} | T)(\boldsymbol{x}) \right )^2
\right ].
$
Typically one uses a sample-splitting procedure where models are trained on a random partition of the observed data and evaluated on the remaining data.
One then chooses the hyper-parameter $\hat{\boldsymbol{\lambda}}$ that minimize the error on this validation set.
For a more complete review of cross-validation, refer to \citet{arlot2010survey}.

The performance of split-sample validation procedures is typically characterized by an oracle inequality that bounds the generalization error of the expected model selected from the validation set procedure. For $\Lambda$ that are finite, oracle inequalities have been established for a single training/validation split \citep{gyorfi2006distribution} and a general cross-validation framework \citep{van2003unified, van2004asymptotic}. To handle $\Lambda$ over a continuous range, one can use entropy-based approaches \citep{lecue2012oracle}.

The goal of this paper is to characterize the performance of models when the hyper-parameters are tuned by some split-sample validation procedure. We are particularly interested in an open question raised in \citet{bengio2000gradient}: what is the ``amount of overfitting... when too many hyper-parameters are optimized''? In addition, how many hyper-parameters is ``too many''? In this paper we show that actually a large number of hyper-parameters can be tuned without overfitting. In fact, if an oracle estimator converges at rate $R(n)$, then the number of hyper parameters $J$ can grow at roughly a rate of $J = O_p(nR(n))$ up to log terms without affecting the convergence rate. In practice, for penalized regression, this means that one can propose and tune over much more complex models than are currently often used.

To show these results, we prove that finite-sample oracle inequalities of the form
\begin{equation}
\label{thrm:intro_oracle_ineq}
\mathbb{E} \left [
\left ( y - \hat{g}(\hat{\boldsymbol{\lambda}} | T)(\boldsymbol{x}) \right )^2
\right ]
\le
(1+a)
\underbrace{
	\inf_{\lambda \in \Lambda}
	\mathbb{E} \left [
	\left ( y - \hat{g}(\boldsymbol{\lambda} | T)(\boldsymbol{x}) \right )^2
	\right ]
}_{\text{Oracle risk}}
+ \delta \left(J,n\right)
\end{equation}
are satisfied with high probability for some constant $a \ge 0$ and remainder $\delta(J,n)$ that depends on the number of tuned hyper-parameters $J$ and the number of samples $n$.
Under the assumption that the model -estimation procedure is Lipschitz in the hyper-parameters, we find that $\delta$ scales linearly in $J$.
For parametric model-estimation procedures, the additional error from tuning hyper-parameters is roughly $O_p(J/n)$, which is similar to the typical parametric model-estimation rate $O_p(p/n)$ where the model parameters are not regularized.
For semi- and non-parametric model-estimation procedures, this error is generally dominated by the oracle risk so we can actually grow the number of hyper-parameters without affecting the asymptotic convergence rate.

In addition, we specialize our results to penalized regression models of the form \eqref{eq:intro_pen_reg}.
The models in our examples are Lipschitz so that our oracle inequalities apply.
This suggests that multiple penalty parameters may improve the model estimation and that the recent interest in combining penalty functions (e.g. elastic net and sparse group lasso \citep{zou2003regression, simon2013sparse}) may have artificially restricted themselves to two-way combinations.

During our literature search, we found few theoretical results relating the number of hyper-parameters to the generalization error of the selected model. 
Much of the previous work only considered tuning a one-dimensional hyper-parameter over a finite $\Lambda$, proving asymptotic optimality \citep{van2004asymptotic} and finite-sample oracle inequalities \citep{van2003unified, gyorfi2006distribution}. Others have addressed split-sample validation for specific penalized regression problems with a single penalty parameter, such as linear model selection \citep{li1987asymptotic, shao1997asymptotic, golub1979generalized, chetverikov2016cross, chatterjee2015prediction}.
Only the results in \citet{lecue2012oracle} are relevant to answering our question of interest. A potential reason for this dearth of literature is that, historically, tuning multiple hyper-parameters was computationally difficult.
However there have been many recent proposals that address this computational hurdle \citep{bengio2000gradient, foo2008efficient, snoek2012practical}.

Section \ref{sec:main_results} presents oracle inequalities for sample-splitting procedures to understand how the number of hyper-parameters affects the model error.
Section \ref{sec:examples} applies these results to penalized regression models.
Section \ref{sec:simulations} provides a simulation study to support our theoretical results.
Oracle inequalities for general model-estimation procedures and proofs are given in the Supplementary Materials.

\section{Oracle Inequalities} \label{sec:main_results}

Here we establish oracle inequalities for models where the hyper-parameters are tuned by a single training/validation split and cross-validation.
We are interested in studying model-estimation procedures that vary smoothly in their hyper-parameters; such procedures tend to be easier to use and therefore tend to be more popular.

Let $D^{(n)}$ denote a dataset with $n$ samples.
Given dataset training data $D^{(m)}$, let $\hat{g}^{(m)}(\boldsymbol{\lambda} | D^{(m)})$ be some model-estimation procedure that maps hyper-parameter $\boldsymbol{\lambda}$ to a function in $\mathcal{G}$.
We assume the following Lipschitz-like assumption on the model-estimation procedure.
In particular, we suppose that for any $\boldsymbol{x}$, the predicted value $\hat{g}^{(m)}(\boldsymbol{\lambda} | D^{(m)})(\boldsymbol{x})$ is Lipschitz in $\boldsymbol{\lambda}$:
\begin{assump}
	\label{assump:lipschitz}
	Suppose there is a set $\mathcal{X}^{(L)} \subseteq \mathcal{X}$ such that for any $n_T \in \mathbb{N}$ and dataset $D^{(n_T)}$, there is a function $C_\Lambda(\boldsymbol{x} | D^{(n_T)}) : \mathcal{X}^{(L)} \mapsto \mathbb{R}^+$ such that for any $\boldsymbol{x} \in \mathcal{X}^{(L)}$, we have for all $\boldsymbol{\lambda}^{(1)}, \boldsymbol{\lambda}^{(2)} \in \Lambda$
	\begin{align}
	\left |
	\hat{g}^{(n_T)}(\boldsymbol{\lambda}^{(1)}|D^{(n_T)})(\boldsymbol{x}) - \hat{g}^{(n_T)}(\boldsymbol{\lambda}^{(1)}|D^{(n_T)})(\boldsymbol{x}) \right |
	\le C_\Lambda(\boldsymbol{x}|D^{(n_T)}) \|\boldsymbol{\lambda}^{(1)} - \boldsymbol{\lambda}^{(2)}\|_2.
	\end{align}
\end{assump}
\noindent We provide examples of penalized regression models that satisfy this assumption in Section \ref{sec:examples}.

\subsection{A Single Training/Validation Split}\label{sec:single}

In the training/validation split procedure, the dataset $D^{(n)}$ is randomly partitioned into a training set $T = (X_T, Y_T)$ and validation set $V = (X_V, Y_V)$ with $n_T$ and $n_V$ observations, respectively.
The selected hyper-parameter $\hat{\boldsymbol{\lambda}}$ is a minimizer of the validation loss
\begin{equation}
\label{eq:train_val_lambda}
\hat{\boldsymbol \lambda} \in \argmin_{\boldsymbol{\lambda} \in\Lambda} \frac{1}{2} \left \| y-\hat{g}^{(n_T)}( \boldsymbol \lambda | T) \right \|_{V}^{2}
\end{equation}
where $\| h \|^2_{V} \coloneqq \frac{1}{n_V}\sum_{(x_i, y_i)\in V} h^2(x_i, y_i)$ for function $h$.

We now present a finite-sample oracle inequality for the single training/validation split assuming Assumption~\ref{assump:lipschitz} holds.
Our oracle inequality is sharp, i.e. $a=0$ in \eqref{thrm:intro_oracle_ineq}, unlike most other work \citep{gyorfi2006distribution, lecue2012oracle, van2003unified}.
Note that the result below is a special case of Theorem \ref{thrm:train_val_complicated} in Supplementary Materials \ref{appendix:train_val}, which applies to general model-estimation procedures.
\begin{theorem}
	\label{thrm:train_val}
	Let $\Lambda=[\lambda_{\min},\lambda_{\max}]^{J}$ where $\Delta_{\lambda} = \lambda_{\max} - \lambda_{\min} \ge 0$.
	Suppose random variables $\epsilon_i$ from the validation set $V$ are independent with expectation zero and are uniformly sub-Gaussian with parameters $b$ and $B$:
	$$
	\max_{i: (x_i, y_i) \in V} B^2 \left ( \mathbb{E} e^{|\epsilon_i|^2/B^2} - 1 \right ) \le b^2.
	$$
	Let the oracle risk be denoted
	\begin{equation}
	\tilde{R}(X_V|T) = \argmin_{\lambda \in \Lambda} \left \| g^*-\hat{g}^{(n_T)}( \boldsymbol{\lambda} | T) \right \|_{V}^{2}.
	\label{eq:tilde_lambda_def}
	\end{equation}
	Suppose Assumption~\ref{assump:lipschitz} is satisfied over the set $X_V$.
	Then there is a constant $c>0$ only depending on $b$ and $B$ such that for all $\delta$ satisfying
	\begin{equation}
	\delta^{2}
	\ge
	c \left (
	\frac{J \log(\|C_\Lambda(\cdot |T)\|_V \Delta_{\Lambda} n + 1)}{n_{V}}
	\vee 
	\sqrt{\frac{J \log(\|C_\Lambda(\cdot |T)\|_V \Delta_{\Lambda} n + 1)}{n_{V}}
		\tilde{R}(X_V|T)}
	\right )
	\label{thrm:train_val_delta}
	\end{equation}
	we have
	\begin{align}
	Pr\left(
	\left\Vert g^* - \hat{g}^{(n_T)}( \hat{\boldsymbol{\lambda}} | T) \right\Vert _{V}^2 -
	\tilde{R}(X_V|T)
	\ge\delta^2
	\middle | 
	T, X_V
	\right )
	&\le c\exp\left(-\frac{n_{V}\delta^{4}}{c^{2} \tilde{R}(X_V|T)}\right)
	+ c\exp\left(-\frac{n_{V}\delta^{2}}{c^{2}}\right).
	\end{align}
	
\end{theorem}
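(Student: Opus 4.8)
The plan is the standard ``basic inequality plus localized empirical process'' route, with the work specific to this statement confined to converting Assumption~\ref{assump:lipschitz} into the metric-entropy input. First I would condition on $T$ and $X_V$, so that $g^*$, the fitted predictions $\hat{g}^{(n_T)}(\boldsymbol{\lambda}|T)$ and the moduli $C_\Lambda(\cdot|T)$ on $X_V$, and $\tilde{R}(X_V|T)$ are all fixed, and the only randomness is the validation noise $\epsilon=(\epsilon_i)_{(x_i,y_i)\in V}$. Write $u_{\boldsymbol{\lambda}}\in\mathbb{R}^{n_V}$ for the residual vector with entries $g^*(\boldsymbol{x}_i)-\hat{g}^{(n_T)}(\boldsymbol{\lambda}|T)(\boldsymbol{x}_i)$, let $\tilde{\boldsymbol{\lambda}}$ attain the oracle risk (so $\|u_{\tilde{\boldsymbol{\lambda}}}\|_V^2=\tilde{R}(X_V|T)$), and abbreviate $E:=\log(\|C_\Lambda(\cdot|T)\|_V\Delta_\Lambda n+1)$ for the log factor in \eqref{thrm:train_val_delta}. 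Using $y_i=g^*(\boldsymbol{x}_i)+\epsilon_i$ on $V$ and the optimality of $\hat{\boldsymbol{\lambda}}$ for $\|y-\hat{g}^{(n_T)}(\cdot|T)\|_V^2$, expanding the squares and cancelling $\|\epsilon\|_V^2$ gives the basic inequality
\[
\left\|g^*-\hat{g}^{(n_T)}(\hat{\boldsymbol{\lambda}}|T)\right\|_V^2-\tilde{R}(X_V|T)\;\le\;2\langle\epsilon,u_{\tilde{\boldsymbol{\lambda}}}\rangle_V-2\langle\epsilon,u_{\hat{\boldsymbol{\lambda}}}\rangle_V .
\]
It then remains to control the first (fixed) inner product and the second inner product localized at the random radius $\|u_{\hat{\boldsymbol{\lambda}}}\|_V$; this reduction is how Theorem~\ref{thrm:train_val_complicated} specializes here, so the remaining steps amount to supplying its entropy hypothesis.

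The next step is to read off metric entropy from Assumption~\ref{assump:lipschitz}, which gives $\|u_{\boldsymbol{\lambda}^{(1)}}-u_{\boldsymbol{\lambda}^{(2)}}\|_V\le\|C_\Lambda(\cdot|T)\|_V\|\boldsymbol{\lambda}^{(1)}-\boldsymbol{\lambda}^{(2)}\|_2$, so any $\ell_2$-net of $\Lambda=[\lambda_{\min},\lambda_{\max}]^J$ induces a $\|\cdot\|_V$-net of $\{u_{\boldsymbol{\lambda}}:\boldsymbol{\lambda}\in\Lambda\}$ of the same cardinality with the scale inflated by $\|C_\Lambda(\cdot|T)\|_V$. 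I would then bound $\log N(v,\{u_{\boldsymbol{\lambda}}\},\|\cdot\|_V)\le J\log(1+\sqrt{J}\,\|C_\Lambda(\cdot|T)\|_V\Delta_\Lambda/v)$ and check that the local entropy integral obeys $\int_0^{s}\sqrt{\log N(v,\{u_{\boldsymbol{\lambda}}\},\|\cdot\|_V)}\,dv\lesssim s\sqrt{JE}$ for all $s\ge\delta$: the condition \eqref{thrm:train_val_delta} forces $\delta\gtrsim\sqrt{J/n_V}\ge\sqrt{J}/n$, which lets $\sqrt{J}/s$ be replaced by $n$ inside the logarithm, and this is exactly why the factor $n$ (rather than $n_V$ or $\sqrt{J}$) appears in \eqref{thrm:train_val_delta}.

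Next I would run chaining and peeling. For fixed $\boldsymbol{u}$ the uniform sub-Gaussian hypothesis makes $\langle\epsilon,\boldsymbol{u}\rangle_V$ sub-Gaussian with variance proxy of order $\|\boldsymbol{u}\|_V^2/n_V$, so $\boldsymbol{u}\mapsto\langle\epsilon,\boldsymbol{u}\rangle_V$ is a sub-Gaussian process for the metric $\|\cdot\|_V/\sqrt{n_V}$; a chaining tail bound (Dudley's entropy integral plus a sub-Gaussian deviation term) together with the previous step gives, for each $s\ge\delta$ and $x>0$,
\[
\sup_{\|u_{\boldsymbol{\lambda}}\|_V\le s}\left|\langle\epsilon,u_{\boldsymbol{\lambda}}\rangle_V\right|\;\le\;c_1\,\frac{s}{\sqrt{n_V}}\left(\sqrt{JE}+\sqrt{x}\right)
\]
with probability at least $1-e^{-x}$. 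Peeling over the dyadic shells $\{2^k\delta\le\|u_{\boldsymbol{\lambda}}\|_V<2^{k+1}\delta\}$, with slack $x+k$ in shell $k$ (summable to $O(e^{-x})$), then gives a bound at the random index: on an event of probability at least $1-c_2 e^{-x}$, either $\|u_{\hat{\boldsymbol{\lambda}}}\|_V<\delta$ (so the left side of the theorem is already below $\delta^2$) or $|\langle\epsilon,u_{\hat{\boldsymbol{\lambda}}}\rangle_V|\le c_3\|u_{\hat{\boldsymbol{\lambda}}}\|_V n_V^{-1/2}(\sqrt{JE}+\sqrt{x})$. For the fixed term, since $\|u_{\tilde{\boldsymbol{\lambda}}}\|_V^2=\tilde{R}(X_V|T)$, a direct sub-Gaussian tail bound gives $|\langle\epsilon,u_{\tilde{\boldsymbol{\lambda}}}\rangle_V|\le c_4\sqrt{\tilde{R}(X_V|T)\,x/n_V}$ with probability at least $1-2e^{-x}$.

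Finally I would solve the resulting quadratic. On the intersection of these events, with $r:=\|g^*-\hat{g}^{(n_T)}(\hat{\boldsymbol{\lambda}}|T)\|_V$, the basic inequality reads $r^2-\tilde{R}(X_V|T)\le\alpha r+\beta$ with $\alpha=c_5 n_V^{-1/2}(\sqrt{JE}+\sqrt{x})$ and $\beta=\tilde{R}(X_V|T)+c_5\sqrt{\tilde{R}(X_V|T)\,x/n_V}$; solving and using $\sqrt{\alpha^2+4\beta}\le\alpha+2\sqrt{\beta}$ (rather than $\sqrt{2}(\alpha+\sqrt{\beta})$, which would spoil the sharp constant) gives $r^2-\tilde{R}(X_V|T)\le\alpha^2+\alpha\sqrt{\beta}+c_5\sqrt{\tilde{R}(X_V|T)\,x/n_V}$, each term of which is at most a fixed multiple of $\frac{JE}{n_V}\vee\sqrt{\frac{JE}{n_V}\tilde{R}(X_V|T)}\vee\frac{x}{n_V}\vee\sqrt{\frac{\tilde{R}(X_V|T)\,x}{n_V}}$. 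Taking $x=c_6\big(n_V\delta^2\wedge n_V\delta^4/\tilde{R}(X_V|T)\big)$ makes the last two at most $\delta^2$ and \eqref{thrm:train_val_delta} (with $c$ large) makes the first two at most $\delta^2$, so $r^2-\tilde{R}(X_V|T)\le\delta^2$ on this event, whose complement has probability $\le(c_2+2)e^{-x}\le c\exp(-n_V\delta^4/(c^2\tilde{R}(X_V|T)))+c\exp(-n_V\delta^2/c^2)$. I expect the main obstacle to be obtaining the \emph{localized} empirical-process bound — one whose deviation term scales with $\|u_{\boldsymbol{\lambda}}\|_V$ rather than with the global diameter of $\{u_{\boldsymbol{\lambda}}\}$ — and then threading the peeling step and the quadratic so that the coefficient of $\tilde{R}(X_V|T)$ stays exactly $1$, i.e.\ so that the inequality is sharp.
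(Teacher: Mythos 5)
Your proposal is correct and follows essentially the same route as the paper's proof (via Theorem~\ref{thrm:train_val_complicated}): the basic inequality, the conversion of Assumption~\ref{assump:lipschitz} into a covering bound on $\{\hat{g}^{(n_T)}(\boldsymbol{\lambda}|T):\boldsymbol{\lambda}\in\Lambda\}$ with the factor $n$ entering the logarithm exactly because $\delta$ is bounded below, a localized sub-Gaussian empirical-process bound, and a peeling argument. The only differences are bookkeeping: the paper invokes Lemma~\ref{lemma:cor83} for the localized bound and peels on the excess risk with a Cauchy--Schwarz case split, whereas you re-derive the chaining bound, peel on $\|u_{\hat{\boldsymbol{\lambda}}}\|_V$, and solve a quadratic at the end.
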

\noindent
Theorem \ref{thrm:train_val} states that with high probability, the excess risk, e.g. the error incurred during the hyper-parameter selection process, is no more than $\delta^2$.
As seen in \eqref{thrm:train_val_delta}, $\delta^2$ is the maximum of two terms: a near-parametric term and the geometric mean of the near-parametric term and the oracle risk. To see this more clearly, we express Theorem \ref{thrm:train_val} using asymptotic notation.
\begin{corollary}
	\label{corr:train_val}
	Under the assumptions given in Theorem \ref{thrm:train_val}, we have
	\begin{align}
	\left\Vert g^* - \hat{g}^{(n_T)}( \hat{\boldsymbol{\lambda}} | T) \right\Vert _{V}^2 &
	\le \min_{\lambda \in \Lambda} \left\Vert g^* - \hat{g}^{(n_T)}( {\boldsymbol{\lambda}} | T) \right \Vert^2_{V}
	\label{eq:asym_train_val_oracle_risk}
	\\
	& + O_p \left(\frac{J\log (n \|C_\Lambda\|_V \Delta_{\Lambda} )}{n_{V}} \right) 
	\label{eq:asym_train_val_theorem1} \\
	& + O_p \left(
	\sqrt{
		\frac{J \log (n \|C_\Lambda\|_V \Delta_{\Lambda} )}{n_{V}}
		\min_{\lambda \in \Lambda} \left\Vert g^* - \hat{g}^{(n_T)}( {\boldsymbol{\lambda}} | T) \right \Vert^2_{V}
	}
	\right ).
	\label{eq:asym_train_val_theorem2}
	\end{align}
\end{corollary}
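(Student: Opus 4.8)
\noindent The plan is to read Corollary~\ref{corr:train_val} off directly from the finite-sample tail bound in Theorem~\ref{thrm:train_val}; since all of the analytic content already lives in that theorem, what remains is only to convert a tail bound into an $O_p$ statement. First I would observe that, by \eqref{eq:tilde_lambda_def} (read as the minimal value of the validation risk over $\Lambda$), the quantity $\tilde{R}(X_V|T)$ is exactly the oracle risk $\min_{\lambda\in\Lambda}\|g^*-\hat{g}^{(n_T)}(\boldsymbol{\lambda}|T)\|_V^2$ that appears as the first term on the right of \eqref{eq:asym_train_val_oracle_risk}; hence the left-hand side of the corollary minus that first term is precisely the excess risk $\|g^*-\hat{g}^{(n_T)}(\hat{\boldsymbol{\lambda}}|T)\|_V^2-\tilde{R}(X_V|T)$ that Theorem~\ref{thrm:train_val} bounds.

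Next I would abbreviate the right-hand side of \eqref{thrm:train_val_delta}, with the constant $c$ stripped off, as $r_n^2\coloneqq (A_n/n_V)\vee\sqrt{(A_n/n_V)\,\tilde{R}(X_V|T)}$, where $A_n\coloneqq J\log(\|C_\Lambda(\cdot|T)\|_V\Delta_\Lambda n+1)$. Because \eqref{thrm:train_val_delta} is a lower bound on $\delta^2$, for any constant $M\ge 1$ the value $\delta^2=cMr_n^2$ is admissible in Theorem~\ref{thrm:train_val}, and because the probability bound there is conditional on $(T,X_V)$ this $\delta$ may depend on $T$ and $X_V$. Substituting, Theorem~\ref{thrm:train_val} gives
\[
\Pr\!\left(\bigl\|g^*-\hat{g}^{(n_T)}(\hat{\boldsymbol{\lambda}}|T)\bigr\|_V^2-\tilde{R}(X_V|T)\ge cMr_n^2 \,\middle|\, T,X_V\right)\le c\exp\!\left(-\tfrac{n_V M^2 r_n^4}{\tilde{R}(X_V|T)}\right)+c\exp\!\left(-\tfrac{n_V M r_n^2}{c}\right).
\]
Since $r_n^2\ge A_n/n_V$, the second exponent is at least $MA_n/c$, and since $r_n^4\ge (A_n/n_V)\tilde{R}(X_V|T)$, the first is at least $M^2A_n$; in the non-degenerate regime $\|C_\Lambda(\cdot|T)\|_V\Delta_\Lambda n\ge 1$ we have $A_n\ge J\log 2\ge\log 2$ (the case $\Delta_\Lambda=0$ being trivial, since then $\Lambda$ is a single point and the excess risk is zero), so the right-hand side above is at most $c\,2^{-M^2}+c\,2^{-M/c}$. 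Given $\eta>0$, choosing $M=M(\eta)$ large enough --- deterministic, and independent of $n$ --- makes this at most $\eta$; taking expectations over $(T,X_V)$ removes the conditioning. Thus the excess risk is $O_p(r_n^2)$, and since $r_n^2$ is at most the sum of its two constituent terms, those two terms are exactly \eqref{eq:asym_train_val_theorem1} and \eqref{eq:asym_train_val_theorem2}, which is the claim.

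The one step that needs any care --- more a bookkeeping subtlety than a real obstacle --- is turning the conditional tail bound into an unconditional $O_p(\cdot)$ bound whose rate $r_n^2$ is itself random through $\|C_\Lambda(\cdot|T)\|_V$ and $\tilde{R}(X_V|T)$: I must check that the cutoff $M$ absorbing the tails can be chosen free of $n$ and of those random quantities, which holds because the exponents in Theorem~\ref{thrm:train_val} involve them only through the universal lower bound $A_n\ge\log 2$. Everything else is routine.
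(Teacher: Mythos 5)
Your proposal is correct and is essentially the paper's (implicit) argument: the paper states Corollary~\ref{corr:train_val} as a direct translation of Theorem~\ref{thrm:train_val} into $O_p$ notation, which is exactly what you carry out by taking $\delta^2 = cM r_n^2$, noting the exponents are bounded below by multiples of $M A_n$ with $A_n \gtrsim 1$, and integrating out the conditioning on $(T, X_V)$. Your care about the randomness of the rate and the choice of $M$ independent of $n$ is the right bookkeeping, and nothing further is needed.
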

\noindent
Corollary \ref{corr:train_val} show that the risk of the selected model is bounded by the oracle risk, the near-parameteric term \eqref{eq:asym_train_val_theorem1}, and the geometric mean of the two values \eqref{eq:asym_train_val_theorem2}.
We refer to \eqref{eq:asym_train_val_theorem1} as near-parametric because the error term in (un-regularized) parametric regression models is typically $O_p(J/n)$, where $J$ is the parameter dimension and $n$ is the number of training samples. Analogously, \eqref{eq:asym_train_val_theorem1} is $O_p(J/n_V)$ modulo a $\log n$ term in the numerator.
The geometric mean \eqref{eq:asym_train_val_theorem2} can be thought of as a consequence of tuning hyper-parameters over
\begin{equation}
\mathcal{G}(T) = \left \{ \hat{g}^{(n_T)}( {\boldsymbol{\lambda}}| T) : \boldsymbol{\lambda} \in \Lambda \right \}.
\end{equation}
As $\mathcal{G}(T)$ does not (or is very unlikely to) contain the true model $g^*$, tuning the hyper-parameters via training/validation split is tuning over a the misspecified model class.
The geometric mean takes into account this misspecification error.

In the semi- and non-parametric regression settings, the oracle error usually shrinks at a rate of $O_p(n_T^{-\omega})$ where $\omega \in (0, 1)$.
If the number of hyper-parameters is fixed and $n$ is large, the oracle risk will tend to dominate the upper bound.
Hence for such problems, we can actually let the number of hyper-parameters grow -- the asymptotic convergence rate of the upper bound will be unchanged as long as $J$ grows no faster than
$
O_p\left (
\frac{n_{V} n_T^{-\omega}}{\log (n \|C_\Lambda\|_V \Delta_{\Lambda})}
\right ).
$

\subsection{Cross-Validation}\label{sec:cv}

Now we give an oracle inequality for $K$-fold cross-validation.
Previously, the oracle inequality was with respect to the $L_2$-norm over the validation covariates.
Now we give our result with respect to the functional $L_2$-norm.
We suppose our dataset is composed of independent identically distributed observations $(X,y)$ where $X$ is independent of $\epsilon$.
The functional $L_2$-norm is defined as
$
\left \| h \right \|^2_{L_2} = \int \left |h(x) \right |^2 d\mu(x)
$.

For $K$-fold cross-validation, we randomly partition the dataset $D^{(n)}$ into $K$ sets, which we assume to have equal size for simplicity. Partition $k$ will be denoted $D_k^{(n_V)}$ and its complement will be denoted $D_{-k}^{(n_T)} = D^{(n)} \setminus D_k^{(n_V)}$. We train our model using $D_{-k}^{(n_T)}$ for $k=1,...,K$ and select the hyper-parameter that minimizes the average validation loss
\begin{eqnarray}
\label{kfold_opt}
\hat{\boldsymbol \lambda} &=& \argmin_{\boldsymbol{\lambda} \in\Lambda} \frac{1}{K} \sum_{k=1}^K  \left \| y-\hat{g}^{(n_T)}(\boldsymbol \lambda | D_{-k}^{(n_T)}) \right \|_{D_k^{(n_V)}}^{2}.
\end{eqnarray}

In traditional cross-validation, the final model is retrained on all the data with $\hat{\boldsymbol{\lambda}}$. However bounding the generalization error of the retrained model requires additional regularity assumptions \citep{lecue2012oracle}. We consider the ``averaged version of $K$-fold cross-validation'' instead
\begin{equation}
\label{thrm:avg_cv}
\bar{g}\left ( {D^{(n)}} \right ) = 
\frac{1}{K} \sum_{k=1}^K 
\hat{g}^{(n_T)} \left (\hat{\boldsymbol \lambda} \middle | D^{(n_T)}_{-k} \right ).
\end{equation}
To bound the generalization error of \eqref{thrm:avg_cv}, we require an assumption in \citet{lecue2012oracle} that controls the tail behavior of the fitted models.
A classical approach for bounding the tail behavior of random variable $X$ is to bound its Orlicz norm 
$\|X\|_{L_{\psi_1}}= \inf \{C > 0: \mathbb{E}\exp(|X|/C) - 1 \le 1\}$ \citep{van1996weak}.
\begin{assump}
	\label{assump:tail_margin}
	There exist constants $K_0, K_1 \ge 0$ and $\kappa \ge 1$ such that for any $n_T \in \mathbb{N}$, dataset $D^{(n_T)}$, and $\boldsymbol{\lambda} \in \Lambda$, we have
	\begin{align}
	\left \|
	\left(
	y - \hat{g}^{(n_T)}(\boldsymbol{\lambda} | D^{(n_T)})
	\right)^2
	- \left(
	y - g^*
	\right)^2
	\right \|_{L_{\psi_1}} & \le K_0
	\label{eq:cv_assump1}\\
	\left \|
	\left(
	y - \hat{g}^{(n_T)}(\boldsymbol{\lambda} | D^{(n_T)})
	\right)^2
	- \left(
	y - g^*
	\right)^2
	\right \|_{L_2}
	& \le 
	K_1 \left \|
	g^{*}-\hat{g}(\boldsymbol{\lambda}|D^{(n_{T})})
	\right \|_{L_{2}}^{1/\kappa}.
	\label{eq:cv_assump2}
	\end{align}
\end{assump}

With the above assumption, the following oracle inequality bounds the risk of averaged version of $K$-fold cross-validation.
It is a special case of Theorem~\ref{thrm:jean_cv} in the Supplementary Materials, which extends Theorem 3.5 in \citet{lecue2012oracle}.
The notation $\mathbb{E}_{D^{(m)}}$ indicates the expectation over random $m$-sample datasets $D^{(m)}$ drawn from the probability distribution $\mu$.
\begin{theorem}
	\label{thrm:kfold}
	Let $\Lambda=[\lambda_{\min},\lambda_{\max}]^{J}$ where $\Delta_{\Lambda} = (\lambda_{\max} - \lambda_{\min}) \vee 1$.
	Suppose random variables $\epsilon_i$ are independent with expectation zero, satisfy $\|\epsilon\|_{L_{\psi_2}}= b <\infty$, and are independent of $X$.
	Suppose Assumption~\ref{assump:lipschitz} holds over the set $\mathcal{X}$ and Assumption~\ref{assump:tail_margin} holds.
	Suppose there exists a function $\tilde{h}$ and some $\sigma_0 > 0$ such that
	\begin{align}
	\tilde{h}(n_{T})
	\ge
	1 + \sum_{k=1}^{\infty}
	k\Pr\left(\|C_\Lambda(\cdot |D^{(n_{T})})\|_{L_{\psi_{2}}}\ge2^{k}\sigma_{0}\right).
	\label{eq:prob_bound_cv}
	\end{align}
	Then there exists an absolute constant $c_{1}>0$ and a constant $c_{K_0, b}>0$ such that for any $a > 0$,
	\begin{align}
	\begin{split}
	\mathbb{E}_{D^{(n)}}\left(
	\|
	\bar{g}(D^{(n)})
	-g^{*}
	\|_{L_{2}}^{2}\right)
	& \le	(1+a)
	\inf_{\lambda\in\Lambda}
	\left[\mathbb{E}_{D^{(n_{T})}}\left(\|
	\hat{g}(\boldsymbol{\lambda}|D^{(n_{T})})
	-g^{*}\|_{L_{2}}^{2}\right)\right] \\
	& +
	c_{1}
	\left (\frac{1+a}{a} \right )^2
	\frac{J\log n_{V}}{n_{V}}
	K_0
	\left[\log\left(\Delta_{\Lambda} c_{K_0, b} n \sigma_0 +1\right)+1\right]
	\tilde{h}(n_{T}).
	\end{split}
	\label{eq:cv_lipschitz_oracle_ineq}
	\end{align}
\end{theorem}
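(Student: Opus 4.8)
The plan is to deduce Theorem~\ref{thrm:kfold} from the general cross-validation oracle inequality (Theorem~\ref{thrm:jean_cv} in the Supplementary Materials), whose only model-dependent input is a bound on the metric entropy of the class of fitted models, and to supply that entropy bound using the Lipschitz Assumption~\ref{assump:lipschitz}. Fix a fold $k$ and condition on the training block $D_{-k}^{(n_T)}$; then $\boldsymbol{\lambda}\mapsto\hat{g}^{(n_T)}(\boldsymbol{\lambda}\mid D_{-k}^{(n_T)})$ is a deterministic path in $\mathcal{G}$ indexed by the box $\Lambda=[\lambda_{\min},\lambda_{\max}]^J$, whose $\ell_2$-diameter is at most $\sqrt{J}\,\Delta_{\Lambda}$. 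A coordinatewise grid of mesh $\eta/(\sqrt{J}\,\|C_\Lambda(\cdot\mid D_{-k}^{(n_T)})\|)$ in $\Lambda$ has at most $(\sqrt{J}\,\Delta_{\Lambda}\,\|C_\Lambda\|/\eta+1)^J$ points and, pushed through the Lipschitz map, yields an $\eta$-net of $\{\hat{g}^{(n_T)}(\boldsymbol{\lambda}\mid D_{-k}^{(n_T)}):\boldsymbol{\lambda}\in\Lambda\}$ in $L_2(\mu)$ (and, here, in sup-norm). Hence the log-covering number of the fitted-model class at scale $\eta$ is $O(J\log(\Delta_{\Lambda}\|C_\Lambda\| n/\eta))$, a parametric-type entropy; feeding it into the selection/chaining machinery behind Theorem~\ref{thrm:jean_cv}, with $\Lambda$ discretized at resolution of order $1/n$, is exactly what produces the $\frac{J\log n_V}{n_V}$ factor and the $\log(\Delta_{\Lambda}c_{K_0,b}n\sigma_0+1)$ term in \eqref{eq:cv_lipschitz_oracle_ineq}.

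With the entropy in hand I would invoke the general theorem. The sub-Gaussian noise hypothesis $\|\epsilon\|_{L_{\psi_2}}=b$ together with Assumption~\ref{assump:tail_margin}---the uniform Orlicz bound $K_0$ on the excess loss in \eqref{eq:cv_assump1} and the Bernstein/margin-type relation \eqref{eq:cv_assump2} with exponent $\kappa$---are precisely the ingredients of Theorem~3.5 of \citet{lecue2012oracle} and of its extension Theorem~\ref{thrm:jean_cv}. Applying that result conditionally on each training block to the class above, and using convexity of $h\mapsto\|h-g^*\|_{L_2}^2$ to pass from the averaged estimator $\bar{g}(D^{(n)})$ to the average of the per-fold risks, gives a bound of the claimed shape. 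The coupling of the folds through the common minimizer $\hat{\boldsymbol{\lambda}}$ is handled in the standard averaged-CV way: compare $\hat{\boldsymbol{\lambda}}$ against the oracle $\boldsymbol{\lambda}$ in the empirical criterion \eqref{kfold_opt} and control the empirical process $\sup_{\boldsymbol{\lambda}\in\Lambda}$ of the centered validation losses summed over all $K$ blocks, each a fresh i.i.d.\ $n_V$-sample, conditionally on its training block. The outcome is $(1+a)$ times the expected oracle risk plus a remainder proportional to $((1+a)/a)^2\,\frac{J\log n_V}{n_V}\,K_0\,[\log(\cdots)+1]$ times a function of the random Lipschitz constant $\|C_\Lambda(\cdot\mid D^{(n_T)})\|$; the factor $((1+a)/a)^2$ is the price of splitting the cross terms with a free weight $a$, which is also why the oracle risk appears with a multiplicative $(1+a)$ rather than sharply.

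The step needing the most care---and the reason the statement carries the auxiliary function $\tilde{h}$---is that the class complexity is itself random: $\|C_\Lambda(\cdot\mid D^{(n_T)})\|$ depends on the training sample, whereas the general theorem is stated for a fixed class. I would handle this by decomposing dyadically over the events $A_j=\{2^j\sigma_0\le\|C_\Lambda(\cdot\mid D^{(n_T)})\|_{L_{\psi_2}}<2^{j+1}\sigma_0\}$: on $A_j$ the log-covering number above is inflated only by an additive $O(Jj)$ (replace $\|C_\Lambda\|$ by $2^{j+1}\sigma_0$), so the conditional remainder on $A_j$ is at most the $\sigma_0$-level remainder plus an $O(j)$ multiple of $\frac{J\log n_V}{n_V}K_0$. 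Taking expectations and summing, the extra contribution is controlled by $\sum_{j\ge1} j\,\Pr(\|C_\Lambda(\cdot\mid D^{(n_T)})\|_{L_{\psi_2}}\ge 2^j\sigma_0)$, which by hypothesis \eqref{eq:prob_bound_cv} is at most $\tilde{h}(n_T)-1$; this is how $\tilde{h}(n_T)$ enters the bound. It remains to verify that the uniform-in-$\boldsymbol{\lambda}$ bounds of Assumption~\ref{assump:tail_margin} survive conditioning on $D^{(n_T)}$, which they do since \eqref{eq:cv_assump1}--\eqref{eq:cv_assump2} are assumed for every $n_T$ and every $D^{(n_T)}$, and that the passage from the empirical validation norm $\|\cdot\|_{D_k^{(n_V)}}$ to $\|\cdot\|_{L_2}$ is uniform over $\Lambda$, which again follows from the same parametric entropy estimate. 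Absorbing the constants depending on $b$ and $K_0$ into $c_1$ and $c_{K_0,b}$ then yields \eqref{eq:cv_lipschitz_oracle_ineq}.
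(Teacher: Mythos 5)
Your proposal is correct and follows essentially the same route as the paper: reduce to the data-dependent extension of Lecu\'e's Theorem~3.5 (Theorem~\ref{thrm:jean_cv}), supply the parametric-type entropy of the fitted/loss class from Assumption~\ref{assump:lipschitz} (the paper does this via covering numbers of $\Lambda$ and Dudley bounds on Talagrand's $\gamma_1,\gamma_2$ in the $L_{\psi_1}$ and $L_2$ norms, using $K_0$ and $\|\epsilon\|_{L_{\psi_2}}=b$ exactly as you indicate), and handle the randomness of $\|C_\Lambda(\cdot\,|\,D^{(n_T)})\|_{L_{\psi_2}}$ by the same dyadic peeling at scale $\sigma_0$ that produces the $\tilde{h}(n_T)$ factor. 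The only slips are cosmetic (e.g.\ the parenthetical sup-norm net claim is unjustified since $C_\Lambda$ is only an $x$-dependent Lipschitz factor), and they do not affect the argument.
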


As in Theorem \ref{thrm:train_val}, the remainder term in Theorem~\ref{thrm:kfold} includes a near-parametric term $O_p(J/n_V)$.
So as before, adding hyper-parameters to parametric model estimation incurs a similar cost as adding parameters to the parametric model itself and adding hyper-parameters to semi- and non-parametric regression settings is relatively ``cheap" and negligible asymptotically.

The differences between Theorems \ref{thrm:train_val} and \ref{thrm:kfold} highlight the tradeoffs made to establish an oracle inequality involving the functional $L_2$-error.
The biggest tradeoff is that Theorem~\ref{thrm:kfold} adds Assumption~\ref{assump:tail_margin}.
Though we can relax Assumption~\ref{assump:tail_margin} to hold over datasets $D$ in some high-probability set, the difficulty lies in controlling the tail behavior of the fitted models over all $\Lambda$.
For some model estimation procedures, $K_0$ may grow with $n$ if $\lambda_{\min}$ shrinks too quickly with $n$.
In this case, the remainder term may not longer shrink at a near-parametric rate.
Unfortunately requiring $\lambda_{\min}$ to shrink at an appropriate rate seems to defeat the purpose of cross-validation.
So even though Theorem~\ref{thrm:kfold} helps us better understand cross-validation, it is limited by this assumption.
In addition, the Lipschitz assumption must hold over all $\mathcal{X}$ in Theorem~\ref{thrm:kfold}, rather than just the observed covariates.
Finally, the oracle inequality in Theorem~\ref{thrm:kfold} is no longer sharp since the oracle risk is scaled by $1+a$ for $a > 0$.

\section{Penalized regression models}
\label{sec:examples}
Now we apply our results to analyze penalized regression procedures of the form \eqref{eq:intro_pen_reg}.
Penalty functions encourage particular characteristics in the fitted models (e.g. smoothness or sparsity) and combining multiple penalty functions results in models that exhibit a combination of the desired characteristics. 
There is much interest in combining multiple penalty functions, but few methods incorporate more than two penalties due to (a) the concern that models may overfit the data when selection of many penalty parameters is required; and (b) computational issues in optimizing multiple penalty parameters. In this section, we evaluate the validity of concern (a) using the results of Section~\ref{sec:main_results}. We see that, contrary to popular wisdom, using split-sample validation to select multiple penalty parameters should not result in a drastic increase to the generalization error of the selected model.

In this section, we consider penalty parameter spaces of the form
$\Lambda = [ n^{-t_{\min}}, n^{t_{\max}}]^J$
for $t_{\min}, t_{\max} \ge 0$.
This regime works well for two reasons: one, our rates depend only quite weakly on $t_{\min}$ and $t_{\max}$; and two, oracle $\lambda$-values are generally $O_p(n^{-\alpha})$ for some $\alpha \in (0,1)$ \citep{van2000empirical, van2015penalized, buhlmann2011statistics}. So long as $t_{\min} > \alpha$, $\Lambda$ will contain the optimal penalty parameter.
We do not consider settings where $\lambda_{\min}$ shrinks faster than a polynomial rate since the fitted models can be ill-behaved.

In the following sections, we do an in-depth study of additive models of the form
\begin{equation}
g(\boldsymbol{x}^{(1)}, ..., \boldsymbol{x}^{(J)})= \sum_{j=1}^J g_j(\boldsymbol{x}^{(j)}).
\end{equation}
We first consider parametric additive models (with potentially growing numbers of parameters) fitted with smooth and non-smooth penalties and then nonparametric additive models.
We find that the Lipschitz function $C_\Lambda(\boldsymbol{x} | T)$ scales with $n^{O_p(t_{\min})}$.
Applying Theorems~\ref{thrm:train_val} and \ref{thrm:kfold}, we find that the near-parametric term in the remainder only grows linearly in $t_{\min}$.
We apply these results to various additive model estimation methods.
For instance, in the generalized additive model example, we show that under minimal assumptions, the error from tuning penalty parameters is negligible compared to the error from solving the penalized regression problem with oracle penalty parameters.

\subsection{Parametric additive models}
\label{sec:param_add_models}
Parametric additive models with model parameters $\boldsymbol{\theta} = \left (\boldsymbol{\theta}^{(1)}, ..., \boldsymbol{\theta}^{(J)} \right )$ have the form
\begin{equation}
g(\boldsymbol{\theta})(\boldsymbol{x})
= \sum_{j=1}^J g_j(\boldsymbol{\theta}^{(j)})(\boldsymbol{x}^{(j)}).
\end{equation}
We denote the training criterion for training data $T$ as
\begin{equation}
\label{eq:param_add}
L_T \left (\boldsymbol{\theta}, \boldsymbol{\lambda} \right) 
\coloneqq \frac{1}{2} \left  \| y -  g(\boldsymbol{\theta}) \right \|^2_T 
+ \sum_{j=1}^J \lambda_j P_j(\boldsymbol{\theta}^{(j)}).
\end{equation}
Suppose $\boldsymbol{\theta}^*$ is the unique minimizer of the expected loss $\| y - g(\boldsymbol{\theta}) \|^2_{L_2}$.

\subsubsection{Parametric regression with smooth penalties}
\label{sec:param_smooth}
We begin with the simple case where the penalty functions are smooth. The following lemma states that the fitted models are Lipschitz in the penalty parameter vector.
Given matrices $A$ and $B$, $A \succeq B$ means that $A - B$ is a positive semi-definite matrix.
\begin{lemma}
	\label{lemma:param_add}
	Let $\Lambda \coloneqq \left [ \lambda_{\min}, \lambda_{\max} \right ]^J$ where $\lambda_{\max} \ge \lambda_{\min} > 0$.
	For a fixed training dataset $T \equiv D^{(n_T)}$, suppose for all $\boldsymbol{\lambda} \in \Lambda$, $L_T \left (\boldsymbol{\theta}, \boldsymbol{\lambda} \right)$ has a unique minimizer
	\begin{equation}
	\label{eq:param_add_estimator}
	\left\{
	\hat{\boldsymbol{\theta}}^{(j)}\left (\boldsymbol{\lambda} | T \right )
	\right\}_{j=1}^J =
	\argmin_{\boldsymbol{\theta} \in \mathbb{R}^p} L_T \left (\boldsymbol{\theta}, \boldsymbol{\lambda} \right).
	\end{equation}
	Suppose for all $j = 1,...,J$, the parametric class $g_j$ is $\ell_j$-Lipschitz in its parameters
	\begin{align}
	\left|
	g_j (\boldsymbol{\theta}^{(1)} )(\boldsymbol{x}^{(j)})
	-g_j (\boldsymbol{\theta}^{(2)} )(\boldsymbol{x}^{(j)})
	\right|
	\le
	\ell_j (\boldsymbol{x}^{(j)})
	\|\boldsymbol{\theta}^{(1)}-\boldsymbol{\theta}^{(2)}\|_{2}
	\quad
	\forall \boldsymbol{x}^{(j)} \in \mathcal{X}^{(j)}.
	\label{eq:lipschitz_g}
	\end{align}
	Further suppose for all $j=1,..,J$, $P_j(\boldsymbol{\theta}^{(j)})$ and $g_j(\boldsymbol{\theta}^{(j)})(\boldsymbol{x})$ are twice-differentiable with respect to $\boldsymbol{\theta}^{(j)}$ for any fixed $\boldsymbol{x}$.
	Suppose there exists an $m(T) > 0$ such that the Hessian of the penalized training criterion at the minimizer satisfies
	\begin{equation}
	\left . \nabla_{\theta}^2 L_T \left (\boldsymbol{\theta}, \boldsymbol{\lambda} \right) \right |_{\theta = \hat{\theta}(\boldsymbol{\lambda} | T )} \succeq m(T) \boldsymbol{I}
	\quad \forall \boldsymbol{\lambda} \in \Lambda,
	\label{eq:smooth_pos_def}
	\end{equation}
	where $I$ is a $p \times p$ identity matrix.
	Then for any $\boldsymbol{\lambda}^{(1)}, \boldsymbol{\lambda}^{(2)} \in \Lambda$,
	Assumption~\ref{assump:lipschitz} is satisfied over the set $\mathcal{X}^{(1)} \times ... \times \mathcal{X}^{(J)}$ with function
	\begin{equation}
	\label{eq:param_add_lipschitz}
	C_\Lambda(\boldsymbol{x} | T) =
	\frac{1}{m(T) \lambda_{min}}
	\sqrt{
		\left(
		\left\Vert \epsilon \right \Vert_T^2 + 2 C^*_{\Lambda}
		\right)
		\left(
		\sum_{j=1}^J  \|\ell_j\|_T^2 \ell_j^2(\boldsymbol{x}^{(j)})
		\right)
	}
	\end{equation}
	where $C^*_{\Lambda} = \lambda_{max}\sum_{j=1}^{J} P_{j}(\boldsymbol{\theta}^{(j),*})$.
\end{lemma}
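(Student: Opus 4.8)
The plan is to control the minimizer $\hat{\boldsymbol\theta}(\boldsymbol\lambda\,|\,T)$ as a function of $\boldsymbol\lambda$ and then push the bound forward to the prediction $\hat g(\boldsymbol\lambda\,|\,T)(\boldsymbol x)=\sum_{j}g_j(\hat{\boldsymbol\theta}^{(j)})(\boldsymbol x^{(j)})$ using the Lipschitz hypothesis \eqref{eq:lipschitz_g}. Since $L_T(\cdot,\boldsymbol\lambda)$ is twice differentiable with a unique minimizer at which the Hessian satisfies \eqref{eq:smooth_pos_def} (hence is invertible) for every $\boldsymbol\lambda\in\Lambda$, the implicit function theorem applied to the stationarity equation $\nabla_{\boldsymbol\theta}L_T(\hat{\boldsymbol\theta}(\boldsymbol\lambda\,|\,T),\boldsymbol\lambda)=\boldsymbol 0$ makes $\boldsymbol\lambda\mapsto\hat{\boldsymbol\theta}(\boldsymbol\lambda\,|\,T)$ continuously differentiable near $\Lambda$, with
\[
\nabla_{\boldsymbol\lambda}\hat{\boldsymbol\theta}(\boldsymbol\lambda\,|\,T)=-\bigl[\nabla_{\boldsymbol\theta}^2 L_T\bigr]^{-1}\,\nabla_{\boldsymbol\lambda}\nabla_{\boldsymbol\theta}L_T .
\]
The mixed partial has a simple block form: $\partial_{\lambda_k}\nabla_{\boldsymbol\theta^{(j)}}L_T$ equals $\nabla_{\boldsymbol\theta^{(j)}}P_j(\hat{\boldsymbol\theta}^{(j)})$ when $j=k$ and vanishes otherwise, so its $k$-th column is the embedding of $\nabla_{\boldsymbol\theta^{(k)}}P_k(\hat{\boldsymbol\theta}^{(k)})$ into the $k$-th parameter block. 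Because $\Lambda$ is convex, it suffices to fix $\boldsymbol\lambda^{(1)},\boldsymbol\lambda^{(2)}\in\Lambda$, take the segment $\boldsymbol\lambda(t)=(1-t)\boldsymbol\lambda^{(2)}+t\boldsymbol\lambda^{(1)}$, bound $\|\nabla_{\boldsymbol\lambda}\hat g(\boldsymbol\lambda(t)\,|\,T)(\boldsymbol x)\|_2$ uniformly in $t$, and integrate; this route (rather than comparing the two minimizers directly in $\boldsymbol\theta$-space) is what lets us use \eqref{eq:smooth_pos_def}, which only controls the Hessian along the solution path.

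Two estimates drive the bound. First, \eqref{eq:smooth_pos_def} gives $\|[\nabla_{\boldsymbol\theta}^2 L_T]^{-1}\|_{\mathrm{op}}\le 1/m(T)$. Second, I bound each penalty gradient at the solution: the block-$j$ stationarity equation
\[
\lambda_j\,\nabla_{\boldsymbol\theta^{(j)}}P_j(\hat{\boldsymbol\theta}^{(j)})=\frac{1}{n_T}\sum_{(\boldsymbol x_i,y_i)\in T}\bigl(y_i-g(\hat{\boldsymbol\theta})(\boldsymbol x_i)\bigr)\,\nabla_{\boldsymbol\theta^{(j)}}g_j(\hat{\boldsymbol\theta}^{(j)})(\boldsymbol x_i^{(j)})
\]
together with $\|\nabla_{\boldsymbol\theta^{(j)}}g_j(\boldsymbol\theta^{(j)})(\boldsymbol x^{(j)})\|_2\le\ell_j(\boldsymbol x^{(j)})$ (a consequence of \eqref{eq:lipschitz_g}) and Cauchy--Schwarz over $T$ yields $\|\nabla_{\boldsymbol\theta^{(j)}}P_j(\hat{\boldsymbol\theta}^{(j)})\|_2\le\lambda_j^{-1}\,\|y-g(\hat{\boldsymbol\theta})\|_T\,\|\ell_j\|_T$. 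A uniform residual bound comes from optimality of $\hat{\boldsymbol\theta}(\boldsymbol\lambda\,|\,T)$ together with nonnegativity of the penalties: $\tfrac12\|y-g(\hat{\boldsymbol\theta})\|_T^2\le L_T(\hat{\boldsymbol\theta},\boldsymbol\lambda)\le L_T(\boldsymbol\theta^*,\boldsymbol\lambda)\le\tfrac12\|\epsilon\|_T^2+C^*_\Lambda$, where the last step uses $\lambda_j\le\lambda_{\max}$ and identifies $y-g(\boldsymbol\theta^*)$ with $\epsilon$ on $T$; hence $\|\nabla_{\boldsymbol\theta^{(j)}}P_j(\hat{\boldsymbol\theta}^{(j)})\|_2\le\lambda_{\min}^{-1}\sqrt{\|\epsilon\|_T^2+2C^*_\Lambda}\,\|\ell_j\|_T$.

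To assemble, let $\boldsymbol u$ be the vector with blocks $\nabla_{\boldsymbol\theta^{(j)}}g_j(\hat{\boldsymbol\theta}^{(j)})(\boldsymbol x^{(j)})$, so $\|\boldsymbol u\|_2^2\le\sum_j\ell_j^2(\boldsymbol x^{(j)})$. The chain rule and the Jacobian formula give $\nabla_{\boldsymbol\lambda}\hat g(\boldsymbol\lambda\,|\,T)(\boldsymbol x)=-\bigl(\nabla_{\boldsymbol\lambda}\nabla_{\boldsymbol\theta}L_T\bigr)^{\top}\bigl[\nabla_{\boldsymbol\theta}^2 L_T\bigr]^{-1}\boldsymbol u$, whose $k$-th coordinate is $-\langle\nabla_{\boldsymbol\theta^{(k)}}P_k(\hat{\boldsymbol\theta}^{(k)}),\,([\nabla_{\boldsymbol\theta}^2 L_T]^{-1}\boldsymbol u)^{(k)}\rangle$. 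Bounding each coordinate via the two estimates above and applying Cauchy--Schwarz across the blocks so that the training-sample norm $\|\ell_j\|_T$ of each additive component is matched to its pointwise Lipschitz factor $\ell_j(\boldsymbol x^{(j)})$ yields $\|\nabla_{\boldsymbol\lambda}\hat g(\boldsymbol\lambda\,|\,T)(\boldsymbol x)\|_2\le C_\Lambda(\boldsymbol x\,|\,T)$ with $C_\Lambda$ exactly as in \eqref{eq:param_add_lipschitz}; integrating over the segment gives $|\hat g(\boldsymbol\lambda^{(1)}\,|\,T)(\boldsymbol x)-\hat g(\boldsymbol\lambda^{(2)}\,|\,T)(\boldsymbol x)|\le C_\Lambda(\boldsymbol x\,|\,T)\|\boldsymbol\lambda^{(1)}-\boldsymbol\lambda^{(2)}\|_2$, i.e.\ Assumption~\ref{assump:lipschitz} over $\mathcal X^{(1)}\times\cdots\times\mathcal X^{(J)}$.

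I expect the main obstacle to be twofold. The implicit function theorem only produces a local continuously differentiable branch of solutions; promoting this to differentiability of $\hat{\boldsymbol\theta}(\cdot\,|\,T)$ on the whole of $\Lambda$ uses that the Hessian is positive definite at every solution-path point, that the minimizer is unique for each $\boldsymbol\lambda$ so the local branches patch together, and that the $\boldsymbol\theta$-minimization is unconstrained so stationarity is available throughout (with the boundary of $\Lambda$ handled by continuity or a slight enlargement of the box). The more delicate point is the combinatorics of the final Cauchy--Schwarz step: a naive operator-norm estimate produces a looser constant (with a $\max_k\|\ell_k\|_T$ factor or a product of two separate sums in place of $\sum_j\|\ell_j\|_T^2\ell_j^2(\boldsymbol x^{(j)})$), so landing precisely on \eqref{eq:param_add_lipschitz} requires carefully tracking how the block structure of $\nabla_{\boldsymbol\lambda}\nabla_{\boldsymbol\theta}L_T$ interacts with $\bigl[\nabla_{\boldsymbol\theta}^2 L_T\bigr]^{-1}$ and $\boldsymbol u$.
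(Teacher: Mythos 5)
Your proposal follows essentially the same route as the paper's proof: implicit differentiation of the stationarity condition $\nabla_\theta L_T=0$ to get $\nabla_\lambda\hat{\boldsymbol\theta}=-[\nabla_\theta^2L_T]^{-1}\nabla_\lambda\nabla_\theta L_T$, the bound $\|\nabla_{\theta^{(j)}}P_j(\hat{\boldsymbol\theta}^{(j)})\|_2\le\lambda_{\min}^{-1}\|y-g(\hat{\boldsymbol\theta})\|_T\|\ell_j\|_T$ from the block stationarity equation plus Cauchy--Schwarz, the residual bound $\tfrac12\|y-g(\hat{\boldsymbol\theta})\|_T^2\le\tfrac12\|\epsilon\|_T^2+C_\Lambda^*$ from optimality, the operator-norm bound $1/m(T)$ on the inverse Hessian, and integration along the segment. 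All of these steps match the paper and are sound.

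The one point of divergence is exactly the one you flag at the end, and your worry is well founded. The paper closes the argument by asserting that $\nabla_{\lambda_k}\hat{\boldsymbol\theta}^{(j)}(\boldsymbol\lambda)=\boldsymbol 0$ for $j\ne k$, i.e.\ that $[\nabla_\theta^2L_T]^{-1}$ applied to the block-diagonal mixed-partial matrix stays block diagonal; this is what lets it match each $\|\ell_j\|_T$ to its own $\ell_j(\boldsymbol x^{(j)})$ and land on the precise form $\sqrt{\sum_j\|\ell_j\|_T^2\,\ell_j^2(\boldsymbol x^{(j)})}$ in \eqref{eq:param_add_lipschitz}. That decoupling only holds when the Hessian itself is block diagonal across parameter groups (e.g.\ orthogonal design blocks), which the hypotheses do not guarantee: the least-squares term contributes cross-block entries $\tfrac1{n_T}\sum_i\nabla_{\theta^{(j)}}g_j\,\nabla_{\theta^{(k)}}g_k^{\top}$. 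Your coordinate-wise route, which keeps the full coupling, gives $\sum_k\|\nabla P_k\|_2^2\,\|v^{(k)}\|_2^2$ with $v=[\nabla_\theta^2L_T]^{-1}\boldsymbol u$, and since only $\sum_k\|v^{(k)}\|_2^2\le m(T)^{-2}\sum_j\ell_j^2(\boldsymbol x^{(j)})$ is controlled, the best you can extract is a Lipschitz function with $\max_k\|\ell_k\|_T$ in place of the matched weights --- a valid (indeed, in general the honestly provable) constant, just not literally \eqref{eq:param_add_lipschitz}. Since the downstream theorems only use $C_\Lambda$ through a logarithm, this discrepancy is immaterial to the paper's conclusions, but you should state the weaker constant rather than claim the matched form, because the ``careful tracking'' you defer to cannot in general recover it.
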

\noindent Notice that Lemma~\ref{lemma:param_add} requires the training criterion to be strongly convex at its minimizer.
This is satisfied in the following example involving multiple ridge penalties.
If \eqref{eq:smooth_pos_def} is not satisfied by a penalized regression problem, one can consider a variant of the problem where the penalty functions $P_j(\boldsymbol{\theta}^{(j)})$ are replaced with penalty functions
$P_j(\boldsymbol{\theta}^{(j)}) + \frac{w}{2}\| \boldsymbol{\theta}^{(j)} \|_2^2$ for a fixed $w > 0$.
\begin{example}[Multiple ridge penalties]
	\label{ex:ridge}
	Let us consider fitting a linear model via ridge regression.
	If we can group covariates based on the similarity of their effects on the response, e.g. $\boldsymbol{x} = (\boldsymbol{x}^{(1)}, ... , \boldsymbol{x}^{(J)})$ where $\boldsymbol{x}^{(j)}$ is a vector of length $p_j$, we can incorporate this prior information by penalizing each group of covariates differently:
	\begin{align}
	L_T \left (\boldsymbol{\theta}, \boldsymbol{\lambda} \right) 
	\coloneqq 
	\frac{1}{2}
	\left\|
	y -  \sum_{j=1}^J \boldsymbol{x}^{(j)} \boldsymbol{\theta}^{(j)}
	\right \|^2_T
	+ \sum_{j=1}^J \frac{\lambda_j}{2} \|\boldsymbol{\theta}^{(j)}\|_2^2.
	\end{align}
	We tune the penalty parameters $\boldsymbol{\lambda}$ over the set $\Lambda$ via a training/validation split with training and validation sets $T$ and $V$, respectively.
	For all the examples in this manuscript, let $\Lambda = \left [n^{- t_{\min}}, 1 \right ]^J$.

	Via some algebra, we can derive \eqref{eq:param_add_lipschitz} in Lemma~\ref{lemma:param_add}; the details are deferred to the Supplementary Materials.
	Plugging this result into Corollary~\ref{corr:train_val}, we find that the parametric term \eqref{eq:asym_train_val_theorem1} in the remainder is on the order of
	\begin{align}
	\frac{J t_{\min}}{n_{V}}
	\log \left (
	C^*_T n
	\sum_{j = 1}^J
	\left(\frac{1}{n_T} \sum_{(x_i, y_i) \in T} \|\boldsymbol{x}_i^{(j)}\|^2_2\right)
	\left(\frac{1}{n_V} \sum_{(x_i, y_i) \in V} \|\boldsymbol{x}_i^{(j)}\|^2_2\right)
	\right )
	\end{align}
	where $
	C^*_{T} =
	\|\epsilon\|_{T}^{2}
	+ \sum_{j=1}^J \|\boldsymbol{\theta}^{*,(j)}\|_2^2
	.$
	So we have shown in this example that if the lower bound of $\Lambda$ shrinks at the polynomial rate $n^{-t_{\min}}$, the near-parametric term in the remainder of the oracle inequality grows only linearly in its power $t_{\min}$.
\end{example}

In the next example, we consider generalized additive models (GAMs) \citep{hastie1990generalized}.
Though GAMs are nonparametric models, it is well-known that they are equivalent to solving a finite-dimensional problem \citep{green1993nonparametric, o1986automatic, buja1989linear}.
By reformulating GAMs as parametric models instead, we can establish oracle inequalities for tuning the penalty parameters via training/validation split.
Here we present an outline of the procedure; the details can be found in the Supplementary Materials.
\begin{example}[Multiple sobolev penalties]
	\label{example:sobolev}
	To fit a generalized additive model over the domain $\mathcal{X}^J$ where $\mathcal{X} \subseteq \mathbb{R}$, a typical setup is to solve
	\begin{align}
	\argmin_{\alpha_0 \in \mathbb{R}, g_j}
	\frac{1}{2} \sum_{i\in D^{(n_T)}}
	\left(
	y_i - \alpha_0 - \sum_{j=1}^J g_j(x_{ij})
	\right)^2
	+ \sum_{j=1}^{J} \lambda_j \int_{\mathcal{X}} \left(g_j^{''}(x_j)\right)^{2} dx_j
	\label{eq:smoothing_spline}
	\end{align}
	where the penalty function is the 2nd-order Sobolev norm.
	Let $\mathcal{X} = [0,1]$ for this example.
	Using properties of the Sobolev penalty, \eqref{eq:smoothing_spline} can be re-expressed as a finite-dimensional problem with matrices $K_j$
	\begin{align}
	\argmin_{\alpha_0, \boldsymbol{\alpha}_1, \boldsymbol{\theta}}
	\frac{1}{2}
	\left \|
	y -
	\alpha_0 \boldsymbol{1}
	- \boldsymbol{x} \boldsymbol{\alpha}_1
	- \sum_{j=1}^j K_j \boldsymbol{\theta}^{(j)}
	\right \|^2_T
	+
	\frac{1}{2}
	\sum_{j = 1}^J
	\lambda_j \boldsymbol{\theta}^{(j)\top} K_j \boldsymbol{\theta}^{(j)}.
	\label{eq:matrix_sobolev}
	\end{align}
	Let $X_T \in \mathbb{R}^{n_T\times J}$ be the covariates $\boldsymbol{x}$ in the training data stacked together.
	If $X_T^\top X_T$ is invertible, we can derive the closed-form solution for \eqref{eq:matrix_sobolev}.
	From there, we can directly calculate \eqref{eq:param_add_lipschitz} in Lemma~\ref{lemma:param_add}.
	Plugging this result into Corollary~\ref{corr:train_val}, we find that the parametric term in the remainder is on the order of
	\begin{align}
	\frac{J t_{\min}}{n_{V}} \log \left (
	n
	J
	\|y\|_T
	\left(
	J
	\left \|
	\left(
	X_T^\top X_T
	\right)^{-1}
	X_T^\top
	\right \|_2
	+
	\sum_{j=1}^J h_j^{-2}(T)
	\right)
	\right )
	\label{eq:sobolev_param}
	\end{align}
	where $\|\cdot\|_2$ is the spectral norm and $h_j(T)$ is the smallest distance between observations of the $j$th covariates in the training data $T$.

	In particular, for $J = o(n^{1/2})$, the smoothing spline estimate~\eqref{eq:smoothing_spline} is shown to attain the minimax optimal rate of $O_p(Jn^{-4/5})$ if the penalty parameters shrink at the rate of $\sim n^{-4/5}$ \citep{sadhanala2017additive, horowitz2006optimal}.
	From Corollary~\ref{corr:train_val}, we see that the oracle error \eqref{eq:asym_train_val_oracle_risk} asymptotically dominates the additional error terms incurred from tuning the penalty parameters.
	Moreover, as long as we choose $\lambda_{\min} \sim n^{-\alpha}$ for any $\alpha > 4/5$, the model selected via training/validation split will also attain the minimax rate.
\end{example}

\subsubsection{Parametric regression with non-smooth penalties}
\label{sec:param_nonsmooth}

If the penalty functions are non-smooth, similar results do not necessarily hold. Nonetheless we find that for many popular non-smooth penalty functions, such as the lasso \citep{tibshirani1996regression} and group lasso \citep{yuan2006model}, the fitted functions are still smoothly parameterized by $\boldsymbol \lambda$ almost everywhere.
To characterize such problems, we begin with the following definitions from \citet{feng2017gradient}:

\begin{definition}
	The differentiable space of function $f:\mathbb{R}^p \mapsto \mathbb{R}$ at $\boldsymbol{\theta}$ is
	\begin{equation}
	\Omega^{f}(\boldsymbol{\theta}) = \left \{ \boldsymbol{\beta} \middle | \lim_{\epsilon \rightarrow 0} \frac{f(\boldsymbol{\theta} + \epsilon \boldsymbol{\beta}) - f(\boldsymbol{\theta})}{\epsilon} \text{ exists } \right \}.
	\end{equation}
\end{definition}

\begin{definition}
	Let $f(\cdot, \cdot): \mathbb{R}^p \times \mathbb{R}^J \mapsto \mathbb{R}$ be a function with a unique minimizer.
	$S \subseteq \mathbb{R}^p$ is a local optimality space of $f$ over $W \subseteq \mathbb{R}^J$ if
	\begin{equation}
	\argmin_{\boldsymbol{\theta} \in \mathbb{R}^p} f(\boldsymbol{\theta}, \boldsymbol \lambda) =
	\argmin_{\boldsymbol{\theta} \in S} f(\boldsymbol{\theta}, \boldsymbol \lambda) \quad \forall \boldsymbol \lambda \in W.
	\end{equation}
\end{definition}
\noindent Using the definitions above, we can characterize the penalty parameters $\Lambda_{smooth} \subseteq \Lambda$ where the fitted functions are well-behaved.
\begin{condition}
	\label{condn:nonsmooth1}
	For every $\boldsymbol{\lambda} \in \Lambda_{smooth}$, there exists a ball $B(\boldsymbol{\lambda})$ with nonzero radius centered at $\boldsymbol{\lambda}$ such that
	\begin{itemize}
		\item For all $\boldsymbol{\lambda}'\in B(\boldsymbol{\lambda})$, the training criterion $L_{T}(\cdot, \boldsymbol{\lambda}')$ is twice differentiable with respect to $\boldsymbol{\theta}$ at $\hat{\boldsymbol{\theta}}(\boldsymbol{\lambda}'|T)$
		along directions in the product space
		\begin{align}
		\Omega^{L_T(\cdot, \boldsymbol{\lambda})} \left (\hat{\boldsymbol \theta}\left(\boldsymbol{\lambda}|T \right) \right) =
		& \Omega^{P_1(\cdot)}
			\left(\hat{\boldsymbol{\theta}}^{(1)}(\boldsymbol{\lambda} | T)\right)
		\times
		...
		\times
		\Omega^{P_J(\cdot)}
		\left(\hat{\boldsymbol{\theta}}^{(J)}(\boldsymbol{\lambda} | T)\right)
		.
		\end{align}
		\item $\Omega^{L_T(\cdot, \boldsymbol{\lambda})} \left (\hat{\boldsymbol \theta}\left(\boldsymbol{\lambda}|T \right) \right)$ is a local optimality space for $L_T\left(\cdot,\boldsymbol{\lambda}\right)$ over $B(\boldsymbol{\lambda})$.
	\end{itemize}
\end{condition}
\noindent In addition, we need nearly all penalty parameters to be in $\Lambda_{smooth}$.
\begin{condition}
	\label{condn:nonsmooth2}
	$\Lambda \setminus \Lambda_{smooth}$ has Lebesgue measure zero, e.g. $\mu(\Lambda_{smooth}^c) = 0$.
\end{condition}
\noindent For instance, in the lasso, $\Lambda_{smooth}$ is the sections of the lasso-path in between the knots.
As the knots in the lasso-path are countable, the set outside $\Lambda_{smooth}$ has measure zero.

Assuming the above conditions hold, the fitted models for non-smooth penalty functions satisfy the same Lipschitz relation as that in Lemma \ref{lemma:param_add}.

\begin{lemma}
	\label{lemma:nonsmooth}
	Let $\Lambda \coloneqq \left [ \lambda_{\min}, \lambda_{\max} \right ]^J$ where $\lambda_{\max} \ge \lambda_{\min} > 0$.
	Suppose that for all $j = 1,...,J$, $g_j$ satisfies \eqref{eq:lipschitz_g} over $\mathcal{X}^{(j)}$.
	Suppose for training data $T\equiv D^{(n_{T})}$, the penalized loss function $L_{T}\left(\boldsymbol{\theta},\boldsymbol{\lambda}\right)$
	has a unique minimizer $\hat{\boldsymbol{\theta}}(\boldsymbol{\lambda}|T)$
	for every $\boldsymbol{\lambda}\in\Lambda$.
	Let $\boldsymbol{U}_{\lambda}$
	be an orthonormal matrix with columns forming a basis for the differentiable
	space of $L_{T}(\cdot,\boldsymbol{\lambda})$ at $\hat{\boldsymbol{\theta}}(\boldsymbol{\lambda}|T)$.
	Suppose there exists a constant $m(T)>0$ such that the Hessian of
	the penalized training criterion at the minimizer taken with respect
	to the directions in $\boldsymbol{U}_{\lambda}$ satisfies 
	\begin{equation}
	\left._{U_{\lambda}}\nabla_{\theta}^{2}L_{T}(\boldsymbol{\theta},\boldsymbol{\lambda})\right|_{\theta=\hat{\theta}(\boldsymbol{\lambda})}\succeq m(T)\boldsymbol{I}\quad\forall\boldsymbol{\lambda}\in\Lambda
	\end{equation}
	where \textup{$\boldsymbol{I}$ is the identity matrix.}
	Suppose Conditions~\ref{condn:nonsmooth1} and \ref{condn:nonsmooth2} are satisfied.
	Then any $\boldsymbol{\lambda}^{(1)}, \boldsymbol{\lambda}^{(2)} \in \Lambda$ satisfies Assumption~\ref{assump:lipschitz} over $\mathcal{X}^{(1)} \times ... \times \mathcal{X}^{(J)}$ with $C_\Lambda$ defined in \eqref{eq:param_add_lipschitz}.
\end{lemma}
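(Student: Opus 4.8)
The plan is to reduce, locally over $\Lambda_{smooth}$, to essentially the computation behind Lemma~\ref{lemma:param_add}, and then to upgrade the resulting pointwise bound on $\nabla_{\boldsymbol{\lambda}}\hat g^{(n_T)}(\boldsymbol{\lambda}|T)(\boldsymbol{x})$ to a genuine Lipschitz bound on all of $\Lambda$ by combining continuity of the fitted values in $\boldsymbol{\lambda}$ with Condition~\ref{condn:nonsmooth2}. Throughout, fix the training set $T$ and a point $\boldsymbol{x}=(\boldsymbol{x}^{(1)},\dots,\boldsymbol{x}^{(J)})$, and abbreviate $f(\boldsymbol{\lambda})=\hat g^{(n_T)}(\boldsymbol{\lambda}|T)(\boldsymbol{x})$.

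\textbf{Local analysis on $\Lambda_{smooth}$.} Fix $\boldsymbol{\lambda}\in\Lambda_{smooth}$ and let $B(\boldsymbol{\lambda})$ be the ball supplied by Condition~\ref{condn:nonsmooth1}. Because $\Omega^{L_T(\cdot,\boldsymbol{\lambda})}(\hat{\boldsymbol{\theta}}(\boldsymbol{\lambda}|T))=\prod_{j}\Omega^{P_j}(\hat{\boldsymbol{\theta}}^{(j)}(\boldsymbol{\lambda}|T))$ is a local optimality space for $L_T(\cdot,\boldsymbol{\lambda}')$ over $B(\boldsymbol{\lambda})$, the unique minimizer $\hat{\boldsymbol{\theta}}(\boldsymbol{\lambda}'|T)$ stays in the fixed subspace $\spann(\boldsymbol{U}_{\lambda})$ for every $\boldsymbol{\lambda}'\in B(\boldsymbol{\lambda})$; writing $\boldsymbol{\theta}=\boldsymbol{U}_{\lambda}\boldsymbol{\phi}$ turns the problem on $B(\boldsymbol{\lambda})$ into an unconstrained minimization over $\boldsymbol{\phi}$ whose objective is twice differentiable at its minimizer (Condition~\ref{condn:nonsmooth1}) with Hessian $\boldsymbol{U}_{\lambda}^{\top}\nabla_{\theta}^{2}L_T\,\boldsymbol{U}_{\lambda}\succeq m(T)\boldsymbol{I}$, hence invertible. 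Applying the implicit function theorem to the stationarity condition $\boldsymbol{U}_{\lambda}^{\top}\nabla_{\theta}L_T(\hat{\boldsymbol{\theta}}(\boldsymbol{\lambda}'|T),\boldsymbol{\lambda}')=\boldsymbol{0}$ shows $\hat{\boldsymbol{\theta}}(\cdot|T)$ is differentiable on $B(\boldsymbol{\lambda})$ with
\[
\frac{\partial}{\partial\lambda_k}\hat{\boldsymbol{\theta}}(\boldsymbol{\lambda}'|T)=-\,\boldsymbol{U}_{\lambda}\left(\boldsymbol{U}_{\lambda}^{\top}\nabla_{\theta}^{2}L_T\,\boldsymbol{U}_{\lambda}\right)^{-1}\boldsymbol{U}_{\lambda}^{\top}\,\nabla_{\theta}P_k\!\left(\hat{\boldsymbol{\theta}}^{(k)}(\boldsymbol{\lambda}'|T)\right),
\]
the derivatives of $P_k$ taken along the differentiable directions in $\spann(\boldsymbol{U}_{\lambda})$ (where they exist), and with the Hessian evaluated at the minimizer. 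Combining with the chain rule and \eqref{eq:lipschitz_g} gives $|\partial_{\lambda_k}f(\boldsymbol{\lambda}')|\le\sum_{j}\ell_j(\boldsymbol{x}^{(j)})\,\|\partial_{\lambda_k}\hat{\boldsymbol{\theta}}^{(j)}(\boldsymbol{\lambda}'|T)\|_2$; then bounding $\|(\boldsymbol{U}_{\lambda}^{\top}\nabla_{\theta}^{2}L_T\boldsymbol{U}_{\lambda})^{-1}\|_2\le 1/m(T)$, controlling the penalty subgradients through the first-order conditions, and controlling the residual via the basic inequality $L_T(\hat{\boldsymbol{\theta}}(\boldsymbol{\lambda}'|T),\boldsymbol{\lambda}')\le L_T(\boldsymbol{\theta}^{*},\boldsymbol{\lambda}')$ --- exactly the algebra used to prove Lemma~\ref{lemma:param_add} --- yields the uniform estimate $\|\nabla_{\boldsymbol{\lambda}}f(\boldsymbol{\lambda}')\|_2\le C_\Lambda(\boldsymbol{x}|T)$ for all $\boldsymbol{\lambda}'\in B(\boldsymbol{\lambda})$, with $C_\Lambda$ as in \eqref{eq:param_add_lipschitz}. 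Since the balls $B(\boldsymbol{\lambda})$, $\boldsymbol{\lambda}\in\Lambda_{smooth}$, cover $\Lambda_{smooth}$, this shows $f$ is locally Lipschitz with constant $C_\Lambda(\boldsymbol{x}|T)$ on an open subset $\Lambda_{reg}$ of $\Lambda$ containing $\Lambda_{smooth}$, and $\Lambda\setminus\Lambda_{reg}$ is closed with Lebesgue measure zero by Condition~\ref{condn:nonsmooth2}.

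\textbf{Globalization.} Uniqueness of the minimizer for every $\boldsymbol{\lambda}\in\Lambda$, together with continuity of $L_T$ in $(\boldsymbol{\theta},\boldsymbol{\lambda})$ and coercivity, gives by a standard $\argmin$-continuity argument that $\boldsymbol{\lambda}\mapsto\hat{\boldsymbol{\theta}}(\boldsymbol{\lambda}|T)$, hence $f$, is continuous on $\Lambda$. Now fix $\boldsymbol{\lambda}^{(1)},\boldsymbol{\lambda}^{(2)}\in\Lambda$. By Fubini applied to small perturbations of the segment joining them, Condition~\ref{condn:nonsmooth2} lets us pick, for every $\varepsilon>0$, a piecewise-linear path $\gamma:[0,1]\to\Lambda$ from $\boldsymbol{\lambda}^{(1)}$ to $\boldsymbol{\lambda}^{(2)}$ of length at most $\|\boldsymbol{\lambda}^{(1)}-\boldsymbol{\lambda}^{(2)}\|_2+\varepsilon$ whose image meets $\Lambda\setminus\Lambda_{reg}$ only for $t$ in a Lebesgue-null subset of $[0,1]$. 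On the open complement, $f\circ\gamma$ is locally Lipschitz with constant $C_\Lambda(\boldsymbol{x}|T)$ times the local speed of $\gamma$; feeding this, together with the global continuity of $f$, into the fundamental theorem of calculus along $\gamma$ gives $|f(\boldsymbol{\lambda}^{(1)})-f(\boldsymbol{\lambda}^{(2)})|\le C_\Lambda(\boldsymbol{x}|T)\,(\|\boldsymbol{\lambda}^{(1)}-\boldsymbol{\lambda}^{(2)}\|_2+\varepsilon)$, and letting $\varepsilon\downarrow0$ establishes Assumption~\ref{assump:lipschitz} with $C_\Lambda$ from \eqref{eq:param_add_lipschitz}.

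\textbf{Main obstacle.} The local step is a fairly mechanical extension of Lemma~\ref{lemma:param_add} once one identifies the correct working subspace --- the product of the differentiable spaces $\Omega^{P_j}$ --- and invokes Conditions~\ref{condn:nonsmooth1}--\ref{condn:nonsmooth2} to license the implicit function theorem; the penalty-subgradient and residual bounds are literally those of the smooth case. The delicate part is the globalization, specifically the passage from ``locally Lipschitz off a null set plus globally continuous'' to ``globally Lipschitz with the same constant'': this needs $f$ restricted to the chosen path to be absolutely continuous, which is not a purely measure-theoretic consequence of Condition~\ref{condn:nonsmooth2}. In the problems of interest (e.g. the lasso, where $\Lambda\setminus\Lambda_{smooth}$ is a countable union of lower-dimensional sheets and the solution path is itself Lipschitz because $L_T$ is globally convex) this is straightforward, but the argument must exploit either the mild geometric structure of $\Lambda\setminus\Lambda_{smooth}$ or an a priori (not necessarily sharp) local Lipschitz bound on $\hat{\boldsymbol{\theta}}(\cdot|T)$ over all of $\Lambda$; choosing the path carefully, as above, is what makes this go through.
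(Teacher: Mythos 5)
Your local analysis on $\Lambda_{smooth}$ is essentially the paper's: restrict to the product of differentiable spaces spanned by $\boldsymbol{U}_{\lambda}$, use Condition~\ref{condn:nonsmooth1} to make that subspace a local optimality space so that implicit differentiation of the reduced stationarity condition is licensed, bound the penalty gradients through the first-order conditions and the residual through the basic inequality, and recover exactly the $C_\Lambda$ of \eqref{eq:param_add_lipschitz}. Where you diverge is the globalization. The paper does not integrate a gradient bound along a perturbed path. It (i) proves the Lipschitz inequality directly for every pair of endpoints whose connecting \emph{segment} meets $\Lambda_{smooth}^{c}$ in a set of one-dimensional measure zero, by covering the segment with the neighborhoods $B(\boldsymbol{\ell}^{(i)})$ from Condition~\ref{condn:nonsmooth1}, ordering the boundary intersection points along the segment, applying the smooth-case bound on each sub-segment and telescoping the coordinate increments (Lemma~\ref{lemma:lipschitz_lambda_ext_c}); (ii) shows the exceptional pairs $\Lambda_{ext}$ form a Lebesgue-null subset of $\Lambda\times\Lambda$ (Lemma~\ref{lemma:ext_measure_zero}); and (iii) extends to all pairs by approximating an exceptional pair by a sequence of non-exceptional pairs and passing to the limit in the inequality, using continuity of $\boldsymbol{\lambda}\mapsto\hat{\boldsymbol{\theta}}(\boldsymbol{\lambda}|T)$, which follows from uniqueness of the minimizer.

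The gap is the one you flag yourself, and it is genuine: ``$f\circ\gamma$ is continuous on $[0,1]$ and locally Lipschitz with constant $L$ on an open subset of full measure'' does \emph{not} imply ``$f\circ\gamma$ is $L$-Lipschitz on $[0,1]$'' --- the Cantor function is continuous and locally constant off a null set yet non-constant, so your appeal to the fundamental theorem of calculus along $\gamma$ requires absolute continuity of $f\circ\gamma$, which Condition~\ref{condn:nonsmooth2} alone does not deliver. The paper's organization sidesteps this: once the inequality is known for a full-measure (hence dense) set of endpoint pairs by the telescoping argument, extending it to the remaining pairs is a plain limit of inequalities under continuity of the fitted values; no integration across the exceptional set is ever performed. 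If you wish to keep the path-based route, the repair for $J\ge 2$ is to perturb the piecewise-linear path so that it avoids the null set $\Lambda\setminus\Lambda_{reg}$ entirely except possibly at its two endpoints (a generic perturbation does this by Fubini), after which $f\circ\gamma$ is genuinely locally Lipschitz on the open parameter interval and the endpoint values are recovered by continuity; as written, however, the final step does not close, and closing it is precisely what the paper's endpoint-approximation (rather than path-approximation) argument is designed to do.
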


\noindent As an example, we consider multiple elastic net penalties where the penalty parameters are tuned by training/validation split and cross-validation.

\begin{example}[Multiple elastic nets, training/validation split]
	\label{ex:elastic_net_tv}
	Suppose we would like to fit a linear model via the elastic net.
	If the covariates are grouped a priori, we can penalize each group differently using the following objective
	\begin{align}
	\hat{\boldsymbol{\theta}}(\boldsymbol{\lambda})
	=\argmin_{\theta^{(j)} \in \mathbb{R}^{p_j}, j = 1,...,J}
	\frac{1}{2} \left \| y - \sum_{j=1}^J \boldsymbol{X}^{(j)} \boldsymbol{\theta}^{(j)} \right \|_T^2
	+ \sum_{j=1}^J \lambda_j \left(
	\| \boldsymbol{\theta}^{(j)}\|_1
	+ \frac{w}{2} \| \boldsymbol{\theta}^{(j)}\|_2^2
	\right)
	\label{eq:elastic_net_ex}
	\end{align}
	where $w > 0$ is a fixed constant.
	Here we briefly sketch the process for deriving the oracle inequality when the penalty parameters via training/validation split over $\Lambda = [n^{-t_{\min}}, 1]^J$.
	Details are given in Supplementary Materials.

	First we check that all the conditions are satisfied.
	For this problem, the differentiable space is the subspace spanned by the non-zero elements in $\hat{\boldsymbol{\theta}}(\boldsymbol{\lambda})$.
	Since the elastic net solution paths are piecewise linear \citep{zou2003regression}, the differentiable space is also a local optimality space.
	Then using a similar procedure as in Example~\ref{ex:ridge}, we find that the parametric term in the remainder of Corollary~\ref{corr:train_val} is on the order of
	\begin{align}
	\frac{J t_{\min}}{n_{V}}
	\log \left (
	\frac{C^*_T n}{w}
	\sum_{j=1}^J
	\left(\frac{1}{n_T} \sum_{(x_i, y_i) \in T} \|\boldsymbol{x}_i^{(j)}\|^2_2\right)
	\left(\frac{1}{n_V} \sum_{(x_i, y_i) \in V} \|\boldsymbol{x}_i^{(j)}\|^2_2\right)
	\right )
	\label{eq:elastic_net_tv_param_error}
	\end{align}
	where
	$
	C^*_T =
	\|\epsilon\|_{T}^{2}
	+\sum_{j=1}^J
	2 \|\boldsymbol{\theta}^{*,(j)}\|_1
	+ w\|\boldsymbol{\theta}^{*,(j)}\|_2^2
	$.

	We can compare this additional error term to the risk of using an oracle penalty parameter.
	For the case of a single penalty parameter ($J = 1$), the convergence rate of using an oracle penalty parameter for the elastic net is on the order of $O_p(\log(p)/n)$ \citep{bunea2008honest, hebiri2011smooth}.
	If we split the covariates into groups and tune the penalty parameters via training/validation split, the incurred error \eqref{eq:elastic_net_tv_param_error} is on a similar order.
\end{example}
\begin{example}[Multiple elastic nets, cross-validation]
	\label{eq:elastic_net_cv}
	Now we establish an oracle inequality for the averaged version of $K$-fold cross-validation using a similar setup as \citet{lecue2012oracle}.
	Suppose the noise $\epsilon$ is sub-gaussian and for simplicity, suppose $X$ is drawn uniformly from $[-1, 1]^p$.
	In order to satisfy the assumptions in Theorem~\ref{thrm:kfold}, our fitting procedure for $\hat{\boldsymbol{\theta}}(\boldsymbol{\lambda})$ entails a thresholding operation similar to that in \citet{lecue2012oracle}.
	In particular, we fit parameters $\hat{\boldsymbol{\theta}}_{thres}(\boldsymbol{\lambda})$ where the $i$-th element is
	\begin{align}
	\hat{{\theta}}_{thres, i}(\boldsymbol{\lambda})
	= \text{sign}(\hat{{\theta}}_{i}(\boldsymbol{\lambda}))
	(|\hat{{\theta}}_{i}(\boldsymbol{\lambda})| \wedge K_0')
	\quad i = 1,...,p
	\label{eq:threshold_elastic_net}
	\end{align}
	where $\hat{\boldsymbol{\theta}}(\boldsymbol{\lambda})$ is the solution to \eqref{eq:elastic_net_ex} and $K_0' > 0$ is some fixed constant.
	We then find the Lipschitz factor in Lemma~\ref{lemma:nonparam_smooth} and bound its Orlicz norm via exponential concentration inequalities.
	Let $\bar{\boldsymbol{\theta}}(D^{(n)})$ be the fitted parameters using the averaged version of $K$-fold cross-validation.
	By Theorem~\ref{thrm:kfold}, there is some constant $\tilde{c} > 0$, such that for any $a > 0$
	\begin{align}
	\begin{split}
	\mathbb{P}_{D^{(n)}}
	\left \|
	X \left(
	\bar{\boldsymbol{\theta}}(D^{(n)})
	- \boldsymbol{\theta}^*
	\right)
	\right \|_{L_{2}}^{2}
	& \le	(1+a)
	\inf_{\lambda\in\Lambda}
	\left[
	\mathbb{P}_{D^{(n_{T})}}
	\left \|
	X \left(
	\bar{\boldsymbol{\theta}}(D^{(n_T)})
	- \boldsymbol{\theta}^*
	\right)
	\right \|_{L_{2}}^{2}
	\right] \\
	& \quad +
	\tilde{c}
	\left (\frac{1+a}{a} \right )^2
	\frac{J \log n_{V}}{n_{V}}
	t_{\min}
	\log\left(
	\frac{1+a}{aw} Jpn
	\right).
	\end{split}
	\end{align}

\noindent The above example is similar to the lasso example in \citet{lecue2012oracle}; the major difference is that we consider the case where the penalty parameters are tuned over a continuous range.
We are able to do this since Lemma~\ref{lemma:nonsmooth} specifies a Lipschitz relation between the fitted functions and the penalty parameters.
This result is relevant when $J$ is large and $\boldsymbol{\lambda}$ must be tuned via a continuous optimization procedure.
\end{example}

\subsection{Nonparametric additive models}
\label{sec:nonparam_smooth}

We now consider nonparametric additive models of the form
\begin{align}
\label{eq:train_crit_nonparam}
\left\{ \hat{g}_j( \boldsymbol \lambda) \right \}_{j=1}^J
=
\argmin_{g_j\in \mathcal{G}_j: j=1,...,J}  L_T\left (\left \{ g_j \right \}_{j=1}^J, \boldsymbol{\lambda} \right )
\coloneqq
\frac{1}{2} \left \| y -  \sum_{j=1}^J g_j(x_j) \right \|^2_T 
+ \sum_{j=1}^J \lambda_j P_j(g_j)
\end{align}
where $\{P_j\}$ are penalty functionals and $\{\mathcal{G}_j\}$ are linear spaces of univariate functions.
Let $\left\{ g_j^* \right \}_{j=1}^J$ be the minimizer of the generalization error
\begin{equation}
\left\{ g_j^* \right \}_{j=1}^J = \argmin_{g_j \in \mathcal{G}_j: j=1,...,J}
E \left \| y - \sum_{j=1}^J g_j^* \right \|^2_{L_2}.
\end{equation}
We obtain a similar Lipschitz relation in the nonparametric setting to those before.
\begin{lemma}
	\label{lemma:nonparam_smooth}
	Let $\lambda_{\max} > \lambda_{\min} > 0 $ and $\Lambda \coloneqq [\lambda_{\min}, \lambda_{\max}]^J$.
	Suppose the penalty functions $P_{j}$ are twice Gateaux differentiable and convex over $\mathcal{G}_j$.
	Suppose there is a $m(T) > 0$ such that the second Gateaux derivative of the training criterion at $\{\hat{g}^{(n_T)}_j( \boldsymbol{\lambda} | T)\}$ for all $\boldsymbol{\lambda} \in \Lambda$ satisfies
	\begin{align}
	\left \langle 
	\left . D^2_{\{g_j\}} L_T \left ( \left \{ g_j \right \}_{j=1}^J, \boldsymbol{\lambda} \right ) \right |_{g_j= \hat{g}_j( \boldsymbol{\lambda} | T) }
	\circ h_j, h_j
	\right \rangle 
	\ge m(T)
	\quad \forall h_j \in \mathcal{G}_j,  \|h_j \|_{D^{(n)}} = 1
	\label{eq:gateuax}
	\end{align}
	where $D^2_{\{g_j\}}$ is the second Gateaux derivative taken in directions $\{g_j\}$.
	Let $
	C_{\Lambda}^*= \lambda_{max}\sum_{j=1}^{J} P_{j}(g^*_j).
	$
	For any $\boldsymbol{\lambda}^{(1)}, \boldsymbol{\lambda}^{(2)} \in \Lambda$, we have
	\begin{align}
	\label{eq:nonparam_lipshitz_thrm}
	\left\Vert 
	\sum_{j=1}^J \hat{g}_j\left(\boldsymbol{\lambda}^{(1)} |T \right)-\hat{g}_j\left(\boldsymbol{\lambda}^{(2)} |T \right)\right\Vert _{D^{(n)}} & \le
	\frac{m(T)}{\lambda_{min}}
	\sqrt{
		\left(
		\|\epsilon\|_T^2 + 2 C^*_\Lambda
		\right)
		\frac{n_{D}}{n_{T}}
	}
	\left \|\boldsymbol{\lambda}^{(1)}-\boldsymbol{\lambda}^{(2)} \right \|_2.
	\end{align}
\end{lemma}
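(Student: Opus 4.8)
\emph{Proof proposal.} The plan is to follow the argument of Lemma~\ref{lemma:param_add}, working directly in the linear space $\mathcal{G}_1\times\cdots\times\mathcal{G}_J$ with Gateaux derivatives in place of finite-dimensional gradients; write $\hat{G}(\boldsymbol{\lambda})\coloneqq\sum_{j=1}^{J}\hat{g}_j(\boldsymbol{\lambda}|T)$. First I would record the first-order optimality (stationarity) conditions: since $\{\hat{g}_j(\boldsymbol{\lambda}|T)\}$ minimizes the convex, Gateaux-differentiable criterion $L_T(\cdot,\boldsymbol{\lambda})$, for every $j$ and every direction $h_j\in\mathcal{G}_j$,
\begin{equation*}
\left\langle \hat{G}(\boldsymbol{\lambda}) - y,\ h_j\right\rangle_T + \lambda_j \left\langle DP_j\!\left(\hat{g}_j(\boldsymbol{\lambda}|T)\right),\ h_j\right\rangle = 0 .
\end{equation*}
Comparing the criterion value at $\{\hat{g}_j(\boldsymbol{\lambda}|T)\}$ with its value at $\{g_j^*\}$, and using $\lambda_j\in[\lambda_{\min},\lambda_{\max}]$ together with $P_j\ge 0$, gives the uniform residual bound $\|y-\hat{G}(\boldsymbol{\lambda})\|_T^2 \le \|\epsilon\|_T^2 + 2C_\Lambda^*$ for all $\boldsymbol{\lambda}\in\Lambda$; this controls the ``noise'' term that appears on the right-hand side below.

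Next I would parametrize the segment $\boldsymbol{\lambda}(t) = (1-t)\boldsymbol{\lambda}^{(1)} + t\boldsymbol{\lambda}^{(2)}$, $t\in[0,1]$, set $\hat{g}_j(t)\coloneqq\hat{g}_j(\boldsymbol{\lambda}(t)|T)$, and argue that $t\mapsto\hat{g}_j(t)$ is differentiable; write $v_j(t)\coloneqq\tfrac{d}{dt}\hat{g}_j(t)\in\mathcal{G}_j$. This differentiability is an implicit-function-theorem argument in Hilbert space: the stationarity system defines $\{\hat{g}_j\}$ implicitly as a function of $\boldsymbol{\lambda}$, its linearization in $\{g_j\}$ is the second Gateaux derivative of $L_T$, and \eqref{eq:gateuax} forces that operator to be coercive, hence boundedly invertible, so the implicit map is $C^{1}$. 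Differentiating the stationarity identity in $t$, then testing the $j$-th equation against $h_j=v_j(t)$ and summing over $j$, the left-hand side is exactly the second Gateaux derivative quadratic form of $L_T$ at $\{\hat{g}_j(t)\}$ evaluated on the direction $\{v_j(t)\}$, so by \eqref{eq:gateuax} it is bounded below by $m(T)\,\|\sum_{j}v_j(t)\|_{D^{(n)}}^2$; the right-hand side equals $-\sum_j\dot{\lambda}_j\left\langle DP_j(\hat{g}_j(t)),\ v_j(t)\right\rangle$, with $\dot{\lambda}_j=\lambda_j^{(2)}-\lambda_j^{(1)}$.

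It then remains to bound this right-hand side. Using the (undifferentiated) stationarity identity to rewrite $\lambda_j DP_j(\hat{g}_j(t))$ in terms of the residual $y-\hat{G}(t)$, and then applying Cauchy--Schwarz together with the residual bound above, $\lambda_j\ge\lambda_{\min}$, and $|\dot{\lambda}_j|\le\|\boldsymbol{\lambda}^{(1)}-\boldsymbol{\lambda}^{(2)}\|_2$, one obtains that $\|\sum_{j}v_j(t)\|_{D^{(n)}}$ is, uniformly in $t$, bounded by the right-hand side of \eqref{eq:nonparam_lipshitz_thrm} (the factor $n_D/n_T$ entering when the training-data inner product is converted to the $\|\cdot\|_{D^{(n)}}$ norm in which the conclusion is stated). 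Finally, integrating in $t$ and using $\hat{G}(\boldsymbol{\lambda}^{(1)})-\hat{G}(\boldsymbol{\lambda}^{(2)})=-\int_0^1\sum_j v_j(t)\,dt$ together with the triangle inequality for $\|\cdot\|_{D^{(n)}}$ yields \eqref{eq:nonparam_lipshitz_thrm}.

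I expect the main obstacle to be the infinite-dimensional bookkeeping rather than any single inequality: verifying that the solution path $t\mapsto\hat{g}_j(t)$ is genuinely differentiable (not merely continuous) requires the implicit function theorem in $\mathcal{G}_1\times\cdots\times\mathcal{G}_J$, hence that the second Gateaux derivative of $L_T$ is boundedly invertible there --- which is exactly where \eqref{eq:gateuax} and convexity of the $P_j$ (so that $DP_j$ is monotone and $D^2P_j$ is positive semidefinite) enter --- and also that $DP_j$ and $D^2P_j$ exist as suitably bounded objects on $\mathcal{G}_j$. A secondary subtlety is that the penalty-derivative terms $DP_j(\hat{g}_j)$ are per-coordinate, so some care is needed in collecting them into a bound on $\|\sum_{j}v_j\|_{D^{(n)}}$ and in passing between the training-data norm used in the stationarity analysis and the full-data norm $\|\cdot\|_{D^{(n)}}$ appearing in the conclusion.
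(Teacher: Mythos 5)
Your overall strategy --- implicit differentiation of the stationarity conditions, rewriting $\lambda_j DP_j(\hat{g}_j)$ via the undifferentiated stationarity identity, Cauchy--Schwarz plus the residual bound $\|y-\hat{G}(\boldsymbol{\lambda})\|_T^2\le\|\epsilon\|_T^2+2C_\Lambda^*$, the coercivity \eqref{eq:gateuax} to invert the second derivative, and integration along the segment --- is exactly the paper's strategy. The genuine gap is the step you yourself flag as ``the main obstacle'': the claim that \eqref{eq:gateuax} makes the second Gateaux derivative of $L_T$ boundedly invertible on $\mathcal{G}_1\times\cdots\times\mathcal{G}_J$, so that an infinite-dimensional implicit function theorem yields a $C^1$ solution path $t\mapsto\{\hat{g}_j(t)\}$. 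It does not: $\|\cdot\|_{D^{(n)}}$ is only a seminorm on the function spaces $\mathcal{G}_j$ (any $h_j$ vanishing at the $n$ observed covariates has $\|h_j\|_{D^{(n)}}=0$), so \eqref{eq:gateuax} says nothing about directions in that infinite-dimensional kernel, and the squared-error part of $L_T$ is literally blind to them. The second derivative therefore cannot be coercive, let alone boundedly invertible, on the full product space, and differentiability (indeed well-definedness) of the full path in those directions is not available from the stated hypotheses.

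The paper's proof contains precisely the device that closes this hole, and it is the one idea missing from your proposal: rather than differentiating the infinite-dimensional path, fix the two endpoints, set $h_j=\bigl(\hat{g}_j(\boldsymbol{\lambda}^{(2)}|T)-\hat{g}_j(\boldsymbol{\lambda}^{(1)}|T)\bigr)/\|\hat{g}_j(\boldsymbol{\lambda}^{(2)}|T)-\hat{g}_j(\boldsymbol{\lambda}^{(1)}|T)\|_{D^{(n)}}$ (for the coordinates where the difference is nonzero), and study the finite-dimensional restricted problem $\hat{\boldsymbol{m}}(\boldsymbol{\lambda})=\argmin_{\boldsymbol{m}}L_T(\{\hat{g}_{1,j}+m_jh_j\},\boldsymbol{\lambda})$. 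Its minimizer path is differentiable by the ordinary finite-dimensional implicit function theorem, since the Hessian in $\boldsymbol{m}$ is exactly the quadratic form that \eqref{eq:gateuax} controls on these unit-$\|\cdot\|_{D^{(n)}}$-norm directions; its endpoint values recover the global minimizers because those lie on the restricted affine set at $\boldsymbol{\lambda}^{(1)}$ and $\boldsymbol{\lambda}^{(2)}$, so that $|\hat{m}_j(\boldsymbol{\lambda}^{(2)})-\hat{m}_j(\boldsymbol{\lambda}^{(1)})|=\|\hat{g}_j(\boldsymbol{\lambda}^{(2)}|T)-\hat{g}_j(\boldsymbol{\lambda}^{(1)}|T)\|_{D^{(n)}}$ by construction. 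Every other ingredient of your argument (the residual bound, the Cauchy--Schwarz step, $\|h_k\|_T\le\sqrt{n_D/n_T}$ for the norm conversion, the mean-value/integration step) then goes through on this $J$-dimensional problem exactly as in the paper's adaptation of Lemma~\ref{lemma:param_add}.
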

\noindent
A simple example that satisfies \eqref{eq:gateuax} is a penalized regression model where we fit values at each of the observed covariates, e.g. $\hat{\boldsymbol{\theta}} \in \mathbb{R}^n$, and penalize this fitted value by a ridge penalty.
Note that such a penalty is allowed because the response $y$ in the validation set is not used by the training procedure.

Note that since Lemma~\ref{lemma:nonparam_smooth} verifies that Assumption~\ref{assump:lipschitz} is satisfied over the observed covariates, it is suitable to be used in Theorem~\ref{thrm:train_val}.
However \eqref{eq:nonparam_lipshitz_thrm} is not a strong enough statement to be used for Theorem~\ref{thrm:kfold}.

\section{Simulations}\label{sec:simulations}

We now present a simulation study of the generalized additive model in Example \ref{example:sobolev} to understand how the performance changes as the number of penalty parameters $J$ increases.
Corollary~\ref{corr:train_val} suggests that there are two opposing forces that affect the error of the fitted model.
On one hand, \eqref{eq:asym_train_val_theorem1} is linear in $J$ so increasing $J$ can increase the error.
On the other hand, \eqref{eq:asym_train_val_oracle_risk} decreases for larger model spaces, so increasing $J$ may decrease the error.
We isolate these two behaviors via two simulation setups.

The data is generated as the sum of univariate functions
$Y = \sum_{j=1}^J g_j^*(X_j) + \sigma \epsilon$,
where $\epsilon$ are iid standard Gaussian random variables and $\sigma > 0$ is chosen such that the signal to noise ratio is two. $X$ is drawn from a uniform distribution over $\mathcal{X} = [-2, 2]^J$.
We fit models by minimizing \eqref{eq:smoothing_spline}.
To vary the number of free penalty parameters, we constrain certain $\lambda_j$ to be equal while allowing others to be completely free.
(For instance, for a single penalty parameter, we constrain $\lambda_j$ for $j=1,...,J$ to be the same value.) 
The penalty parameters are tuned using a training/validation split.

\noindent \textbf{Simulation 1}: The true function is the sum of identical sinusoids
$g_j^*(x_j) = \sin(x_j)$ for $j = 1,...,J$.
Since the univariate functions are the same, the oracle risk should be roughly constant as we increase the number of free penalty parameters.
The validation loss difference
\begin{align}
\left \| \sum_{j=1}^J \hat{g}^{(n_T)}_j(\hat{\boldsymbol{\lambda}}|T) - g^*_j \right \|_V^2 - 
\min_{\lambda \in \Lambda}
\left \| \sum_{j=1}^J \hat{g}^{(n_T)}_j(\boldsymbol{\lambda} | T) - g^*_j \right \|_V^2
\label{eq:excess_risk_sim}
\end{align}
should grow linearly in $J$ for this simulation setup.

\noindent \textbf{Simulation 2}: The true function is the sum of sinusoids with increasing frequency
$g_j^*(x_j) = \sin(x_j * 1.2^{j - 4})$for $j = 1,...,J$.
Since the Sobolev norms of $g_j^*$ increase with $j$, we expect that the penalty parameters that attain the oracle risk to be monotonically decreasing, e.g. ${\lambda}_1 > ... > {\lambda}_J$. 
As the number of penalty parameters increases, we expect the oracle risk to shrink.
If the oracle risk shrinks fast enough, performance of the selected model should improve.

For both simulations, we use $J = 8$.
Each simulation was replicated forty times with 200 training and 200 validation samples.
We consider $k = 1, 2, 4, 8$ free penalty parameters by structuring the penalty parameters in a nested fashion: for each $k$, we constrained $\{\lambda_{8\ell/k + j} \}_{j = 1,...,8/k}$ to be equal for $\ell= 0,...,k - 1$.
Penalty parameters were tuned using \texttt{nlm} in \texttt{R} with initializations at $\{\vec{1}, 0.1 \times \vec{1}, 0.01 \times \vec{1}\}$. 
We did not use grid-search since it is computationally intractable for large numbers of penalty parameters.
Multiple initializations were required since the validation loss is not convex in the penalty parameters.

As expected, the validation loss difference increases with the number of penalty parameters in Simulation 1 (Figure~\ref{fig:simulations}(a)).
To see if our oracle inequalities match the empirical results, we regressed the logarithm of the validation loss difference against the logarithm of the number of penalty parameters.
We fit the model using simulation results with at least two penalty parameters as the data is highly skewed for the single penalty parameter case.
We estimated a slope of 1.00 (standard error 0.15), which suggests that the validation loss difference grows linearly in the number of penalty parameters.
Interestingly, including the single parameter case gives us a slope of 1.45 (standard error 0.14).
This suggests that our oracle inequality might not be tight for the single penalty parameter case.

For Simulation 2, the validation loss of the selected model decreases as the number of penalty parameters increases.
As suggested in Figure~\ref{fig:simulations}(b), the validation loss of the selected model decreases because the oracle risk is decreasing at a faster rate than the rate at which the additional error \eqref{eq:asym_train_val_theorem1} grows.

These simulation results suggest that adding more hyper-parameters can improve model estimates.
Having a separate penalty parameter allows GAMs to fit components with differing smoothness.
However if we know a priori that the components have the same smoothness, then it is best to use a single penalty parameter.

\begin{figure}
	\centering
	\begin{subfigure}{0.6\textwidth}
		\includegraphics[width=\textwidth]{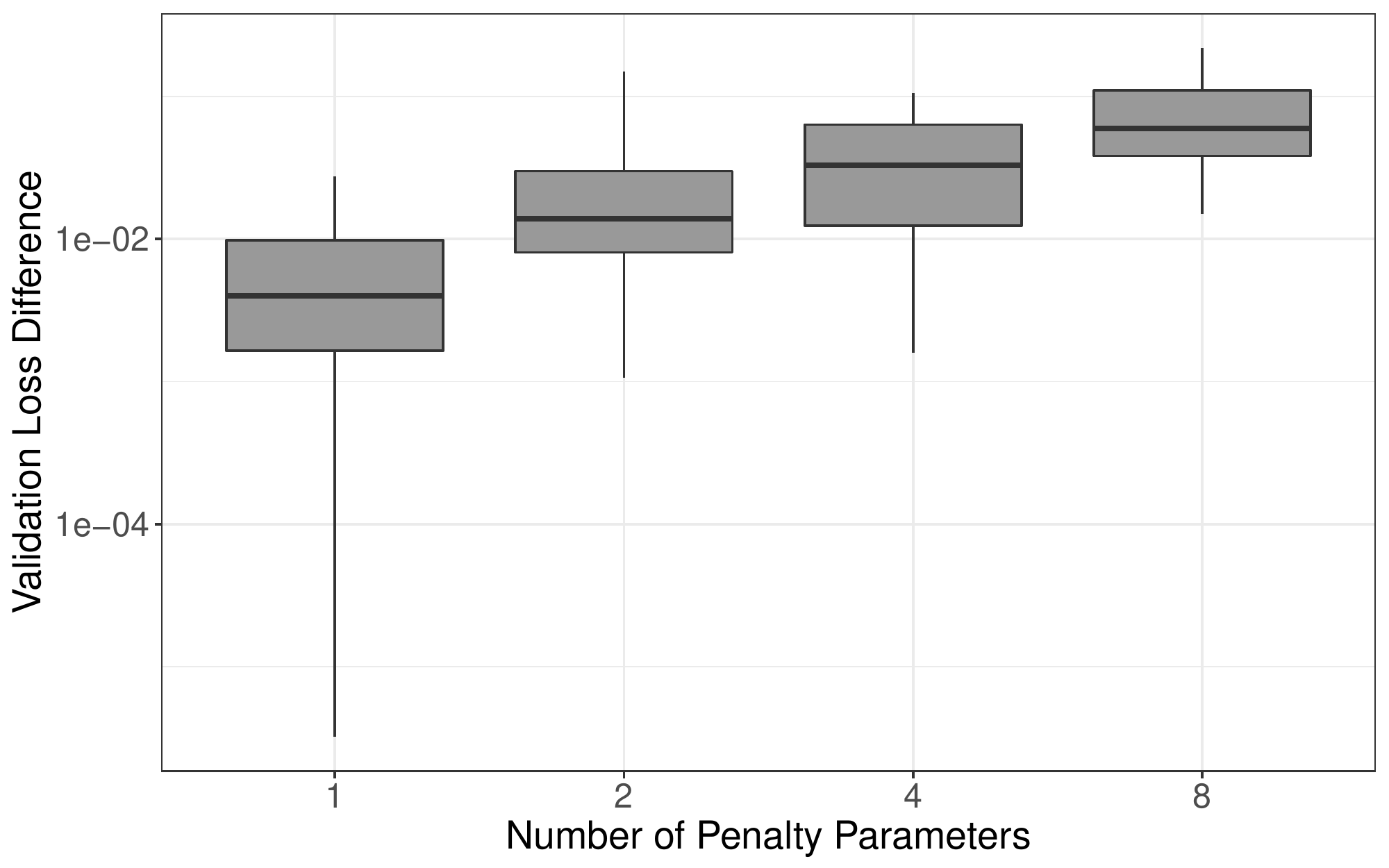}
		\caption{Simulation 1: the univariate additive components are the same}
	\end{subfigure}
	\begin{subfigure}{0.6\textwidth}
		\includegraphics[width=\textwidth]{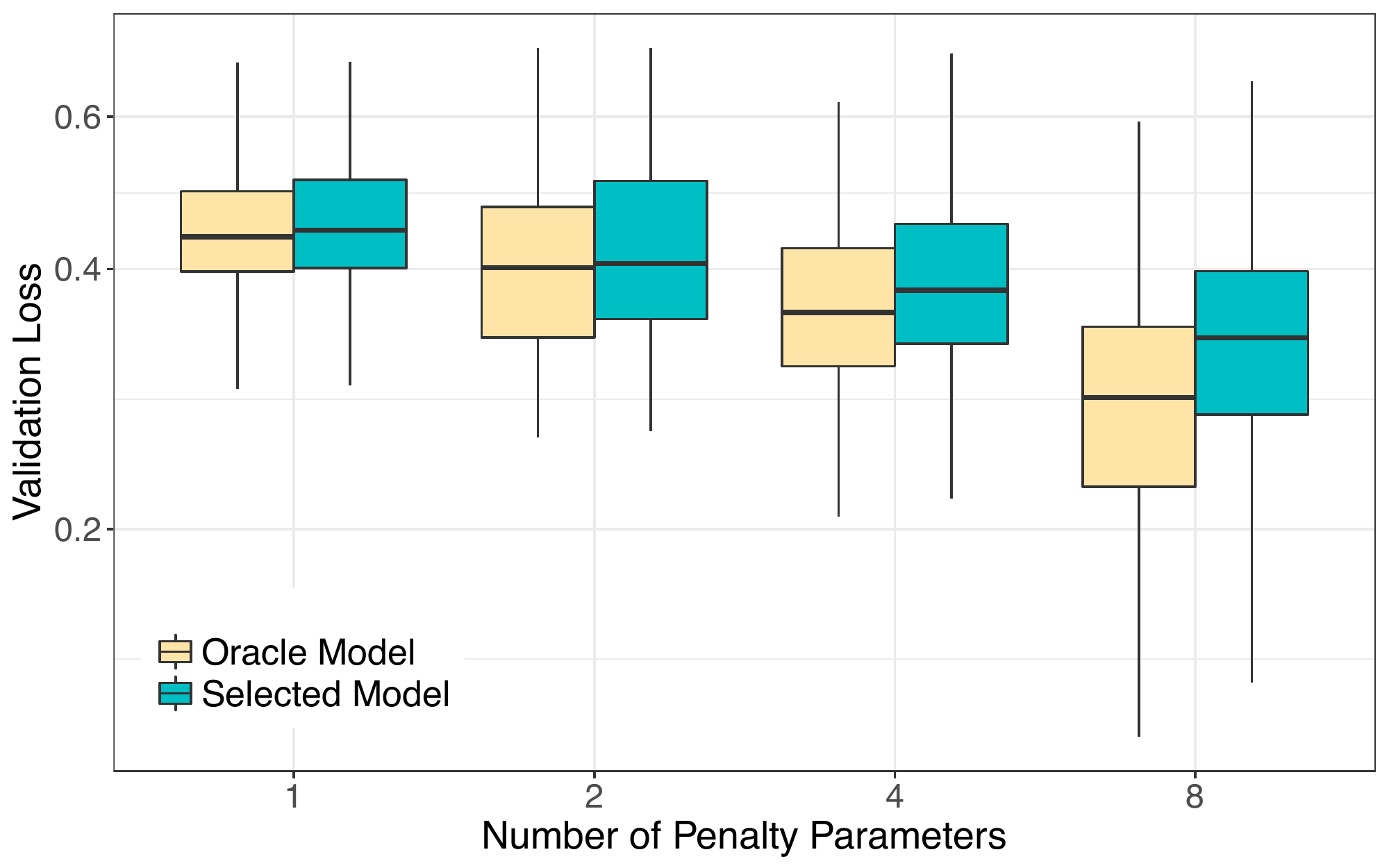}
		\caption{Simulation 2: the univariate additive components have differing levels of smoothness}
	\end{subfigure}
	\caption{
		Performance of generalized additive models as the number of free penalty parameters grows.
	}
	\label{fig:simulations}
\end{figure}

\section{Discussion}\label{sec:discussion}

In this manuscript, we have characterized the generalization error of split-sample procedures that tune multiple hyper-parameters. 
If the estimated models are Lipschitz in the hyper-parameters, the generalization error of the selected model is upper bounded by a combination of the oracle risk and a near-parametric term in the number of hyper-parameters.
These results show that adding hyper-parameters can decrease the generalization error of the selected model if the oracle risk decreases by a sufficient amount.
In the semi- or non-parametric setting, the error incurred from tuning hyper-parameters is dominated by the oracle risk asymptotically; adding hyper-parameters has a negligible effect on the generalization error of the selected model.
In the parametric setting, the error incurred from tuning hyper-parameters is on the same order as the oracle error; one should be careful about adding hyper-parameters, though they are not more ``costly'' than model parameters.

We also showed that many penalized regression examples satisfy the Lipschitz condition so our theoretical results apply.
This implies that fitting models with multiple penalties and penalty parameters can be desirable, rather than the usual case with one or two penalty parameters.

One drawback of our theoretical results is that we have assumed that selected hyper-parameter is a global minimizer of the validation loss.
Unfortunately this is not achievable in practice since the validation loss is not convex with respect to the hyper-parameters.
This problem is exacerbated when there are many hyper-parameters since it is computationally infeasible to perform an exhaustive grid-search. 
We hope to address this question in future research.

\appendix
\section{Supplementary Materials}
\label{sec:proofs}

We will use the following notation: for functions $f$ and $g$ and a dataset $D$ with $m$ samples, we denote the inner product of $f$ and $g$ at covariates $D$ as $\langle f,g \rangle_{D} = \frac{1}{m} \sum_{(x_i, y_i) \in D} f(x_i, y_i) g(x_i, y_i) $.

\subsection{A single training/validation split}
\label{appendix:train_val}

Theorem \ref{thrm:train_val} is a special case of Theorem \ref{thrm:train_val_complicated}, which applies to general model-estimation procedures. The proof is based on the so-called ``basic inequality'' below.

\begin{lemma}
	For any $\tilde{\boldsymbol{\lambda}} \in \tilde{\Lambda}$, we have
	\begin{equation}
	\label{thrm:basic_ineq}
	\left \| g^* - \hat{g}^{(n_T)}(\hat{\boldsymbol{\lambda}}|T) \right \|^2_V 
	- \left \| g^* - \hat{g}^{(n_T)}(\tilde{\boldsymbol{\lambda}}|T) \right \|^2_V
	\le 
	2 \left \langle \epsilon, \hat{g}^{(n_T)}(\tilde{\boldsymbol{\lambda}}|T) - \hat{g}^{(n_T)}(\hat{\boldsymbol{\lambda}}|T) \right \rangle_V
	\end{equation}
\end{lemma}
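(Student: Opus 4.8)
The plan is to obtain this inequality purely algebraically, with no probabilistic input, from the defining optimality of $\hat{\boldsymbol{\lambda}}$ in \eqref{eq:train_val_lambda} together with the decomposition $y = g^* + \epsilon$ on the validation set. Since $\hat{\boldsymbol{\lambda}}$ minimizes $\frac{1}{2}\| y - \hat{g}^{(n_T)}(\boldsymbol{\lambda}|T) \|_V^2$ over $\Lambda$, and $\tilde{\Lambda}$ (a finite net of $\Lambda$ used in the proof) satisfies $\tilde{\Lambda}\subseteq\Lambda$, for every $\tilde{\boldsymbol{\lambda}}\in\tilde{\Lambda}$ we have $\| y - \hat{g}^{(n_T)}(\hat{\boldsymbol{\lambda}}|T) \|_V^2 \le \| y - \hat{g}^{(n_T)}(\tilde{\boldsymbol{\lambda}}|T) \|_V^2$. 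Using $y - \hat{g}^{(n_T)}(\boldsymbol{\lambda}|T) = \bigl(g^* - \hat{g}^{(n_T)}(\boldsymbol{\lambda}|T)\bigr) + \epsilon$ on $V$ and expanding the square in the $\|\cdot\|_V$ norm gives, for every $\boldsymbol{\lambda}$,
\begin{equation*}
\left\| y - \hat{g}^{(n_T)}(\boldsymbol{\lambda}|T) \right\|_V^2
= \left\| g^* - \hat{g}^{(n_T)}(\boldsymbol{\lambda}|T) \right\|_V^2
+ 2\left\langle \epsilon,\; g^* - \hat{g}^{(n_T)}(\boldsymbol{\lambda}|T) \right\rangle_V
+ \left\| \epsilon \right\|_V^2 .
\end{equation*}

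Substituting this identity at $\boldsymbol{\lambda} = \hat{\boldsymbol{\lambda}}$ and at $\boldsymbol{\lambda} = \tilde{\boldsymbol{\lambda}}$ into the optimality inequality and cancelling the common $\| \epsilon \|_V^2$ and $2\langle \epsilon, g^* \rangle_V$ terms yields
\begin{equation*}
\left\| g^* - \hat{g}^{(n_T)}(\hat{\boldsymbol{\lambda}}|T) \right\|_V^2
- \left\| g^* - \hat{g}^{(n_T)}(\tilde{\boldsymbol{\lambda}}|T) \right\|_V^2
\le 2\left\langle \epsilon,\; \hat{g}^{(n_T)}(\hat{\boldsymbol{\lambda}}|T) - \hat{g}^{(n_T)}(\tilde{\boldsymbol{\lambda}}|T) \right\rangle_V ,
\end{equation*}
which is the stated basic inequality; the order of the two fitted models inside the inner product is immaterial, since only the absolute value of this quantity, bounded uniformly over $\hat{\boldsymbol{\lambda}}$, is used afterwards.

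I do not anticipate any real obstacle in the lemma itself --- it is the familiar ``basic inequality'' of $M$-estimation, and the one point to check is that the competitor $\tilde{\boldsymbol{\lambda}}$ lies in a set over which $\hat{\boldsymbol{\lambda}}$ is optimal, which holds since $\tilde{\Lambda}\subseteq\Lambda$. The substantive work is downstream, in deriving Theorem~\ref{thrm:train_val}: there I would take $\tilde{\boldsymbol{\lambda}}$ to be an oracle minimizer of $\| g^* - \hat{g}^{(n_T)}(\cdot|T) \|_V^2$ over $\Lambda$, so that $\| g^* - \hat{g}^{(n_T)}(\tilde{\boldsymbol{\lambda}}|T) \|_V^2$ equals the oracle risk $\tilde{R}(X_V|T)$; replace the continuum $\Lambda$ by a fine net $\tilde{\Lambda}$, using Cauchy--Schwarz and Assumption~\ref{assump:lipschitz} to control both the net cardinality and the discretization error (this is where the factor $\log(\|C_\Lambda(\cdot|T)\|_V \Delta_{\Lambda} n + 1)$ appears); apply a sub-Gaussian maximal inequality to bound $\langle \epsilon, \hat{g}^{(n_T)}(\hat{\boldsymbol{\lambda}}|T) - \hat{g}^{(n_T)}(\tilde{\boldsymbol{\lambda}}|T) \rangle_V$; and finally solve the resulting self-bounding inequality --- in which $\| g^* - \hat{g}^{(n_T)}(\hat{\boldsymbol{\lambda}}|T) \|_V^2$ reappears on the right under a square root, multiplied by the near-parametric rate --- for the excess risk $\| g^* - \hat{g}^{(n_T)}(\hat{\boldsymbol{\lambda}}|T) \|_V^2 - \tilde{R}(X_V|T)$. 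That last algebraic step, and making the net fine enough to keep the leading constant equal to one ($a=0$), are the parts I expect to demand the most care.
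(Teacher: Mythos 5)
Your proof is correct and is exactly the paper's argument: expand the optimality inequality $\|y-\hat{g}^{(n_T)}(\hat{\boldsymbol{\lambda}}|T)\|_V^2 \le \|y-\hat{g}^{(n_T)}(\tilde{\boldsymbol{\lambda}}|T)\|_V^2$ using $y=g^*+\epsilon$ on $V$ and cancel the common terms, which is all the paper's one-line proof does. Your observation about the order of the two fitted models in the inner product is also right --- the sign you derive (with $\hat{g}^{(n_T)}(\hat{\boldsymbol{\lambda}}|T)-\hat{g}^{(n_T)}(\tilde{\boldsymbol{\lambda}}|T)$) is the one the paper actually uses in the subsequent peeling argument, so the transposition in the lemma as printed is a harmless typo.
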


\begin{proof}
	The desired result can be attained by rearranging the definition of $\hat{\boldsymbol{\lambda}}$
	\begin{equation}
	\left \| y - \hat{g}^{(n_T)}(\hat{\boldsymbol{\lambda}}|T) \right \|^2_V \le
	\min_{\tilde{\boldsymbol{\lambda}} \in \tilde{\Lambda}} \left \| y - \hat{g}^{(n_T)}(\tilde{\boldsymbol{\lambda}}|T) \right \|^2_V.
	\end{equation}
\end{proof}

We are therefore interested in bounding the empirical process term in \eqref{thrm:basic_ineq}. A common approach is to use a measure of complexity of the function class. For a single training/validation split, where we treat the training set as fixed, we only need to consider the complexity of the fitted models from the model-selection procedure
\begin{equation}
\mathcal{G}(T)=\left\{ \hat{g}^{(n_T)}(\boldsymbol{\lambda}|T) : \boldsymbol{\lambda} \in \Lambda \right\}.
\end{equation}
This model class can be considerably less complex compared to the original function class $\mathcal{G}$, such as the special case in Theorem \ref{thrm:train_val} where we suppose $\mathcal{G}(T)$ is Lipschitz. For this proof, we will use metric entropy as a measure of model class complexity. We recall its definition below.
\begin{definition}
	Let $\mathcal{F}$ be a function class. Let the covering number $N(u, \mathcal{F}, \| \cdot \|)$ be the smallest set of $u$-covers of $\mathcal{F}$ with respect to the norm $\| \cdot \|$. The metric entropy of $\mathcal{F}$ is defined as the log of the covering number:
	\begin{equation}
	H (u, \mathcal{F}, \| \cdot \| ) = \log N(u, \mathcal{F}, \| \cdot \|).
	\end{equation}
\end{definition}

We will bound the empirical process term using the following Lemma, which is a simplification of Corollary 8.3 in \citet{van2000empirical}.

\begin{lemma}
	\label{lemma:cor83}
	Suppose $D^{(m)} = \{x_1,...,x_m\}$ are fixed and $\epsilon_1,...,\epsilon_m$ are independent random variables with mean zero and uniformly sub-gaussian with parameters $b$ and $B$. Suppose
	the model class $\mathcal{F}$ satisfies $\sup_{f\in\mathcal{F}}\|f\|_{D^{(m)}}\le R$
	and
	\[
	\int_{0}^{R}H^{1/2}(u,\mathcal{F},\|\cdot\|_{D^{(m)}})du \le \mathcal{J} (R).
	\]

	There is a constant $a > 0$ dependent only on $b$ and $B$ such that
	for all $\delta>0$ satisfying
	\[
	\sqrt{m}\delta\ge a(\mathcal{J} (R)\vee R),
	\]
	we have 
	\[
	Pr\left(\sup_{f\in\mathcal{F}}\left|\frac{1}{m}\sum_{i=1}^{m}\epsilon_{i}f(x_{i})\right|\ge\delta\right)
	\le 
	a\exp\left(-\frac{m\delta^{2}}{4a^{2}R^{2}}\right).
	\]
	
\end{lemma}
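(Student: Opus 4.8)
The plan is to obtain Lemma~\ref{lemma:cor83} directly from Corollary~8.3 of \citet{van2000empirical}, which bounds the supremum of a sub-Gaussian weighted empirical process by a Dudley-type entropy integral; essentially all of the work is translating that corollary into the present normalization and collapsing its hypotheses into the single requirement $\sqrt{m}\,\delta \ge a(\mathcal{J}(R)\vee R)$.

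First I would pass to the empirical-process scaling. Writing $\nu_m(f) = \frac{1}{\sqrt{m}}\sum_{i=1}^m \epsilon_i f(x_i)$, we have $\frac{1}{m}\sum_{i=1}^m \epsilon_i f(x_i) = \frac{1}{\sqrt{m}}\nu_m(f)$, so that
\[
\Pr\left(\sup_{f\in\mathcal{F}}\left|\frac{1}{m}\sum_{i=1}^m \epsilon_i f(x_i)\right| \ge \delta\right)
=
\Pr\left(\sup_{f\in\mathcal{F}}\left|\nu_m(f)\right| \ge \sqrt{m}\,\delta\right).
\]
With $D^{(m)}$ fixed, the $\epsilon_i$ uniformly sub-Gaussian with parameters $b$ and $B$, the envelope condition $\sup_{f\in\mathcal{F}}\|f\|_{D^{(m)}}\le R$, and the entropy integral majorized by $\mathcal{J}(R)$, the right-hand event is exactly of the form treated by Corollary~8.3.

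Next I would invoke the corollary. In its non-localized form it yields a constant depending only on $b$ and $B$ such that, for every deviation level $t$ exceeding a fixed multiple of $\int_0^R H^{1/2}(u,\mathcal{F},\|\cdot\|_{D^{(m)}})\,du$ and of $R$, the supremum $\sup_{f\in\mathcal{F}}|\nu_m(f)|$ has a Gaussian tail of the shape $\exp(-c\,t^2/R^2)$. Substituting $t = \sqrt{m}\,\delta$, replacing the entropy integral by the larger quantity $\mathcal{J}(R)$ (this substitution is exactly why the admissible range becomes $\sqrt{m}\,\delta \ge a(\mathcal{J}(R)\vee R)$), and absorbing all constants coming from $b$, $B$, and the chaining argument into a single $a>0$, one reaches the stated bound $a\exp(-m\delta^2/(4a^2 R^2))$.

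The only delicate point is the bookkeeping. Corollary~8.3 is phrased with a general majorant $\psi$ for the entropy integral and carries its own constants, so I would (i) check that taking $\psi$ equal to the constant $\mathcal{J}(R)$ is legitimate, using that $u\mapsto H^{1/2}(u,\mathcal{F},\|\cdot\|_{D^{(m)}})$ is non-increasing and that $\mathcal{J}$ dominates its integral over $(0,R]$; (ii) confirm that the envelope bound $\sup_{f\in\mathcal{F}}\|f\|_{D^{(m)}}\le R$ is the only uniform-size hypothesis needed, so that no localization or two-sided constraint on $\delta$ enters; and (iii) keep the $1/m$ versus $1/\sqrt{m}$ normalization and the factor $4$ in the exponent consistent after the rescaling. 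None of this is hard, but matching the constants and the exact admissible range is essentially the whole obstacle.
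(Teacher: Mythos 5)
Your proposal is correct and matches the paper exactly: the paper offers no separate proof, presenting the lemma as a direct simplification of Corollary~8.3 of \citet{van2000empirical}, with the only modification being that the entropy integral is replaced by its majorant $\mathcal{J}(R)$, which (as you note) only strengthens the hypothesis $\sqrt{m}\,\delta \ge a(\mathcal{J}(R)\vee R)$ and so preserves the conclusion. The constant-bookkeeping and normalization checks you flag are the entire content of the translation, and they go through as you describe.
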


We are now ready to prove the oracle inequality. It uses a standard peeling argument.

\begin{theorem}
	\label{thrm:train_val_complicated}
	Consider a set of hyper-parameters $\Lambda$.
	Let training data $T$ be fixed, as well as the covariates of the validation set $X_V$.
	Let the oracle risk be denoted
	\begin{equation}
	\tilde{R}(X_V|T) = \argmin_{\lambda \in \Lambda} \left \| g^*-\hat{g}^{(n_T)}( \boldsymbol{\lambda} | T) \right \|_{V}^{2}.
	\end{equation}
	
	Suppose independent random variables $\epsilon_i$ for validation set $V$ have expectation zero and are uniformly sub-Gaussian with parameter $b$ and $B$.
	Suppose there is a function $\mathcal{J} (\cdot | T):\mathbb{R}\mapsto\mathbb{R}$ and constant $r > 0$ such that
	\begin{equation}
	\label{eq:dudley_bound}
	\int_{0}^{R}H^{1/2}(u,\mathcal{G}(T),\|\cdot\|_{V})du\le \mathcal{J} (R| T) \quad \forall R>r
	\end{equation}
	Also, suppose $\mathcal{J} \left(u | T \right)/u^{2}$ is non-increasing in $u$ for all $u > r$.
	
	Then there is a constant $c>0$ only depending on $b$ and $B$ such that for all $\delta$ satisfying
	\begin{equation}
	\label{eq:train_val_delta_condn}
	\sqrt{n_V}\delta^{2}
	\ge
	c \left ( 
	\mathcal{J}(\delta| T)
	\vee 
	\delta
	\vee
	\mathcal{J} \left (\tilde{R}(X_V|T)\middle | T
	\right ) 
	\vee
	4 \tilde{R}(X_V|T) \right ),
	\end{equation}
	we have
	\begin{align}
	Pr\left(
	\left\Vert g^* - \hat{g}^{(n_T)}( \hat{\boldsymbol{\lambda}} | T) \right\Vert _{V}^2 -
	\tilde{R}(X_V|T)
	\ge\delta^2
	\middle | 
	T, X_V
	\right )
	&\le c\exp\left(-\frac{n_{V}\delta^{4}}{
		c^{2}
		\tilde{R}(X_V|T)
	}\right) 
	+c\exp\left(-\frac{n_{V}\delta^{2}}{c^{2}}\right).
	\end{align}
\end{theorem}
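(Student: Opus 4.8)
The plan is to combine the basic inequality \eqref{thrm:basic_ineq} with a peeling argument in the excess risk, controlling each of the resulting empirical processes with Lemma~\ref{lemma:cor83}. Since $T$ and $X_V$ are held fixed, $g^*$, the fitted models $\hat{g}^{(n_T)}(\boldsymbol{\lambda}|T)$, and $\tilde{R}(X_V|T)$ are deterministic, and only the validation errors $\{\epsilon_i\}$ are random. Let $\tilde{\boldsymbol{\lambda}}\in\Lambda$ satisfy $\|g^*-\hat{g}^{(n_T)}(\tilde{\boldsymbol{\lambda}}|T)\|_V^2=\tilde{R}(X_V|T)$, set $\mathcal{E}\coloneqq\|g^*-\hat{g}^{(n_T)}(\hat{\boldsymbol{\lambda}}|T)\|_V^2-\tilde{R}(X_V|T)$, and write $h_{\boldsymbol{\lambda}}\coloneqq\hat{g}^{(n_T)}(\tilde{\boldsymbol{\lambda}}|T)-\hat{g}^{(n_T)}(\boldsymbol{\lambda}|T)$. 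By \eqref{thrm:basic_ineq}, $\mathcal{E}\le 2\langle\epsilon,h_{\hat{\boldsymbol{\lambda}}}\rangle_V$, so it suffices to bound the probability that the right-hand side exceeds $\delta^2$.

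For the peeling, decompose $\{\mathcal{E}\ge\delta^2\}=\bigcup_{s\ge0}\{2^s\delta^2\le\mathcal{E}<2^{s+1}\delta^2\}$. On the $s$-th slice, $\hat{\boldsymbol{\lambda}}$ lies in $S_s\coloneqq\{\boldsymbol{\lambda}\in\Lambda:\|g^*-\hat{g}^{(n_T)}(\boldsymbol{\lambda}|T)\|_V^2\le\tilde{R}(X_V|T)+2^{s+1}\delta^2\}$, every $h_{\boldsymbol{\lambda}}$ with $\boldsymbol{\lambda}\in S_s$ has $\|h_{\boldsymbol{\lambda}}\|_V\le R_s\coloneqq\sqrt{\tilde{R}(X_V|T)}+\sqrt{\tilde{R}(X_V|T)+2^{s+1}\delta^2}$ by the triangle inequality, and \eqref{thrm:basic_ineq} forces $\langle\epsilon,h_{\hat{\boldsymbol{\lambda}}}\rangle_V\ge 2^{s-1}\delta^2$ on that slice. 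A union bound gives
\[
Pr\left(\mathcal{E}\ge\delta^2\,\middle|\,T,X_V\right)\le\sum_{s\ge0}Pr\left(\sup_{\boldsymbol{\lambda}\in S_s}\left|\langle\epsilon,h_{\boldsymbol{\lambda}}\rangle_V\right|\ge 2^{s-1}\delta^2\,\middle|\,T,X_V\right).
\]
Now $\{h_{\boldsymbol{\lambda}}:\boldsymbol{\lambda}\in S_s\}$ is a translate of a subset of $\mathcal{G}(T)$, so its covering numbers are dominated by those of $\mathcal{G}(T)$ and its Dudley integral is at most $\mathcal{J}(R_s|T)$ (replacing $R_s$ by a slightly larger radius when $R_s\le r$), while its members have $\|\cdot\|_V$-norm at most $R_s$. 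One then applies Lemma~\ref{lemma:cor83} with $m=n_V$, $R=R_s$, and threshold $2^{s-1}\delta^2\ge\delta^2/2$ — whose hypothesis $\sqrt{n_V}\,2^{s-1}\delta^2\ge a(\mathcal{J}(R_s|T)\vee R_s)$ holds for every $s$ by \eqref{eq:train_val_delta_condn}, after bounding $R_s^2\le 4\tilde{R}(X_V|T)+2^{s+2}\delta^2$ and using that $\mathcal{J}(u|T)/u^2$ is non-increasing — to obtain $Pr(\cdot\,|\,T,X_V)\le a\exp\left(-n_V(2^{s-1}\delta^2)^2/(4a^2R_s^2)\right)$.

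It remains to sum over $s$. Using $R_s^2\le 4\tilde{R}(X_V|T)+2^{s+2}\delta^2$, the exponent is bounded below by an absolute constant times $\min\left(4^s n_V\delta^4/\tilde{R}(X_V|T),\,2^s n_V\delta^2\right)$, so each summand is at most $a\exp(-c'4^sn_V\delta^4/\tilde{R}(X_V|T))+a\exp(-c'2^sn_V\delta^2)$ for some absolute $c'>0$. Both series are geometric and dominated by their $s=0$ terms (the number of nonempty slices is effectively finite, since $\mathcal{E}$ is bounded in the settings of interest, and when the $s=0$ exponents are of constant order or smaller the asserted bound is already trivial), which after enlarging $c$ gives the claimed $c\exp(-n_V\delta^4/(c^2\tilde{R}(X_V|T)))+c\exp(-n_V\delta^2/c^2)$; when $\tilde{R}(X_V|T)=0$ the first exponential is read as $0$ and only the $2^s n_V\delta^2$ branch remains.

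The step I expect to be the main obstacle is the bookkeeping that ties \eqref{eq:train_val_delta_condn} to the per-slice hypotheses of Lemma~\ref{lemma:cor83} and then collapses the slice sum to exactly the two exponential terms: one must choose the slice radii so that the truncated and translated copy of $\mathcal{G}(T)$ still obeys the Dudley bound $\mathcal{J}(R_s|T)$; verify the threshold condition simultaneously over \emph{all} $s$, with the four terms $\mathcal{J}(\delta|T)$, $\delta$, $\mathcal{J}(\tilde{R}(X_V|T)|T)$ and $\tilde{R}(X_V|T)$ of \eqref{eq:train_val_delta_condn} each absorbing one portion of $\mathcal{J}(R_s|T)\vee R_s$; and separate the $\tilde{R}(X_V|T)$-dominated from the $\delta^2$-dominated part of $R_s^2$ so as to land on the two-term bound rather than a coarser single-term one. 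The metric-entropy estimate itself enters only through Lemma~\ref{lemma:cor83}, and the remaining ingredients — behaviour of covering numbers under translation and truncation, the $R_s\le r$ and $\tilde{R}(X_V|T)=0$ edge cases, and the geometric summation — are routine and absorbed into $c$.
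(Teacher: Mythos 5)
Your proposal is correct and follows essentially the same route as the paper: the basic inequality, dyadic peeling of the excess risk, per-slice control of the empirical process via Lemma~\ref{lemma:cor83}, and a geometric summation yielding the two exponential terms. The only (cosmetic) difference is that you bound the slice radius directly by the triangle inequality and split the exponent via a $\min$, whereas the paper splits each slice event according to whether $2^{2s+2}\delta^2$ or the cross term $2\left|\left\langle \hat{g}(\tilde{\boldsymbol{\lambda}})-\hat{g}(\hat{\boldsymbol{\lambda}}),\hat{g}(\tilde{\boldsymbol{\lambda}})-g^{*}\right\rangle _{V}\right|$ dominates; both give radii of order $\sqrt{\tilde{R}(X_V|T)}$ and $2^{s}\delta$ and land on the same bound.
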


\begin{proof}
	Consider any $\tilde{\boldsymbol{\lambda}} \in \tilde{\Lambda}$.
	We will use the simplified notation $\hat{g}(\hat{\boldsymbol{\lambda}}) \coloneqq \hat{g}^{(n_T)}(\hat{\boldsymbol{\lambda}} | T)$ and $\hat{g}(\tilde{\boldsymbol{\lambda}}) \coloneqq \hat{g}^{(n_T)}(\tilde{\boldsymbol{\lambda}} | T)$. In addition, the following probabilities are all conditional on $X_V$ and $T$ but we leave them out for readability.
	\begin{align}
	& \Pr\left(
	\left\Vert \hat{g}(\hat{\boldsymbol{\lambda}})-g^{*}\right\Vert _{V}^{2}
	- \tilde{R}(X_V|T)
	\ge \delta^2
	\right) \label{eq:train_val_prob}\\
	& = \sum_{s=0}^{\infty}
	\Pr\left(
	2^{2s}\delta^{2}
	\le \left\Vert \hat{g}(\hat{\boldsymbol{\lambda}})-g^{*}\right\Vert _{V}^{2}
	-\tilde{R}(X_V|T)
	\le 2^{2s+2}\delta^{2}\right) 
	\label{eq:peeled} \\
	&\le \sum_{s=0}^{\infty}
	\Pr\left(
	2^{2s}\delta^{2}
	\le 2\left\langle \epsilon,\hat{g}(\hat{\boldsymbol{\lambda}})-\hat{g}(\tilde{\boldsymbol{\lambda}})\right\rangle _{V}\right. \label{eq:peel_ineq}\\
	& \qquad  \left.\wedge \left\Vert \hat{g}(\hat{\boldsymbol{\lambda}})-\hat{g}(\tilde{\boldsymbol{\lambda}})\right\Vert_{V}^{2}\le2^{2s+2}\delta^{2}+ 2\left|\left\langle \hat{g}(\tilde{\boldsymbol{\lambda}})-\hat{g}(\hat{\boldsymbol{\lambda}}),\hat{g}(\tilde{\boldsymbol{\lambda}})-g^{*}\right\rangle _{V}\right| \right ),
	\end{align}
	where we applied the basic inequality \eqref{thrm:basic_ineq} in the last line.
	Each summand in \eqref{eq:peel_ineq} can be bounded by splitting the event into the cases where either $2^{2s+2} \delta^2$ or $2\left|\left\langle \hat{g}(\tilde{\boldsymbol{\lambda}})-\hat{g}(\hat{\boldsymbol{\lambda}}),\hat{g}(\tilde{\boldsymbol{\lambda}})-g^{*}\right\rangle _{V}\right|$ is larger. Splitting up the probability and applying Cauchy Schwarz gives us the following bound for \eqref{eq:train_val_prob}
	\begin{align}
	& Pr\left(
	\sup_{\boldsymbol{\lambda} \in \Lambda: \left\Vert \hat{g}({\boldsymbol{\lambda}})-\hat{g}(\tilde{\boldsymbol{\lambda}})\right\Vert _{V}
		\le
		4\left\Vert \hat{g}(\tilde{\boldsymbol{\lambda}})-g^{*}\right\Vert _{V}}
	2\left\langle \epsilon,\hat{g}({\boldsymbol{\lambda}})-\hat{g}(\tilde{\boldsymbol{\lambda}})\right\rangle _{V}
	\ge 
	\delta^{2}
	\right)
	\label{eq:train_val_1}
	\\
	& + \sum_{s=0}^{\infty} Pr\left(
	\sup_{\boldsymbol{\lambda} \in \Lambda: \left\Vert \hat{g}({\boldsymbol{\lambda}})-\hat{g}(\tilde{\boldsymbol{\lambda}})\right\Vert _{V}
		\le
		2^{s+3/2}\delta}
	2\left\langle \epsilon,\hat{g}({\boldsymbol{\lambda}})-\hat{g}(\tilde{\boldsymbol{\lambda}})\right\rangle _{V}
	\ge
	2^{2s} \delta^{2}
	\right)
	\label{eq:train_val_2}.
	\end{align}
	
	We can bound both \eqref{eq:train_val_1} and \eqref{eq:train_val_2} using Lemma \ref{lemma:cor83}. For our choice of $\delta$ in \eqref{eq:train_val_delta_condn},
	there is some constant $a>0$ dependent only on $b$ such that \eqref{eq:train_val_1} is bounded above by
	\[ 
	a\exp\left(-\frac{n_{V}\delta^{4}}{4a^{2}\left(16\left\Vert \hat{g}(\tilde{\boldsymbol{\lambda}})-g^{*}\right\Vert _{V}^{2}\right)}\right).
	\]
	In addition, our choice of $\delta$ from \eqref{eq:train_val_delta_condn} and our assumption that $\psi(u)/u^2$ is non-increasing implies that the condition in Lemma \ref{lemma:cor83} is satisfied for all $s=0,1,...,\infty$ simultaneously. Hence for all $s=0,1,...,\infty$, we have
	\begin{align}
	Pr\left(
	\sup_{\boldsymbol{\lambda} \in \Lambda: \left\Vert \hat{g}({\boldsymbol{\lambda}})-\hat{g}(\tilde{\boldsymbol{\lambda}})\right\Vert _{V}
		\le
		2^{s+3/2}\delta}
	2\left\langle \epsilon,\hat{g}({\boldsymbol{\lambda}})-\hat{g}(\tilde{\boldsymbol{\lambda}})\right\rangle _{V}
	\ge
	2^{2s} \delta^{2}
	\right)
	& \le 
	a\exp\left(-n_{V}\frac{2^{4s-2}\delta^{4}}{4a^{2}2^{2s+3}\delta^{2}}\right).
	\end{align}
	
	Putting this all together, we have that there is a constant $c$ such that \eqref{eq:train_val_prob} is bounded above by
	\begin{equation}
	c\exp\left(-\frac{n_{V}\delta^{4}}{c^{2} \tilde{R}(X_V|T)}\right)
	+
	c\exp\left(-\frac{n_{V} \delta^2}{c^{2}}\right).
	\end{equation}
	
\end{proof}

We can apply Theorem \ref{thrm:train_val_complicated} to get Theorem \ref{thrm:train_val}. Before proceeding, we determine the entropy of $\mathcal{G}(T)$ when the functions are Lipschitz in the hyper-parameters.

\begin{lemma}
	\label{lemma:covering_cube}
	Let $\Lambda = [\lambda_{\min}, \lambda_{\max}]^J$ where $\lambda_{\min} \le \lambda_{\max}$. Suppose $\mathcal{G}(T)$ is Lipschitz with function $C(\cdot | T)$ over $\boldsymbol{\lambda}$.
	Then the entropy of $\mathcal{G}(T)$ with respect to $\| \cdot \|$ is
	\begin{equation}
	H\left(u, \mathcal{G}(T),\|\cdot\|\right) \le
	J \log \left(\frac{4 \|C(\cdot | T)\| \left(\lambda_{max}-\lambda_{min}\right)+2u}{u}\right).
	\end{equation}
\end{lemma}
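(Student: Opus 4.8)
The plan is to exploit that, with the training set $T$ frozen, the whole family $\mathcal{G}(T)$ is merely a Lipschitz image of the cube $\Lambda$, so its metric entropy is controlled by that of a Euclidean cube. First I would record that the parametrization $\Phi:\boldsymbol{\lambda}\mapsto\hat{g}^{(n_T)}(\boldsymbol{\lambda}\,|\,T)$ is Lipschitz as a map from $(\Lambda,\|\cdot\|_2)$ into $(\mathcal{G}(T),\|\cdot\|)$ with constant $\|C(\cdot\,|\,T)\|$. Indeed, the hypothesis gives the pointwise bound $|\hat{g}^{(n_T)}(\boldsymbol{\lambda}^{(1)}|T)(\boldsymbol{x})-\hat{g}^{(n_T)}(\boldsymbol{\lambda}^{(2)}|T)(\boldsymbol{x})|\le C(\boldsymbol{x}|T)\,\|\boldsymbol{\lambda}^{(1)}-\boldsymbol{\lambda}^{(2)}\|_2$ at every design point entering $\|\cdot\|$; squaring and averaging (or integrating) over those points yields $\|\Phi(\boldsymbol{\lambda}^{(1)})-\Phi(\boldsymbol{\lambda}^{(2)})\|\le\|C(\cdot|T)\|\,\|\boldsymbol{\lambda}^{(1)}-\boldsymbol{\lambda}^{(2)}\|_2$.

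Next I would transfer covers along $\Phi$. If $\{\boldsymbol{\lambda}_1,\dots,\boldsymbol{\lambda}_N\}$ is a $\rho$-cover of $\Lambda$ in $\|\cdot\|_2$ with $\rho=u/\|C(\cdot|T)\|$, then for every $\boldsymbol{\lambda}\in\Lambda$ there is an index $i$ with $\|\boldsymbol{\lambda}-\boldsymbol{\lambda}_i\|_2\le\rho$, and hence $\|\Phi(\boldsymbol{\lambda})-\Phi(\boldsymbol{\lambda}_i)\|\le u$; so $\{\Phi(\boldsymbol{\lambda}_i)\}$ is a $u$-cover of $\mathcal{G}(T)$, giving $N(u,\mathcal{G}(T),\|\cdot\|)\le N(u/\|C(\cdot|T)\|,\Lambda,\|\cdot\|_2)$. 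It then remains to bound the Euclidean covering number of the cube $\Lambda=[\lambda_{\min},\lambda_{\max}]^J$ with $\Delta\coloneqq\lambda_{\max}-\lambda_{\min}$: I would place an axis-aligned grid on the cube, choosing the mesh so that the grid becomes a $\rho$-net in $\|\cdot\|_2$, count the grid points coordinatewise (at most roughly $\Delta/(2\rho)+1$ per coordinate, up to the factor picked up when converting a coordinatewise mesh into Euclidean balls), take the $J$-th power, take logarithms, substitute $\rho=u/\|C(\cdot|T)\|$, and simplify to land on $H(u,\mathcal{G}(T),\|\cdot\|)\le J\log\!\big((4\|C(\cdot|T)\|\,\Delta+2u)/u\big)$.

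I do not expect a genuine conceptual obstacle here — this is the familiar statement that a Lipschitz image of a bounded cube has metric entropy $\lesssim J\log(1/u)$, and every hypothesis needed (the Lipschitz modulus, boundedness of $\Lambda$) is already in force. The only point requiring care is the last step: pinning down the constants in the cube's covering number so that the bound comes out in exactly the stated closed form, in particular handling the ceilings in the number of grid points, the treatment of the endpoints of each interval, and — the one place a dimension-dependent factor can creep in — the passage between a coordinatewise ($\ell_\infty$) mesh on $\Lambda$ and $\ell_2$-balls, which is the interaction I would verify most carefully.
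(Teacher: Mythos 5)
Your proposal matches the paper's proof: the paper likewise pushes a cover of the cube $\Lambda$ forward through the Lipschitz parametrization (a $\delta$-cover of $\Lambda$ in $\|\cdot\|_2$ becomes a $\|C(\cdot\,|\,T)\|\delta$-cover of $\mathcal{G}(T)$) and then invokes the cube covering bound $N(u,\Lambda,\|\cdot\|_2)\le\left(\frac{4\Delta+2u}{u}\right)^J$, attributed to a slight variation of Lemma 2.5 of \citet{van2000empirical}. The one point you flag --- the passage from a coordinatewise mesh to $\ell_2$-balls --- is a real issue, and the paper does not resolve it either: van de Geer's lemma is a volumetric bound for Euclidean balls, and applying it to the cube (circumradius $\sqrt{J}\Delta/2$) gives $\left(\frac{2\sqrt{J}\Delta+u}{u}\right)^J$, while a volume comparison against the unit ball's volume shows the stated $\sqrt{J}$-free bound in fact fails for large $J$ and small $u$. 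Since the extra factor only contributes an additive $\tfrac12 J\log J$ to the entropy, which is absorbed harmlessly into the $\log(\cdots n+1)$ terms downstream, your argument is sound in substance --- just do not expect to land exactly on the constant $4$ in the displayed bound.
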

\begin{proof}
	Using a slight variation of the proof for Lemma 2.5 in \citet{van2000empirical}, we can show
	\begin{align}
	N\left(u,\Lambda,\|\cdot\|_{2}\right) \le \left(\frac{4\left(\lambda_{max}-\lambda_{min}\right)+2u}{u}\right)^{J}.
	\end{align}
	Under the Lipschitz assumption, a $\delta$-cover for $\Lambda$
	is a $\|C(\cdot | T)\|\delta$-cover for $\mathcal{G}(T)$. The covering number for $\mathcal{G}(T)$ wrt $\|\cdot\|$ is bounded by the covering number for $\Lambda$ as follows
	\begin{eqnarray}
	N\left(u,\mathcal{G}(T),\|\cdot\|\right)
	&\le& N\left(\frac{u}{\|C(\cdot | T)\|},\Lambda,\|\cdot\|_{2}\right)\\
	&\le& \left(\frac{4\left(\lambda_{max}-\lambda_{min}\right)+2u/\|C(\cdot | T)\|}{u/\|C(\cdot | T)\|}\right)^{J}.
	\end{eqnarray}
\end{proof}

\subsubsection{Proof for Theorem \ref{thrm:train_val}}
\begin{proof}
	By Lemma \ref{lemma:covering_cube}, we have
	\begin{align}
	\int_{0}^{R}H^{1/2}(u,\mathcal{G}(T),\|\cdot\|_{V})du 
	&= \int_{0}^{R} \left ( 
	J \log \left(\frac{4 \|C_\Lambda\|_V \Delta_{\Lambda}+2u}{u}\right)
	\right )^{1/2}
	du\\
	& \le J^{1/2}\int_{0}^{R}\left[
	\log\left(
	\frac{4 \|C_\Lambda(\cdot | T)\|_V \Delta_{\Lambda} + 2R }
	{u}
	\right)
	\right]^{1/2}du\\
	& = J^{1/2}R \int_{0}^{1}\left[
	\log\left(
	\frac{4 \|C_\Lambda(\cdot | T)\|_V \Delta_{\Lambda} + 2R }
	{vR}
	\right)
	\right]^{1/2}dv\\
	& \le J^{1/2}R \int_{0}^{1}
	\log^{1/2}\left(
	\frac{4 \|C_\Lambda(\cdot | T)\|_V \Delta_{\Lambda} + 2R}
	{R}
	\right)
	+
	\log^{1/2}(1/v)
	dv\\
	& < J^{1/2}R \left (
	\log^{1/2}\left(
	\frac{4 \|C_\Lambda(\cdot | T)\|_V \Delta_{\Lambda} + 2R}
	{R}
	\right)
	+
	1
	\right ).
	\end{align}
	If we restrict $R > n^{-1}$, then for an absolute constant $c$, we have
	\begin{equation}
	\label{eq:train_val_entropy}
	\int_{0}^{R}H^{1/2}(u,\mathcal{G}(T),\|\cdot\|_{V})du
	\le
	\mathcal{J}(R) 
	\coloneqq c R\left ( J \log(\|C_\Lambda(\cdot |T)\|_V \Delta_{\Lambda} n + 1) \right )^{1/2}.
	\end{equation}
	Applying Theorem \ref{thrm:train_val_complicated}, we get our desired result.
\end{proof}

\subsection{Cross-validation}
\label{app:cv}
In order to obtain an oracle inequality for averaged version of cross-validation, we need to extend Theorem 3.5 in \citet{lecue2012oracle}.
Let the class of fitted functions for given training data $T$ be denoted
$$
\mathcal{G}(T) = \{\hat{g}^{(n_T)}(\boldsymbol{\lambda}| T) : \boldsymbol{\lambda} \in \Lambda\}.
$$
In \citet{lecue2012oracle}, they assume that there is a function $\mathcal{J}$ that uniformly bounds the size of the class $\mathcal{G}(T)$ for any training data $T$.
However the complexity of $\mathcal{G}(T)$ depends on training data -- for instance, if there is a lot of noise in the training data, the size of $\mathcal{G}(T)$ can be very high.
In our extension, we allow the function $\mathcal{J}$ to depend on the training data.

Throughout this section, we use Talagrand's gamma function \citep{talagrand2006generic} to characterize the size of a function class.
We present it below as it will be used later on.
\begin{definition}
	For metric space $(T,d)$ and $\alpha \ge 0$, define
	$$
	\gamma_\alpha(T,d) = \inf \sup_{t\in T} \sum_{s = 0}^{\infty} 2^{s/\alpha}d(t, T_s)
	$$
	where the infimum is taken over all sequences $\{T_s: s\in \mathbb{N}, T_s \subseteq T, |T_s| \le 2^{2^s} \}$.
	(Here, $|A|$ denotes the cardinality of the set $A$.)
\end{definition}

We begin with some notation.
Suppose we have a measurable space $(\mathcal{Z}, \mathcal{T})$ where we observe $Z = (X,y)$ random variables with values in $\mathcal{Z}$.
Let $\mathcal{G}$ is a class of measurable functions from $\mathcal{Z} \mapsto \mathbb{R}$; the model-estimation procedure selects functions from the class $\mathcal{G}$.
In contrast to the main manuscript, we will consider a very general setting.
In particular, the noise $\epsilon = y - E[y | X=x]$ is not necessarily independent of $X$.
In addition, we consider a general loss function $Q: \mathcal{Z} \times \mathcal{G} \mapsto \mathbb{R}$ (rather than solely the least squares loss).
Define the risk function $R(g)$ as the expected loss $\mathbb{E} Q(Z, g)$ and suppose the risk function is convex.
Let $\bar{g}^{(n)}(D^{(n)})$ denote the averaged version of cross-validation and $g^*$ denote the minimizer of the risk function over $\mathcal{G}$.

In this more general setting, we require a more general version of Assumption~\ref{assump:tail_margin}:
\begin{assump}
	\label{assump:tail_margin_general}
	There exist constants $K_0, K_1 \ge 0$ and $\kappa \ge 1$ such that for any $m \in \mathbb{N}$ and any dataset $D^{(m)}$,
	\begin{align}
	\left \| Q(\cdot, \hat{g}^{(n_T)}(\boldsymbol{\lambda} | D^{(n_T)}) - Q(\cdot, g^*) \right \|_{L_{\psi_1}} & \le K_0
	\label{eq:cv_assump1_app}\\
	\left \| Q(\cdot, \hat{g}^{(n_T)}(\boldsymbol{\lambda} | D^{(n_T)}) - Q(\cdot, g^*)  \right \|_{L_2}
	& \le 
	K_1 \left ( R(\hat{g}^{(n_T)}(\boldsymbol{\lambda}|D^{(n_T)})) - R(g^*) \right )^{1/2\kappa}.
	\label{eq:cv_assump2_app}
	\end{align}
\end{assump}

Our theorem relies on the basic inequality established in Lemma 3.1 in \citet{lecue2012oracle}.
We reproduce it here for convenience.
From henceforth, $c_i > 0$ denotes absolute constants, that may not necessarily be the same if they share the same subscript.
\begin{lemma}
	For any constant $a >0$, we have the following inequality
	\begin{align}
	\begin{split}
	\mathbb{E}_{D^{(n)}} \left(
	R(\bar g^{(n)}(D^{(n)})) - R(g^*) 
	\right)
	& \le
	(1 + a)
	\inf_{\lambda \in \Lambda}
	\left[
	\mathbb{E}_{D^{(n_V)}}
	R(\hat{g}^{(n_V)}(\boldsymbol{\lambda}| D^{(n_V)})) - R(g^*)
	\right]\\
	& \quad +
	\mathbb{E}_{D^{(n)}}
	\sup_{\lambda\in \Lambda}
	\left[
	(P - (1 + a)P_{n_V})
	\left(
	Q(\cdot, \hat{g}^{(n_T)}(\boldsymbol{\lambda} | D^{(n_T)}))
	- Q(\cdot, g^*)
	\right )
	\right]
	\end{split}
	\label{eq:basic_ineq_cv}
	\end{align}
	where $P_{n_V} = 1/n_V \sum_{i=n_T +1}^n \delta_{Z_i}$ is the empirical probability measure on $\{Z_{n_T + 1}, ..., Z_n\}$.
	\label{lemma:lecue_basic_ineq}
\end{lemma}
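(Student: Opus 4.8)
The plan is to reproduce the proof of Lemma~3.1 in \citet{lecue2012oracle}, specialized to the averaged cross-validation estimator $\bar g^{(n)}(D^{(n)}) = \frac1K\sum_{k=1}^K \hat g^{(n_T)}(\hat{\boldsymbol\lambda}\mid D^{(n_T)}_{-k})$. The starting point is convexity of the risk: since $\bar g^{(n)}(D^{(n)})$ is a convex combination of the $K$ per-fold fitted functions and $R$ is assumed convex, Jensen's inequality gives $R(\bar g^{(n)}(D^{(n)})) - R(g^*) \le \frac1K\sum_{k=1}^K\big[R(\hat g^{(n_T)}(\hat{\boldsymbol\lambda}\mid D^{(n_T)}_{-k})) - R(g^*)\big]$, so it suffices to control the average of the per-fold excess risks. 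Abbreviate $\hat g_k(\boldsymbol\lambda)\coloneqq\hat g^{(n_T)}(\boldsymbol\lambda\mid D^{(n_T)}_{-k})$ and let $P^{(k)}_{n_V}$ be the empirical measure on the $k$-th held-out fold.

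For each $k$ I would write the excess risk as $R(\hat g_k(\hat{\boldsymbol\lambda})) - R(g^*) = P\big(Q(\cdot,\hat g_k(\hat{\boldsymbol\lambda})) - Q(\cdot,g^*)\big)$ and then add and subtract $(1+a)P^{(k)}_{n_V}$ applied to the same centered loss. This splits the $k$-th term into a fluctuation piece $\big(P-(1+a)P^{(k)}_{n_V}\big)\big(Q(\cdot,\hat g_k(\hat{\boldsymbol\lambda})) - Q(\cdot,g^*)\big)$, bounded above by $\sup_{\boldsymbol\lambda\in\Lambda}\big(P-(1+a)P^{(k)}_{n_V}\big)\big(Q(\cdot,\hat g_k(\boldsymbol\lambda)) - Q(\cdot,g^*)\big)$, plus a data-fit piece $(1+a)P^{(k)}_{n_V}\big(Q(\cdot,\hat g_k(\hat{\boldsymbol\lambda})) - Q(\cdot,g^*)\big)$. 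Summing the data-fit pieces over $k$ and invoking the defining optimality of $\hat{\boldsymbol\lambda}$ --- namely that it minimizes $\frac1K\sum_k P^{(k)}_{n_V}Q(\cdot,\hat g_k(\boldsymbol\lambda))$ --- lets me replace $\hat{\boldsymbol\lambda}$ by any fixed comparator $\boldsymbol\lambda'\in\Lambda$, giving $\frac{1+a}{K}\sum_k P^{(k)}_{n_V}\big(Q(\cdot,\hat g_k(\boldsymbol\lambda')) - Q(\cdot,g^*)\big)$ as an upper bound for the averaged data-fit contribution.

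The last step is to take $\mathbb{E}_{D^{(n)}}$. The held-out fold is independent of the training portion $D^{(n_T)}_{-k}$ on which $\hat g_k(\boldsymbol\lambda')$ was fit, so conditioning on $D^{(n_T)}_{-k}$ yields $\mathbb{E}\big[P^{(k)}_{n_V}Q(\cdot,\hat g_k(\boldsymbol\lambda'))\big] = \mathbb{E}\big[R(\hat g_k(\boldsymbol\lambda'))\big]$; since every $D^{(n_T)}_{-k}$ is distributed as a generic $n_T$-sample dataset, the averaged comparator term equals $(1+a)\mathbb{E}_{D^{(n_T)}}\big[R(\hat g^{(n_T)}(\boldsymbol\lambda'\mid D^{(n_T)})) - R(g^*)\big]$, and taking the infimum over $\boldsymbol\lambda'$ produces the oracle term. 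The expected fluctuation pieces are identically distributed over $k$ by exchangeability of the folds, so their average collapses into the single remainder $\mathbb{E}_{D^{(n)}}\sup_{\boldsymbol\lambda\in\Lambda}(P-(1+a)P_{n_V})\big(Q(\cdot,\hat g^{(n_T)}(\boldsymbol\lambda\mid D^{(n_T)})) - Q(\cdot,g^*)\big)$, which is exactly the inequality \eqref{eq:basic_ineq_cv}.

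I do not expect a genuine obstacle in this lemma: the core is an exact algebraic decomposition combined with one Jensen step and one independence step, all valid under the standing assumptions. The only points needing care are the measurability of $\hat{\boldsymbol\lambda}$ --- so that replacing it by $\sup_{\boldsymbol\lambda\in\Lambda}$ and interchanging $\sup$ with $\mathbb{E}$ are legitimate --- and the convexity of $R$ used in the first step. The genuinely hard analytic work, bounding $\mathbb{E}_{D^{(n)}}\sup_{\boldsymbol\lambda\in\Lambda}(P-(1+a)P_{n_V})(\cdot)$ via Talagrand's $\gamma_2$ functional together with Assumptions~\ref{assump:lipschitz} and \ref{assump:tail_margin_general}, is deferred to the subsequent theorem and is not part of this lemma.
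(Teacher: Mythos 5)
Your argument is correct and is exactly the standard proof of Lemma~3.1 in \citet{lecue2012oracle}, which the paper simply cites rather than reproducing: Jensen via convexity of $R$, the add-and-subtract of $(1+a)P^{(k)}_{n_V}$, the optimality of $\hat{\boldsymbol\lambda}$ for the averaged validation criterion, and independence plus fold-exchangeability to pass to expectations. The only discrepancy is that your derivation (correctly) yields the oracle term $\mathbb{E}_{D^{(n_T)}}\bigl[R(\hat{g}^{(n_T)}(\boldsymbol{\lambda}\mid D^{(n_T)}))\bigr]$ over models trained on $n_T$ samples, whereas the lemma as printed writes $\mathbb{E}_{D^{(n_V)}}R(\hat{g}^{(n_V)}(\boldsymbol{\lambda}\mid D^{(n_V)}))$ --- this appears to be a typo in the paper, since the downstream Theorem~\ref{thrm:jean_cv} and Theorem~\ref{thrm:kfold} both use the $n_T$ version.
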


We need to bound the supremum of the second term on the right hand side, which is a shifted empirical process term.
Lemma 3.4 in \citet{lecue2012oracle} already bounds the shifted empirical process term.
However to extend their result to our purposes, we restate it to clarify the conditional dependencies.
This allows us to introduce two new functions $h$ and $J_\delta$ that will be used later on.
\begin{lemma}
	Let $\mathcal{Q}(D^{(m)})\equiv\left\{ Q(\lambda|D^{(m)}):\lambda\in\Lambda\right\} $
	and $\mathcal{Q}\equiv\cup_{m\in\mathbb{N}}\cup_{D^{(m)}}\mathcal{Q}(D^{(m)})$.
	Suppose there exists $C_{1}>0$ and an increasing function $G(\cdot)$
	such that $\forall Q\in\mathcal{Q}$, 
	\[
	\|Q(Z)\|_{L_{2}}\le G\left(\mathbb{E}Q(Z)\right).
	\]
	Let $n_{T},n_{V}\in\mathbb{N}$.
	Suppose there exists a function $h$ that maps training data $D^{(n_T)}$ to $\mathbb{R}^+$,
	a function $J_\delta :\mathbb{R}^+ \mapsto \mathbb{R}^+$ indexed by $\delta > 0$,
	and a constant $w_{\min}>0$ such that for any dataset $D^{(n_{T})}$ and any $w \ge w_{\min}$,
	\begin{align}
	h(D^{(n_{T})})\le\delta\implies\frac{\log n_{V}}{\sqrt{n_{V}}}\gamma_{1}\left(\mathcal{Q}_{w}^{L_{2}}(D^{(n_{T})}),\|\cdot\|_{L_{\psi_{1}}}\right)+\gamma_{2}\left(\mathcal{Q}_{w}^{L_{2}}(D^{(n_{T})}),\|\cdot\|_{L_{2}}\right)\le J_{\delta}(w)
	\end{align}
	where $\mathcal{Q}_{w}^{L_{2}}(D^{(n_{T})})\equiv\left\{ Q\in\mathcal{Q}(D^{(n_{T})}):\|Q(Z)\|_{L_{2}}\le G(w)\right\}$.
	
	Then there exists absolute constants $L,c>0$ such that for all
	$w\ge w_{\min}$ and all $u\ge1$,
	\begin{align}
	\Pr\left(
	\sup_{Q\in\mathcal{Q}(D^{(n_{T})}): PQ \le w}
	\left(\left({P}-P_{n_{V}}\right)Q\right)_{+}
	\le uL\frac{J_{\delta}(w)}{\sqrt{n_{V}}}
	\middle | h\left(D^{(n_{T})}\right)
	\le \delta
	\right)
	\ge
	1-L\exp(-cu).
	\end{align}
	\label{lemma:lecue_prelim}
\end{lemma}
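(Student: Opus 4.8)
The idea is to condition on the training fold $D^{(n_T)}$, which reduces the claim to a deviation bound for a shifted empirical process indexed by a \emph{fixed} class, and then to invoke the generic--chaining estimate that underlies Lemma~3.4 of \citet{lecue2012oracle}. First I would fix an arbitrary realization $D^{(n_T)} = d$ lying in the event $\{h(d) \le \delta\}$. By construction of the sample split the validation observations $Z_{n_T+1},\dots,Z_n$ are i.i.d.\ from $\mu$ and, being disjoint from the training fold, independent of $d$; hence conditionally on $d$ the class $\mathcal{Q}(d)$, the scalars $G(w)$, and the level sets $\mathcal{Q}_w^{L_2}(d)$ are all deterministic, while $P_{n_V}$ is the empirical measure of $n_V$ i.i.d.\ draws. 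Since $G$ is increasing and $\|Q(Z)\|_{L_2}\le G(PQ)$ for every $Q\in\mathcal{Q}$, we have $\{Q\in\mathcal{Q}(d): PQ\le w\}\subseteq\mathcal{Q}_w^{L_2}(d)$, so it suffices to control $\sup_{Q\in\mathcal{Q}_w^{L_2}(d)}\big((P-P_{n_V})Q\big)_+$.

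Next I would run, now with $d$ frozen, the concentration-plus-chaining argument that proves Lemma~3.4 of \citet{lecue2012oracle}. Talagrand's generic-chaining tail bound for an empirical process with mixed sub-Gaussian and sub-exponential increments controls $\sqrt{n_V}\,\mathbb{E}\sup_{Q\in\mathcal{Q}_w^{L_2}(d)}|(P-P_{n_V})Q|$ by a multiple of $\gamma_2\big(\mathcal{Q}_w^{L_2}(d),\|\cdot\|_{L_2}\big)+n_V^{-1/2}\log n_V\,\gamma_1\big(\mathcal{Q}_w^{L_2}(d),\|\cdot\|_{L_{\psi_1}}\big)$, and a Bousquet-type deviation inequality together with a dyadic peeling over the scale $w$ upgrades this to the stated high-probability bound, with the one-sided $(\cdot)_+$ restriction, the extra factor $u$, and \emph{absolute} constants $L,c$. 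On the event $h(d)\le\delta$ the hypothesis of the lemma gives $\frac{\log n_V}{\sqrt{n_V}}\gamma_1\big(\mathcal{Q}_w^{L_2}(d),\|\cdot\|_{L_{\psi_1}}\big)+\gamma_2\big(\mathcal{Q}_w^{L_2}(d),\|\cdot\|_{L_2}\big)\le J_\delta(w)$ for every $w\ge w_{\min}$, so the chaining bound collapses to $uL\,J_\delta(w)/\sqrt{n_V}$. Because $L$ and $c$ depend on neither $d$ nor $\mu$, this conditional bound holds pointwise over all $d$ in $\{h\le\delta\}$; integrating over such $d$ then yields the claimed conditional probability.

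The main obstacle is the core shifted-empirical-process estimate, i.e.\ the step that produces the raw $\gamma_1/\gamma_2$ bound with genuinely absolute constants. Here one must check that Talagrand's generic-chaining theorem applies to $\mathcal{Q}_w^{L_2}(d)$ (measurability and finiteness of the relevant $L_{\psi_1}$ and $L_2$ norms, which follow from Assumption~\ref{assump:tail_margin_general}), that the peeling over $w$ and the passage to the one-sided supremum incur only absolute multiplicative losses, and that the variance--mean relation furnished by $G$ suffices for the slicing. All of this is precisely the content of the argument in \citet{lecue2012oracle}; the only genuinely new ingredient is making the conditioning on $D^{(n_T)}$ explicit and observing that the independence of the validation fold lets their proof be carried out verbatim for each frozen $d$.
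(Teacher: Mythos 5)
Your proposal is correct and matches the paper's treatment: the paper gives no independent proof of this lemma, presenting it as a restatement of Lemma~3.4 of \citet{lecue2012oracle} with the conditioning on the training fold made explicit, which is exactly your strategy of freezing $D^{(n_T)}=d$ on the event $\{h(d)\le\delta\}$, noting the inclusion $\{Q: PQ\le w\}\subseteq\mathcal{Q}_w^{L_2}(d)$ via the increasing map $G$, running the cited chaining argument for the now-deterministic class, and integrating over $d$. The only cosmetic quibble is that the dyadic peeling over the level $w$ is not part of this lemma (it is fixed-$w$ here; the aggregation over levels is the job of the subsequent lemma), but this does not affect the validity of your argument.
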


Now that we have established a concentration inequality for the function class $\{Q\in\mathcal{Q}(D^{(n_{T})}): PQ \le w\}$, we need to aggregate the results to establish a concentration inequality for the function class $\mathcal{Q}(D^{(n_{T})})$.
Again, we use Lemma 3.2 in \citet{lecue2012oracle} but restate it using our new functions $h$ and $J_\delta$.
\begin{lemma}
	Let $a>0$. Let $\mathcal{Q}(D^{(m)})\equiv\left\{ Q(\lambda|D^{(m)}):\lambda\in\Lambda\right\} $
	be a set of measurable functions.
	For all $m\in\mathbb{N}$ and any dataset $D^{(m)}$, suppose $\mathbb{E}Q(Z)\ge 0 $ for all $Q\in\mathcal{Q}\left(D^{(m)}\right)$.
	
	Suppose for any $n_{T},n_{V}\in\mathbb{N}$ and dataset $D^{(n_{T})}$
	there exists some absolute constant $L,c>0$ such that for all $w\ge w_{\min}$
	and for all $u\ge1$,
	\[
	\Pr\left(
	\sup_{Q\in\mathcal{Q}(D^{(n_{T})}): PQ \le w}
	\left(\left({P}-P_{n_{V}}\right)Q\right)_{+}\le uL\frac{J_{\delta}(w)}{\sqrt{n_{V}}}
	\middle |
	h\left(D^{(n_{T})}\right)\le\delta\right)
	\ge 1-L\exp(-cu).
	\]
	For any $\delta > 0$, suppose $J_{\delta}$ is strictly increasing and its inverse is strictly convex.
	Let $\psi_{\delta}$ be the convex conjugate of $J_{\delta}^{-1}$,
	e.g. $\psi_{\delta}(u)=\sup_{v>0}uv-J_{\delta}^{-1}(v)$ for all $u>0$.
	Assume there is a $r\ge1$ such that $x>0\mapsto\psi_\delta(x)/x^{r}$ decreases.
	For all $q>1$ and $u\ge1$, define
	\[
	\tilde \psi_{q,\delta}(u)=\psi_{\delta}\left(\frac{2q^{r+1}(1+a)u}{a\sqrt{n_{V}}}\right)\vee w_{\min}.
	\]

	Then there exists a constant $L_{1}$ that only depends on $L$ such
	that for every $u\ge1$,
	\[
	\Pr\left(
	\sup_{Q\in\mathcal{Q}(D^{(n_{T})})}
	\left(\left({P}-(1+a)P_{n_{V}}\right)Q\right)_{+}
	\le \frac{a \tilde{\psi}_{q,\delta}(u/q)}{q}
	\middle | h\left(D^{(n_{T})}\right)\le\delta
	\right)
	\ge 1-L_{1}\exp(-cu).
	\]

	Moreover, assume that $\psi_{\delta}(x)$ is an increasing function in $x$ such that $\psi_{\delta}(\infty)=\infty$.
	Then there exists a constant $c_{1}$ that depends only on $L$ and $c$ such that
	\begin{align}
	\mathbb{E}\left[\sup_{Q\in\mathcal{Q}(D^{(n_{T})})}
	\left(\left(P-(1+a)P_{n_{V}}\right)Q\right)_{+}
	\middle |
	h\left(D^{(n_{T})}\right)\le\delta
	\right]
	\le\frac{ac_{1} \tilde{\psi}_{q,\delta}(1/q)}{q}.
	\label{eq:lecue_exp_cond}
	\end{align}
	\label{lemma:lecue_prelim2}
\end{lemma}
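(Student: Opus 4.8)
The plan is to adapt the slicing (``peeling'') proof of Lemma~3.2 in \citet{lecue2012oracle}, carrying the conditioning event $\{h(D^{(n_T)})\le\delta\}$ unchanged through every step; this is legitimate because the hypothesis of the lemma already supplies the single-slice concentration inequality \emph{conditionally} on that event. Throughout I treat $D^{(n_T)}$ as fixed (so $\mathcal{Q}(D^{(n_T)})$ is a fixed function class) and every probability and expectation below is conditional on $\{h(D^{(n_T)})\le\delta\}$. The starting point is the elementary identity
\[
\left(P-(1+a)P_{n_V}\right)Q=(1+a)\left(P-P_{n_V}\right)Q-a\,PQ,
\]
which, combined with $PQ\ge0$ for all $Q\in\mathcal{Q}(D^{(n_T)})$, shows that for any $Q$ the quantity $\big((P-(1+a)P_{n_V})Q\big)_+$ is controlled as soon as the empirical fluctuation $\big((P-P_{n_V})Q\big)_+$ is dominated by the penalty term $aPQ/(1+a)$.

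Next I would slice $\mathcal{Q}(D^{(n_T)})$ by the size of the expected loss. Fix $q>1$ and $u\ge1$, and put $A_0=\{Q:PQ<w_{\min}\}$ and $A_j=\{Q:w_{\min}q^{\,j-1}\le PQ<w_{\min}q^{\,j}\}$ for $j\ge1$. On slice $A_j$, apply the hypothesized bound with $w=w_{\min}q^{\,j}$ and with a confidence level that grows (slowly) with $j$, so that the union bound over the slices telescopes to a single term $L_1\exp(-cu)$; the slice $A_0$ contributes nothing beyond the $w_{\min}$ already present in the definition of $\tilde\psi_{q,\delta}$. On the resulting good event, for $Q\in A_j$ the identity above together with $PQ\ge w_{\min}q^{\,j-1}$ yields a bound of the form
\[
\big((P-(1+a)P_{n_V})Q\big)_+\le\Big((1+a)\,u_j\,L\,\tfrac{J_\delta(w_{\min}q^{\,j})}{\sqrt{n_V}}-a\,w_{\min}q^{\,j-1}\Big)_+,
\]
so that high slices, where the linear-in-$PQ$ penalty outweighs the (sub-linearly growing) complexity term $J_\delta$, contribute zero.

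The remaining, ``critical'' slices are handled by passing to the variable $v=J_\delta(w)$ (legitimate since $J_\delta$ is strictly increasing with strictly convex inverse) and recognizing $\sup_{v>0}\big((\text{linear in }v)-J_\delta^{-1}(v)\big)$ as the convex conjugate $\psi_\delta$ of $J_\delta^{-1}$; this identifies the largest surviving slice with (a constant multiple of) $a\,\psi_\delta\!\big(2q^{\,r+1}(1+a)u/(a\sqrt{n_V})\big)/q$. Here the hypothesis that $x\mapsto\psi_\delta(x)/x^{r}$ is decreasing is used twice: to absorb the constant $L$ and the bookkeeping powers of $q$ into the clean argument appearing in $\tilde\psi_{q,\delta}$ (via $\psi_\delta(\alpha x)\le\alpha^{r}\psi_\delta(x)$ for $\alpha\ge1$), and to control the geometric sum arising from the slices by its leading term. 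Taking the supremum over all slices and including the $A_0$ contribution through the ``$\vee\,w_{\min}$'' then gives the first displayed inequality,
\[
\Pr\!\left(\sup_{Q\in\mathcal{Q}(D^{(n_T)})}\big((P-(1+a)P_{n_V})Q\big)_+\le\tfrac{a\,\tilde\psi_{q,\delta}(u/q)}{q}\,\middle|\,h(D^{(n_T)})\le\delta\right)\ge1-L_1\exp(-cu).
\]

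Finally, for the expectation bound I would integrate this tail estimate: since $\psi_\delta$ is increasing with $\psi_\delta(\infty)=\infty$ and $\psi_\delta(x)/x^{r}$ is decreasing, the map $u\mapsto\tilde\psi_{q,\delta}(u/q)$ grows at most like $u^{r}$, so
\[
\mathbb{E}\!\left[\sup_{Q}\big((P-(1+a)P_{n_V})Q\big)_+\,\middle|\,h\le\delta\right]\le\tfrac{a\,\tilde\psi_{q,\delta}(1/q)}{q}+\int_1^{\infty}\tfrac{a\,\tilde\psi_{q,\delta}(u/q)}{q}\,L_1c\,e^{-cu}\,du,
\]
and the integral is a constant multiple of $a\,\tilde\psi_{q,\delta}(1/q)/q$ by the polynomial-growth bound, which gives the claimed form with a constant $c_1$ depending only on $L$ and $c$. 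The main obstacle is precisely the constant- and exponent-tracking across the slices: choosing the per-slice confidence levels so the union bound still collapses to $L_1\exp(-cu)$, and then converting the per-slice expression $(1+a)u_jLJ_\delta(w_{\min}q^{\,j})/\sqrt{n_V}-a\,w_{\min}q^{\,j-1}$ into exactly $a\,\tilde\psi_{q,\delta}(u/q)/q$, a step in which the convex-conjugate identity and all three monotonicity hypotheses ($J_\delta$ strictly increasing, $J_\delta^{-1}$ strictly convex, $\psi_\delta(x)/x^{r}$ decreasing) must be used in concert; none of it is conceptually deep, but it is where all the care lies.
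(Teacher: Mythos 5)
Your proposal is correct and matches the paper's approach: the paper gives no independent proof of this lemma, simply invoking Lemma~3.2 of \citet{lecue2012oracle} and asserting that its peeling/convex-conjugate argument goes through verbatim when every probability is taken conditionally on $\{h(D^{(n_T)})\le\delta\}$, which is exactly the adaptation you sketch. The slicing by $PQ$, the identity $(P-(1+a)P_{n_V})Q=(1+a)(P-P_{n_V})Q-aPQ$, the conjugate step, and the tail integration are all the ingredients of that cited proof, so there is no gap beyond the constant bookkeeping you already flag.
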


Finally, we are ready to bound the expectation of the shifted empirical process term in \eqref{eq:basic_ineq_cv}.
We accomplish this via a simple chaining argument; we omit its proof as this is a standard application of the chaining argument.
\begin{lemma}
	\label{lemma:chain}
	Consider any $a>0$.
	Suppose there exists a constant $c_{1}$ such that for any $n_{T},n_{V} \in \mathbb{N}$, $\delta>0$,
	and $q>1$, \eqref{eq:lecue_exp_cond} holds.
	Then for any $\sigma > 0$, we have
	\[
	\mathbb{E}\left[\sup_{Q\in\mathcal{Q}(D^{(n_{T})})}\left(\left({P}-(1+a)P_{n_{V}}\right)Q\right)_{+}\right]
	\le
	\frac{ac_1}{q}
	\left(
	\tilde{\psi}_{q,2\sigma}(1/q)
	+\sum_{k=1}^{\infty}\Pr\left(h\left(D^{(n_{T})}\right)\ge2^{k}\sigma\right)
	\tilde{\psi}_{q,2^{k}\sigma}(1/q)
	\right)
	.
	\]
\end{lemma}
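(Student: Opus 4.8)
The plan is a dyadic peeling over the random complexity level $h(D^{(n_{T})})$. The one substantive step is to use~\eqref{eq:lecue_exp_cond} in the form its derivation actually yields, namely conditionally on each \emph{fixed} training set: for every $D^{(n_{T})}$ and every $\delta\ge h(D^{(n_{T})})$,
\begin{equation}
\mathbb{E}_{D^{(n_{V})}}\!\left[\,\sup_{Q\in\mathcal{Q}(D^{(n_{T})})}\bigl((P-(1+a)P_{n_{V}})Q\bigr)_{+}\,\middle|\,D^{(n_{T})}\right]\le\frac{ac_{1}}{q}\,\tilde{\psi}_{q,\delta}(1/q),
\label{eq:percond}
\end{equation}
the event $\{h(D^{(n_{T})})\le\delta\}$ entering only to certify that $J_{\delta}$ dominates the complexity of $\mathcal{Q}(D^{(n_{T})})$, after which all remaining randomness lives in $D^{(n_{V})}$. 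This per-training-set reading is exactly what produces the tail weights $\Pr(h\ge 2^{k}\sigma)$ in the conclusion; the purely averaged bound $\mathbb{E}[X\mid h\le\delta]\le\frac{ac_{1}}{q}\tilde{\psi}_{q,\delta}(1/q)$ by itself is too weak, since a rare training set on which the shifted empirical process is huge can inflate $\mathbb{E}[X]$ while remaining invisible in $\mathbb{E}[X\mid h\le\delta]$ at moderate $\delta$.

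Second, I would record that $\delta\mapsto\tilde{\psi}_{q,\delta}(1/q)$ is nondecreasing: replacing $J_{\delta}$ by $\sup_{\delta'\le\delta}J_{\delta'}$ if needed makes $J_{\delta}$ nondecreasing in $\delta$ (enlarging $\delta$ only enlarges the family of training sets the defining supremum must cover), whence $J_{\delta}^{-1}$ is nonincreasing, its convex conjugate $\psi_{\delta}$ is nondecreasing, and $\tilde{\psi}_{q,\delta}=\psi_{\delta}(\cdot)\vee w_{\min}$ inherits this. So for each $D^{(n_{T})}$ I may use in~\eqref{eq:percond} the smallest dyadic scale dominating $h(D^{(n_{T})})$: take $\delta(D^{(n_{T})})=2\sigma$ if $h(D^{(n_{T})})\le 2\sigma$, and $\delta(D^{(n_{T})})=2^{k+1}\sigma$ if $2^{k}\sigma<h(D^{(n_{T})})\le 2^{k+1}\sigma$, $k\ge1$.

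The remainder is routine bookkeeping. Writing $X$ for the supremum in~\eqref{eq:percond}, the tower property together with the independence of $D^{(n_{T})}$ and $D^{(n_{V})}$ gives
\begin{equation*}
\mathbb{E}_{D^{(n)}}[X]=\mathbb{E}_{D^{(n_{T})}}\mathbb{E}_{D^{(n_{V})}}[X\mid D^{(n_{T})}]\le\frac{ac_{1}}{q}\,\mathbb{E}_{D^{(n_{T})}}\!\bigl[\tilde{\psi}_{q,\delta(D^{(n_{T})})}(1/q)\bigr].
\end{equation*}
Splitting the final expectation over the shells $\{h\le 2\sigma\}$ (probability at most $1$, contributing $\le\tilde{\psi}_{q,2\sigma}(1/q)$) and $\{2^{k}\sigma<h\le 2^{k+1}\sigma\}$ for $k\ge1$ (probability at most $\Pr(h>2^{k}\sigma)$, contributing $\le\Pr(h>2^{k}\sigma)\,\tilde{\psi}_{q,2^{k+1}\sigma}(1/q)$), summing, and reindexing the dyadic levels — absorbing the resulting one-step scale shift into the constant, using monotonicity of $\tilde{\psi}$ — yields
\begin{equation*}
\mathbb{E}_{D^{(n)}}[X]\le\frac{ac_{1}}{q}\left(\tilde{\psi}_{q,2\sigma}(1/q)+\sum_{k=1}^{\infty}\Pr\bigl(h(D^{(n_{T})})\ge 2^{k}\sigma\bigr)\,\tilde{\psi}_{q,2^{k}\sigma}(1/q)\right),
\end{equation*}
which is the claim (vacuously true if the series diverges).

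The hard part is really only the first step: one must be careful that~\eqref{eq:lecue_exp_cond} is available per training set, not merely in averaged form, since it is the per-training-set version that lets the probabilities $\Pr(h\ge 2^{k}\sigma)$ enter. Granting that, the dyadic decomposition, the tower property, and the monotonicity of $\tilde{\psi}_{q,\delta}$ in $\delta$ are exactly the standard chaining steps that the paper elects to omit.
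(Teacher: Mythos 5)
The paper omits the proof of this lemma entirely (it is dismissed as ``a standard application of the chaining argument''), so there is no in-paper argument to compare against; your dyadic peeling over the level sets of $h(D^{(n_T)})$ is exactly the intended standard argument. Your central observation is correct and worth making explicit: the averaged conditional bound $\mathbb{E}[X\mid h\le\delta]\le \frac{ac_1}{q}\tilde\psi_{q,\delta}(1/q)$ alone cannot produce the tail weights $\Pr(h\ge 2^k\sigma)$, because for nonnegative $X$ the shell term $\mathbb{E}[X\,\mathbf{1}(2^k\sigma<h\le 2^{k+1}\sigma)]$ can only be bounded by $\mathbb{E}[X\,\mathbf{1}(h\le 2^{k+1}\sigma)]$, which carries no small probability factor and yields a divergent sum. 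One genuinely needs the bound to hold conditionally on each fixed training set with $h(D^{(n_T)})\le\delta$, which is what the underlying concentration argument of Lemma~\ref{lemma:lecue_prelim2} (a bound on the validation-sample empirical process with the training data frozen) actually delivers; then the tower property over $D^{(n_T)}$ gives the shell probabilities.

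The one place your bookkeeping is not quite right is the final reindexing. Your shells force you to evaluate $\tilde\psi$ at the scale $2^{k+1}\sigma$ while the stated bound has $2^k\sigma$, and since $\delta\mapsto\tilde\psi_{q,\delta}(1/q)$ is \emph{nondecreasing}, monotonicity runs the wrong way: it cannot convert $\tilde\psi_{q,2^{k+1}\sigma}$ into a constant times $\tilde\psi_{q,2^{k}\sigma}$, and the alternative reindexing $k\mapsto k-1$ instead inflates the probability factor from $\Pr(h\ge 2^k\sigma)$ to $\Pr(h\ge 2^{k-1}\sigma)$. What you actually need is a doubling-type property, $\tilde\psi_{q,2\delta}(1/q)\le C\,\tilde\psi_{q,\delta}(1/q)$, which does hold in the paper's application because $\delta$ enters $\mathcal{J}_\delta$ (see \eqref{eq:j_delta}) only through $\log(\Delta_\Lambda\delta\,c+1)$; but then the constant in the conclusion is no longer literally the hypothesis constant $c_1$. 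Given that the lemma as stated reuses $c_1$ and the paper's subsequent use in \eqref{eq:sum_prob_bound} absorbs everything into fresh constants anyway, this is a cosmetic defect of the statement as much as of your proof, but you should either assume the doubling property explicitly or state the conclusion with $\tilde\psi_{q,2^{k+1}\sigma}(1/q)$ in the sum.
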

Putting Lemmas \ref{lemma:lecue_basic_ineq} and \ref{lemma:chain} together, we have the following result.
\begin{theorem}
	\label{thrm:jean_cv}
	Consider a set of hyper-parameters $\Lambda$. Consider a loss function $Q:(\mathcal{Z}, \mathcal{G}) \mapsto \mathbb{R}$ with convex risk function $R: \mathcal{G} \mapsto \mathbb{R}$. Let
	$$
	\mathcal{Q} = \{ 
	Q(\cdot, \hat{g}^{(n_T)}(\boldsymbol{\lambda} | D^{(n_T)}) - Q(\cdot, g^*) : \boldsymbol{\lambda} \in \Lambda \}.
	$$
	Suppose Assumption~\ref{assump:tail_margin_general} holds.
	Suppose there is an $w_{\min} > 0$ and
	functions $h: {\mathcal{Z}}^{(n_T)} \mapsto \mathbb{R}$
	and $\mathcal{J}_\delta: \mathbb{R}\mapsto \mathbb{R}$ such that
	for all $w \ge w_{\min}$,
	\begin{align}
	\label{eq:h_to_J}
	h(D^{(n_{T})})\le\delta\implies
	\frac{\log n_{V}}{\sqrt{n_{V}}}
	\gamma_{1}\left(\mathcal{Q}_{w}^{L_{2}}(D^{(n_{T})}),\|\cdot\|_{L_{\psi_{1}}}\right)
	+\gamma_{2}\left(\mathcal{Q}_{w}^{L_{2}}(D^{(n_{T})}),\|\cdot\|_{L_{2}}\right)\le \mathcal{J}_{\delta}(w)
	\end{align}
	where $\mathcal{Q}_w = \{Q \in \mathcal{Q}: \| Q \|_{L_2} \le w^{1/2\kappa} \}$.
	Moreover, suppose that for all $\delta > 0$, $J_\delta$ is a strictly increasing function and $\mathcal{J}_\delta^{-1}(\epsilon)$ is strictly convex.
	Let the convex conjugate of $\mathcal{J}^{-1}_\delta$ be denoted $\psi_\delta$.
	Suppose $\psi_\delta(x)$ increases in $x$, $\psi_\delta(\infty ) = \infty$, and there exists $r \ge 1$ such that $\psi_\delta(x)/x^r$ decreases.
	
	Consider any $\sigma > 0$. Then there is a constant $c > 0$ such that for every $a > 0$ and $q > 1$, the following inequality holds
	\begin{align}
	\begin{split}
	\mathbb{E}_{D^{(n)}} 
	\left(
	R\left(\bar{g} ( {D^{(n)}} ) \right )
	- R (g^*)
	\right)
	&\le
	(1+a) \inf_{\boldsymbol{\lambda} \in \Lambda} 
	\mathbb{E}_{D^{(n_T)}}
	\left(
	R\left(\bar{g} ( \hat{\boldsymbol \lambda} | {D^{(n)}} ) \right )
	- R (g^*)
	\right)
	\\
	& +
	\frac{ac}{q}
	\left(
	\tilde{\psi}_{q,2\sigma}(1/q)
	+\sum_{k=1}^{\infty}
	\Pr\left(h\left(D^{(n_{T})}\right)\ge2^{k}\sigma\right)
	\tilde{\psi}_{q,2^{k}\sigma}(1/q)
	\right).
	\label{eq:cv_oracle_ineq}
	\end{split}
	\end{align}
	where $\tilde{\psi}_{q, \delta}(u) = \psi_\delta\left(\frac{2q^{r+1}(1 + a)u}{a\sqrt{n_V}}\right) \vee w_{\min}$ for all $u > 0$.
\end{theorem}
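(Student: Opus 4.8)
The plan is to assemble the four preparatory results---Lemmas~\ref{lemma:lecue_basic_ineq}, \ref{lemma:lecue_prelim}, \ref{lemma:lecue_prelim2}, and \ref{lemma:chain}---into a single chain, with Assumption~\ref{assump:tail_margin_general} and the hypothesis \eqref{eq:h_to_J} supplying the data-dependent complexity inputs. First I would invoke the basic inequality of Lemma~\ref{lemma:lecue_basic_ineq}, which reduces $\mathbb{E}_{D^{(n)}}(R(\bar g(D^{(n)})) - R(g^*))$ to $(1+a)\inf_{\boldsymbol\lambda}\mathbb{E}_{D^{(n_T)}}(R(\hat g^{(n_T)}(\boldsymbol\lambda|D^{(n_T)})) - R(g^*))$ plus the expected shifted empirical process $\mathbb{E}_{D^{(n)}}\sup_{\boldsymbol\lambda\in\Lambda}(P - (1+a)P_{n_V})(Q(\cdot,\hat g^{(n_T)}(\boldsymbol\lambda|D^{(n_T)})) - Q(\cdot,g^*))$. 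Bounding that last supremum by its positive part $\sup_{\boldsymbol\lambda}(\,\cdot\,)_+$ leaves exactly the quantity that Lemma~\ref{lemma:chain} controls.

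The middle of the argument is verifying the hypotheses of Lemmas~\ref{lemma:lecue_prelim} and \ref{lemma:lecue_prelim2}. Assumption~\ref{assump:tail_margin_general}\eqref{eq:cv_assump2_app} furnishes the growth function $G(w) = K_1 w^{1/2\kappa}$ with $\|Q(Z)\|_{L_2}\le G(\mathbb{E}Q(Z))$ for every $Q\in\mathcal{Q}$, so that (after a harmless $K_1^{2\kappa}$ rescaling of the radius) the sets $\mathcal{Q}_w^{L_2}(D^{(n_T)})$ appearing in Lemma~\ref{lemma:lecue_prelim} are the sets $\mathcal{Q}_w$ of the theorem; the implication \eqref{eq:h_to_J} then provides precisely the pair $(h,\mathcal{J}_\delta)$ that Lemma~\ref{lemma:lecue_prelim} requires, and Assumption~\ref{assump:tail_margin_general}\eqref{eq:cv_assump1_app} supplies the uniform $L_{\psi_1}$ bound $K_0$ used there. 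Applying Lemma~\ref{lemma:lecue_prelim} gives the conditional deviation inequality $\Pr(\sup_{Q:PQ\le w}((P-P_{n_V})Q)_+\le uL\mathcal{J}_\delta(w)/\sqrt{n_V}\mid h(D^{(n_T)})\le\delta)\ge 1-L\exp(-cu)$. Forming the convex conjugate $\psi_\delta$ of $\mathcal{J}_\delta^{-1}$ and using the monotonicity, convexity, and growth conditions on $\psi_\delta$ assumed in the theorem, Lemma~\ref{lemma:lecue_prelim2} upgrades this to $\mathbb{E}[\sup_{Q\in\mathcal{Q}(D^{(n_T)})}((P-(1+a)P_{n_V})Q)_+\mid h(D^{(n_T)})\le\delta]\le a c_1\tilde\psi_{q,\delta}(1/q)/q$ for all $q>1$; here $\mathbb{E}Q(Z) = R(\hat g)-R(g^*)\ge 0$ by convexity of the risk and optimality of $g^*$, as that lemma demands.

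Finally I would discharge the conditioning with the chaining step, Lemma~\ref{lemma:chain}: peeling on the dyadic scales $2^k\sigma$ of $h(D^{(n_T)})$ turns the conditional bound into $\mathbb{E}_{D^{(n)}}\sup_Q((P-(1+a)P_{n_V})Q)_+\le \frac{ac}{q}(\tilde\psi_{q,2\sigma}(1/q)+\sum_{k\ge1}\Pr(h(D^{(n_T)})\ge 2^k\sigma)\,\tilde\psi_{q,2^k\sigma}(1/q))$ for any $\sigma>0$, and substituting this into the output of the first step yields \eqref{eq:cv_oracle_ineq}. Because Lemmas~\ref{lemma:lecue_basic_ineq}--\ref{lemma:chain} are taken as given, the proof is largely bookkeeping; the only genuinely delicate point---and the reason this is a strict extension of Theorem~3.5 of \citet{lecue2012oracle} rather than a citation---is that the complexity functional $\mathcal{J}_\delta$ (hence $\psi_\delta$ and $\tilde\psi_{q,\delta}$) depends on the realized training sample through $h$, so every probability and expectation must be carried as conditional on $D^{(n_T)}$ until the chaining legitimately trades the data-dependent control for an unconditional one. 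I expect that conditional-dependency tracking, together with matching the $(h,\mathcal{J}_\delta,\psi_\delta)$ of \eqref{eq:h_to_J} to the $(G,w_{\min})$ normalization required by Lemmas~\ref{lemma:lecue_prelim} and \ref{lemma:lecue_prelim2}, to be the main obstacle.
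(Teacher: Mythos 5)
Your proposal is correct and follows essentially the same route as the paper, which proves Theorem~\ref{thrm:jean_cv} by combining the basic inequality of Lemma~\ref{lemma:lecue_basic_ineq} with the conditional deviation and aggregation bounds of Lemmas~\ref{lemma:lecue_prelim} and~\ref{lemma:lecue_prelim2}, discharged via the dyadic peeling over $h(D^{(n_T)})$ in Lemma~\ref{lemma:chain}. Your identification of the data-dependent complexity functional $\mathcal{J}_\delta$ (and hence the conditional bookkeeping) as the genuine point of departure from Theorem~3.5 of \citet{lecue2012oracle} matches the paper's own framing.
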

Of course, this theorem is only useful if we can show that $h(D^{(n_T)})$ is bounded with high probability.
For instance, in an example in the main manuscript, we show that $h(D^{(n_T)})$ has sub-exponential tails; so the latter term in \eqref{eq:cv_oracle_ineq} is well-controlled.

We now apply Theorem~\ref{thrm:jean_cv} to prove Theorem~\ref{thrm:kfold}.
Recall that Theorem~\ref{thrm:kfold} concerns the squared error loss $Q((x,y), g) = (y - g(x))^2$ and only considers model-estimation methods where the estimated functions are Lipschitz in the hyper-parameters.
First we need the following lemma that describes the relationship between Lipschitz functions
\begin{lemma}
	Suppose the same conditions as Theorem~\ref{thrm:jean_cv}.
	Suppose Assumptions~\ref{assump:lipschitz} and \ref{assump:tail_margin} hold.
	Also suppose that $\|\epsilon\|_{L_{\psi_2}} = b <\infty$.
	Define 
	$\mathcal{Q}_{w}^{L_{2}} = \{g^* - \hat{g}(\boldsymbol{\lambda}|D^{(n_{T})}) : P (g^* - \hat{g}(\boldsymbol{\lambda}|D^{(n_{T})}))^2 < w\}$
	for $w > 0$.
	Then there is an absolute constant $c_0 > 0$ such that
	\begin{align}
	N\left(\mathcal{Q}_{w}^{L_{2}}(D^{(n_{T})}),u,\|\cdot\|_{L_{2}}\right)\le N\left(\Lambda,\frac{u}{c_0 \left(b +\sqrt{w}\right)
		\|C_\Lambda(x|D^{(n_{T})})\|_{L_{2}}},\|\cdot\|_{2}\right).
	\end{align}
	then we also have
	\begin{align}
	N\left(\mathcal{Q}_{w}^{L_{2}}(D^{(n_{T})}),u,\|\cdot\|_{L_{\psi_{1}}}\right)
	\le N\left(
	\Lambda,
	\frac{u}{c_{K_0, b}\|C_\Lambda(x|D^{(n_{T})})\|_{L_{\psi_{2}}}},\|\cdot\|_{2}\right)
	\end{align}
	for a constant $c_{K_0, b} > 0$ that only depends on $K_0$ and $b$.
	\label{lemma:covering_lipschitz}
\end{lemma}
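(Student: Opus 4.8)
The plan is to transport a Euclidean $\eta$-net of the hyper-parameter cube $\Lambda$ forward to a net of the function class, using Assumption~\ref{assump:lipschitz} to see how the Lipschitz constant is felt in the $L_2(\mu)$ and $L_{\psi_1}$ norms. Fix $\eta>0$ and let $\{\boldsymbol{\lambda}_1,\dots,\boldsymbol{\lambda}_N\}$ realize $N=N(\Lambda,\eta,\|\cdot\|_2)$. Every member of $\mathcal{Q}_w^{L_2}(D^{(n_T)})$ is indexed by some admissible $\boldsymbol{\lambda}$ (one with $P(g^*-\hat{g}(\boldsymbol{\lambda}|D^{(n_T)}))^2<w$); choosing $\boldsymbol{\lambda}_k$ with $\|\boldsymbol{\lambda}-\boldsymbol{\lambda}_k\|_2\le\eta$ and, if necessary, replacing $\boldsymbol{\lambda}_k$ by the admissible parameter nearest to it (which only costs absolute constants), the whole argument reduces to bounding, for admissible $\boldsymbol{\lambda}^{(1)},\boldsymbol{\lambda}^{(2)}\in\Lambda$, the distance between the two corresponding members of $\mathcal{Q}_w^{L_2}$ in terms of $\|\boldsymbol{\lambda}^{(1)}-\boldsymbol{\lambda}^{(2)}\|_2$. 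Reading the elements of $\mathcal{Q}_w^{L_2}$ as the loss-difference functions $(y-\hat{g}(\boldsymbol{\lambda}))^2-(y-g^*)^2$ (for the plain fitted-value class the argument below is the same but simpler) and writing $y=g^*+\epsilon$, that difference factors as $\big(\hat{g}(\boldsymbol{\lambda}^{(2)}|D^{(n_T)})-\hat{g}(\boldsymbol{\lambda}^{(1)}|D^{(n_T)})\big)\cdot\big(2\epsilon+(g^*-\hat{g}(\boldsymbol{\lambda}^{(1)}))+(g^*-\hat{g}(\boldsymbol{\lambda}^{(2)}))\big)$, so the job is to control the two factors separately.

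For the $L_2$ inequality, Assumption~\ref{assump:lipschitz} controls the first factor pointwise by $C_\Lambda(x|D^{(n_T)})\|\boldsymbol{\lambda}^{(1)}-\boldsymbol{\lambda}^{(2)}\|_2$. Conditioning on $D^{(n_T)}$ and using that $\epsilon$ is independent of $X$ with $\|\epsilon\|_{L_2}\lesssim b$, together with the membership constraint $\|g^*-\hat{g}(\boldsymbol{\lambda}^{(i)})\|_{L_2}<\sqrt{w}$, the second factor has $L_2(\mu)$-size $O(b+\sqrt{w})$; pairing the two via Cauchy--Schwarz gives an $L_2$-Lipschitz constant $c_0(b+\sqrt{w})\|C_\Lambda(\cdot|D^{(n_T)})\|_{L_2}$ for an absolute $c_0$. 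Thus an $\eta$-net of $\Lambda$ induces a $c_0(b+\sqrt{w})\|C_\Lambda\|_{L_2}\eta$-net of $\mathcal{Q}_w^{L_2}$; setting $u=c_0(b+\sqrt{w})\|C_\Lambda\|_{L_2}\eta$ and solving for $\eta$ yields the first claimed bound.

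For the $L_{\psi_1}$ inequality I would repeat the decomposition but measure the two factors in Orlicz norms and combine them with the product rule $\|UV\|_{L_{\psi_1}}\lesssim\|U\|_{L_{\psi_2}}\|V\|_{L_{\psi_2}}$. The first factor has $\|\,\cdot\,\|_{L_{\psi_2}}\le\|C_\Lambda(\cdot|D^{(n_T)})\|_{L_{\psi_2}}\|\boldsymbol{\lambda}^{(1)}-\boldsymbol{\lambda}^{(2)}\|_2$ by Assumption~\ref{assump:lipschitz}, and the second factor has $\|\,\cdot\,\|_{L_{\psi_2}}\lesssim b+\sup_{\boldsymbol{\lambda}}\|g^*-\hat{g}(\boldsymbol{\lambda})\|_{L_{\psi_2}}$; the last supremum is $O(\sqrt{K_0})$ since, conditioning on $X$ and using $\mathbb{E}[\epsilon\mid X]=0$, the first inequality of Assumption~\ref{assump:tail_margin} forces $\|(g^*-\hat{g}(\boldsymbol{\lambda}))^2\|_{L_{\psi_1}}\le K_0$, i.e.\ $\|g^*-\hat{g}(\boldsymbol{\lambda})\|_{L_{\psi_2}}^2\lesssim K_0$. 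Absorbing $b$ and $K_0$ into a constant $c_{K_0,b}$ gives an $L_{\psi_1}$-Lipschitz constant $c_{K_0,b}\|C_\Lambda(\cdot|D^{(n_T)})\|_{L_{\psi_2}}$, and the same net-transfer step produces the second bound.

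The delicate step is the norm control of the product of $C_\Lambda(X)$ with the residual-type factor. It rests on $\epsilon\perp X$, on absorbing the $g^*-\hat{g}(\boldsymbol{\lambda})$ pieces via the membership constraint (resp.\ via the uniform sub-Gaussian bound extracted from Assumption~\ref{assump:tail_margin}), and --- in the $L_2$ case --- on enough joint integrability that $\|C_\Lambda\,(g^*-\hat{g}(\boldsymbol{\lambda}))\|_{L_2}\lesssim\sqrt{w}\,\|C_\Lambda\|_{L_2}$ (automatic when the fitted values are uniformly bounded, as in the thresholded examples of Section~\ref{sec:examples}). Everything else is the routine covering-number bookkeeping already carried out in Lemma~\ref{lemma:covering_cube}.
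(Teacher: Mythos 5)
Your proposal is correct and follows essentially the same route as the paper's proof: factor the loss difference as $\bigl(\hat{g}(\boldsymbol{\lambda}^{(2)})-\hat{g}(\boldsymbol{\lambda}^{(1)})\bigr)\bigl(2\epsilon+(g^*-\hat{g}(\boldsymbol{\lambda}^{(1)}))+(g^*-\hat{g}(\boldsymbol{\lambda}^{(2)}))\bigr)$, control the first factor via Assumption~\ref{assump:lipschitz} and the second via $b+\sqrt{w}$ (resp.\ $b+O(\sqrt{K_0})$ through the $L_{\psi_1}$--$L_{\psi_2}$ product inequality and Assumption~\ref{assump:tail_margin}), then push an $\eta$-net of $\Lambda$ forward. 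Your caveat about joint integrability of $C_\Lambda$ with the residual factor in the $L_2$ case is a real subtlety that the paper's own proof passes over silently (its ``$L_2$ is its own dual'' step strictly only yields an $L_1$ bound on the product), so flagging it is a point in your favor rather than a deviation.
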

\begin{proof}
	Let us first consider a general norm $\|\cdot \|$ such that for any random variables $X, Y$, we have $\|XY\| \le \|X\|_* \|Y\|_*$.
	Then for all $\boldsymbol{\lambda} \in \Lambda$ such that
	$P (g^* - \hat{g}(\boldsymbol{\lambda} | D^{n_T}))^2 \le w$, we have
	\begin{align}
	& \left \|
	Q(\cdot , \hat{g}^{(n_T)}(\boldsymbol{\lambda}^{(1)}|D^{(n_T)})(x))
	- Q(\cdot , \hat{g}^{(n_T)}(\boldsymbol{\lambda}^{(2)}|D^{(n_T)})(x))
	\right \|\\
	& = \left \|
	\left (y - \hat{g}^{(n_T)}(\boldsymbol{\lambda}^{(1)}|D^{(n_T)})(x) \right) ^2
	- \left (y - \hat{g}^{(n_T)}(\boldsymbol{\lambda}^{(2)}|D^{(n_T)})(x) \right) ^2
	\right \|
	\label{eq:loss_diff}
	\\
	&\left(\hat{g}^{(n_T)}(\boldsymbol{\lambda}^{(2)}|D^{(n_T)})(x) - \hat{g}^{(n_T)}(\boldsymbol{\lambda}^{(1)}|D^{(n_T)})(x)\right )^2 \|\\
	\begin{split}
	& \le
	\left \|2\epsilon + g^*(x) - \hat{g}(\boldsymbol{\lambda}^{(1)} | D^{(n_T)})(x)
	+ g^*(x) - \hat{g}(\boldsymbol{\lambda}^{(2)} | D^{(n_T)})(x) \right \|_* \\
	& \quad \times \left \| \hat{g}^{(n_T)}(\boldsymbol{\lambda}^{(2)}|D^{(n_T)})(x) - \hat{g}^{(n_T)}(\boldsymbol{\lambda}^{(1)}|D^{(n_T)})(x) \right \|_* \\
	\end{split}\\
	& \le  \left (2 \|\epsilon\|_* +
	2 \sup_{\lambda \in \Lambda: P(g^* - \hat{g}(\boldsymbol{\lambda} | D^{n_T}))^2 \le w} 
	\left \| g^*(x) - \hat{g}(\boldsymbol{\lambda}^{(1)} | D^{(n_T)})(x) \right \|_* 
	\right)
	\left \|C_\Lambda (x | D^{(n_T)}) \right \|_*
	\|\boldsymbol{\lambda}^{(2)} - \boldsymbol{\lambda}^{(1)} \|_2
	\label{eq:lipschitz_connect}
	\end{align}
	
	For $\|\cdot \| = \|\cdot\|_{L_{2}}$, the $L_2$ norm is its own dual norm so \eqref{eq:lipschitz_connect} reduces to
	$$
	c_0 \left(b +\sqrt{w}\right)
	\|C_\Lambda (x | D^{(n_T)})\|_{L_{2}}\|\boldsymbol{\lambda}^{(1)}-\boldsymbol{\lambda}^{(2)}\|_{2}$$
	for an absolute constant $c_0 > 0$.
	
	For $\|\cdot \| = \|\cdot\|_{L_{\psi_{1}}}$, the dual of the $L_{\psi_1}$ norm is $L_{\psi_2}$.
	Thus applying Assumption~\ref{assump:tail_margin} and the fact that $\|\epsilon\|_{L_{\psi_2}} = b < \infty$, \eqref{eq:lipschitz_connect} reduces to 
	$$
	2\left(
	b
	+ K_0
	\right)
	\|C_\Lambda (x | D^{(n_T)})\|_{L_{\psi_{2}}}
	\|\boldsymbol{\lambda}^{(1)}-\boldsymbol{\lambda}^{(2)}\|_{2}.
	$$
	
\end{proof}

Talagrand's gamma function of a class $T$ can be bounded by Dudley's integral
\begin{align}
\gamma_{\alpha}(T,D)\le c\int_{0}^{\text{Diam}(T,d)}\left(\log N(T,\epsilon,d)\right)^{1/\alpha}d\epsilon
\label{eq:bound_gamma}
\end{align}
\citep{talagrand2006generic}.
Combining the above bound with Lemma~\ref{lemma:covering_lipschitz} gives the following lemma.

\begin{lemma}
	Suppose Assumptions~\ref{assump:lipschitz} and \ref{assump:tail_margin} hold.
	Suppose $\|\epsilon\|_{L_{\psi_2}} = b < \infty$.
	Define $\mathcal{Q}_w^{L_2}$ as before.
	For $\Lambda$, let $\Delta_{\Lambda}=(\lambda_{\max}-\lambda_{\min}) \vee 1$.
	Let $w>0$.
	Let $\mathcal{Q}_{w}^{L_{2}}(D^{(n_{T})})$ be defined as before.
	
	Then there exist absolute constants $c_0, c_1 >0$ and a constant $c_{K_0, b} > 0$ such that
	\begin{align}
	\gamma_{2}\left(\mathcal{Q}_{w}^{L_{2}}(D^{(n_{T})}),\|\cdot\|_{L_{2}}\right)
	& \le	c_0 \sqrt{w J}
	\left[\sqrt{
		\log\left(
		\left (\frac{b}{\sqrt{w}} + 1 \right )
		\Delta_{\Lambda}\|C_\Lambda(x|D^{(n_{T})})\|_{L_{2}} + 1
		\right)
	}
	+1\right]\\
	\gamma_{1}\left(\mathcal{Q}_{w}^{L_{2}}(D^{(n_{T})}),\|\cdot\|_{L_{\psi_{1}}}\right)
	& \le c_{1}JK_0\left[
	\log\left(
	\Delta_{\Lambda}\|C_\Lambda(x|D^{(n_{T})})\|_{L_{\psi_{2}}} c_{K_0, b} +1
	\right)
	+1\right].
	\end{align}
\end{lemma}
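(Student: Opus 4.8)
The plan is to derive both bounds from the generic chaining estimate \eqref{eq:bound_gamma}, which controls $\gamma_\alpha$ by a Dudley-type entropy integral, combined with the covering number estimates already in hand: Lemma~\ref{lemma:covering_lipschitz} reduces the covering numbers of $\mathcal{Q}_w^{L_2}(D^{(n_T)})$, in both the $\|\cdot\|_{L_2}$ and $\|\cdot\|_{L_{\psi_1}}$ metrics, to covering numbers of the cube $\Lambda$, and the Euclidean covering number of $\Lambda$ is bounded by Lemma~\ref{lemma:covering_cube}. The resulting one-dimensional integral is exactly the type evaluated in the proof of Theorem~\ref{thrm:train_val}, so the remaining work is essentially bookkeeping.

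For the $\gamma_2$ bound, \eqref{eq:bound_gamma} gives $\gamma_2(\mathcal{Q}_w^{L_2}(D^{(n_T)}),\|\cdot\|_{L_2})\le c\int_0^{\Diam}\log^{1/2}N(\mathcal{Q}_w^{L_2}(D^{(n_T)}),\epsilon,\|\cdot\|_{L_2})\,d\epsilon$, and since by construction every element of $\mathcal{Q}_w^{L_2}(D^{(n_T)})$ has $L_2$-norm below $\sqrt{w}$, the diameter is at most $2\sqrt{w}$. Chaining Lemma~\ref{lemma:covering_lipschitz} with Lemma~\ref{lemma:covering_cube},
\[
\log N\!\left(\mathcal{Q}_w^{L_2}(D^{(n_T)}),\epsilon,\|\cdot\|_{L_2}\right)\le J\log\!\left(\frac{4(\lambda_{\max}-\lambda_{\min})\,c_0(b+\sqrt{w})\,\|C_\Lambda(\cdot\,|\,D^{(n_T)})\|_{L_2}+2\epsilon}{\epsilon}\right),
\]
with $c_0$ the absolute constant of Lemma~\ref{lemma:covering_lipschitz}. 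Substituting $\epsilon=2\sqrt{w}\,v$, bounding $2\epsilon\le4\sqrt{w}$ in the numerator, splitting $\log^{1/2}(xy)\le\log^{1/2}x+\log^{1/2}y$, and using $\int_0^1\log^{1/2}(1/v)\,dv=\Gamma(3/2)\le1$ collapses the integral to $2\sqrt{wJ}\big[\log^{1/2}\big(2c_0(\lambda_{\max}-\lambda_{\min})\|C_\Lambda\|_{L_2}(b/\sqrt{w}+1)+2\big)+1\big]$; then $\lambda_{\max}-\lambda_{\min}\le\Delta_\Lambda$, the elementary inequality $cx+c'\le\max(c,c')(x+1)$, and $\sqrt{a+b}\le\sqrt{a}+\sqrt{b}$ (to peel off the absolute constant and fold it into the trailing $+1$) put the bound in the stated form.

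The $\gamma_1$ bound is obtained identically with $\alpha=1$. By \eqref{eq:bound_gamma}, $\gamma_1(\mathcal{Q}_w^{L_2}(D^{(n_T)}),\|\cdot\|_{L_{\psi_1}})\le c\int_0^{\Diam}\log N(\mathcal{Q}_w^{L_2}(D^{(n_T)}),\epsilon,\|\cdot\|_{L_{\psi_1}})\,d\epsilon$, where the $L_{\psi_1}$-diameter of the class is $O(K_0)$ by \eqref{eq:cv_assump1}. Inserting the $\|\cdot\|_{L_{\psi_1}}$ covering bound of Lemma~\ref{lemma:covering_lipschitz} composed with Lemma~\ref{lemma:covering_cube}, substituting $\epsilon=2K_0v$, bounding $2\epsilon\le4K_0$ in the numerator, and using $\int_0^1\log(1/v)\,dv=1$ gives $\gamma_1\le 2cJK_0\big[\log\big(2c_{K_0,b}^{(0)}(\lambda_{\max}-\lambda_{\min})\|C_\Lambda\|_{L_{\psi_2}}/K_0+2\big)+1\big]$, with $c_{K_0,b}^{(0)}$ the constant of Lemma~\ref{lemma:covering_lipschitz}; relabeling $c_{K_0,b}:=2c_{K_0,b}^{(0)}/K_0$ and applying $cx+c'\le\max(c,c')(x+1)$ together with $\lambda_{\max}-\lambda_{\min}\le\Delta_\Lambda$ and $\Delta_\Lambda\ge1$ yields the claimed inequality with an absolute constant $c_1$.

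The computation has no essential difficulty beyond tracking which constants are absolute and which depend on $K_0$ and $b$; the one substantive point is the choice of the upper endpoint of the entropy integral. Using the crude diameter bound, namely the Lipschitz constant times $\Diam(\Lambda,\|\cdot\|_2)$, which is of order $\sqrt{J}$ times the range of the penalty parameters, would cost a superfluous factor of $\sqrt{J}$; the sharp $\sqrt{J}$ (respectively $J$) dependence in the statement comes precisely from integrating only up to the intrinsic $L_2$-radius $\sqrt{w}$ (respectively $L_{\psi_1}$-radius $\asymp K_0$) of $\mathcal{Q}_w^{L_2}(D^{(n_T)})$, which is where the defining constraint and Assumption~\ref{assump:tail_margin} enter. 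Everything else mirrors the entropy computation already carried out in the proof of Theorem~\ref{thrm:train_val}.
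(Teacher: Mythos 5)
Your proposal is correct and follows essentially the same route as the paper: bound $\gamma_\alpha$ by the Dudley entropy integral \eqref{eq:bound_gamma}, transfer covering numbers of $\mathcal{Q}_w^{L_2}(D^{(n_T)})$ to covering numbers of the cube $\Lambda$ via Lemma~\ref{lemma:covering_lipschitz} and Lemma~\ref{lemma:covering_cube}, and integrate up to the intrinsic diameters $2\sqrt{w}$ (for $\|\cdot\|_{L_2}$) and $O(K_0)$ (for $\|\cdot\|_{L_{\psi_1}}$, via Assumption~\ref{assump:tail_margin}). Your closing remark about why one must use the intrinsic radius rather than the Lipschitz constant times $\Diam(\Lambda)$ correctly identifies the source of the sharp $J$-dependence, and matches the paper's computation.
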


\begin{proof}
	By definition of $\mathcal{Q}_{w}^{L_{2}}$, we have $\Diam\left(\mathcal{Q}_{w}^{L_{2}}(D^{(n_{T})}),\|\cdot\|_{L_{2}}\right)	=	2\sqrt{w}.$
	Using Lemma~\ref{lemma:covering_lipschitz} and \eqref{eq:bound_gamma}, we have
	\begin{align}
	\gamma_{2}\left(\mathcal{Q}_{w}^{L_{2}}(D^{(n_{T})}),\|\cdot\|_{L_{2}}\right)
	& \le	c\int_{0}^{2\sqrt{w}}\sqrt{\log N\left(\mathcal{Q}_{w}^{L_{2}}(D^{(n_{T})}),u,\|\cdot\|_{L_{2}}\right)}du \\
	& \le	c\int_{0}^{2\sqrt{w}}\sqrt{\log N\left(\Lambda,\frac{u}{c_0 \left(b +\sqrt{w}\right)\|C_\Lambda(x|D^{(n_T)})\|_{L_{2}}},\|\cdot\|_{2}\right)}du \\
	& \le	c\int_{0}^{2\sqrt{w}}\sqrt{J\log\left(\frac{4 c_0 \Delta_{\Lambda}\left(b +\sqrt{w}\right)\|C_\Lambda(x|D^{(n_T)})\|_{L_{2}}+2u}{u}\right)}du\\
	& \le	2c\sqrt{w J}\left[\sqrt{\log\left(\frac{4 c_0  \Delta_{\Lambda}\left(b +\sqrt{w}\right)\|C_\Lambda(x|D^{(n_T)})\|_{L_{2}}+4\sqrt{w}}{2\sqrt{w}}\right)}+\frac{\sqrt{\pi}}{2}\right]
	\end{align}
	Using very similar logic, we now bound the $\gamma_1$ function.
	First we bound the diameter of $\mathcal{Q}_w^{L_2}$ with respect to the norm $\|\cdot \|_{L_{\psi_1}}$:
	\begin{align}
	\Diam(\mathcal{Q}_w^{L_2}(D^{(n_T)}), \|\cdot \|_{L_{\psi_1}})
	& \le 2 \sup_{\boldsymbol{\lambda} \in \Lambda}
	\left \|
	\left(
	y - \hat{g}^{(n_T)}(\boldsymbol{\lambda}| D^{(n_T)})
	\right)^2
	-
	\left(
	y - g^*(x)
	\right)^2
	\right \|_{L_{\psi_1}}
	\le
	c_1 K_0.
	\label{eq:diam_psi1}
	\end{align}
	Thus
	\begin{align}
	\gamma_{1}\left(\mathcal{Q}_{w}^{L_{2}}(D^{(n_{T})}),\|\cdot\|_{L_{\psi_{1}}}\right)
	&\le c\int_{0}^{c_1 K_0}\log N\left(\mathcal{Q}_{w}^{L_{2}}(D^{(n_{T})}),u,\|\cdot\|_{L_{\psi_{1}}}\right)du\\
	& \le c_2 J K_0 \left[
	\log\left(
	\frac{4 \Delta_{\Lambda}c_{K_0, b} \|C_\Lambda(x|D^{(n_T)})\|_{L_{\psi_{2}}}+ 2 c_1 K_0}
	{c_1 K_0}
	\right)+1\right]
	\end{align}
\end{proof}

To apply Theorem~\ref{thrm:jean_cv}, we need to define $h$ and $J_\delta$ so that \eqref{eq:h_to_J} is satisfied.
Based on the lemma above, we see that it suffices to let
\begin{align}
h(D^{(n_T)}) \coloneqq \|C_\Lambda(x|D^{(n_T)})\|_{L_{\psi_2}}
\end{align}
and
\begin{align}
\begin{split}
\label{eq:j_delta}
\mathcal{J}_{\delta}(w) & =
c_{1} \frac{\log n_{V}}{\sqrt{n_{V}}}
JK_0\left[\log\left(\Delta_{\Lambda}\delta c_{K_0, b}+1\right)+1\right]
+c_{3}\sqrt{Jw}
\left[\sqrt{\log\left(\Delta_{\Lambda}b \delta n +1\right)}+1\right].
\end{split}
\end{align}
Finally using the results above, we can prove Theorem~\ref{thrm:kfold}.
\begin{proof}[Proof for Theorem~\ref{thrm:kfold}]
	We now apply Theorem~\ref{thrm:jean_cv} to our Lipschitz case.
	From \eqref{eq:diam_psi1}, we find that Assumption~\ref{assump:tail_margin_general} is satisfied.
	We have defined $h$ and $J_\delta$ so that \eqref{eq:h_to_J} is satisfied for all $w \ge 1/n$.
	Moreover, $\mathcal{J}_{\delta}(w)$ is strictly increasing and concave in $w$.
	This implies that $\mathcal{J}_{\delta}^{-1}$ is strictly convex.
	Via algebra, we find that the convex conjugate of $\mathcal{J}_{\delta}^{-1}$ is
	\begin{align}
	\psi_{\delta}(u)
	= c_{1} u \frac{\log n_{V}}{\sqrt{n_{V}}}
	JK_0\left[\log\left(\Delta_{\Lambda}\delta c_{K_0, b}+1\right)+1\right]
	+ u^2 c_{4} J
	\left[\sqrt{\log\left(\Delta_{\Lambda}b \delta n +1\right)}+1\right]^2.
	\end{align}
	Now let us determine $\tilde{\psi}_{q, \delta}(1/q)$ as $q \rightarrow 1$.
	We have
	\begin{align}
	\lim_{q\rightarrow1}\tilde{\psi}_{q,\delta}(1/q)
	& = \psi_{\delta}\left(\frac{2(1+a)}{a}\frac{1}{\sqrt{n_{V}}}\right)\vee\frac{1}{n_{V}}\\
	& \le
	c_{5} \left (\frac{1+a}{a} \right )^2 \frac{J\log n_{V}}{n_{V}}
	K_0\left[\log\left(\Delta_{\Lambda}\delta c_{K_0, b} n +1\right)+1\right].
	\end{align}
	So the summation in \eqref{eq:cv_oracle_ineq} reduces to
	\begin{align}
	& \lim_{q \rightarrow 1} \left(
	\tilde{\psi}_{q,2\sigma_0}(1/q)
	+\sum_{k=1}^{\infty}
	\Pr\left(h\left(D^{(n_{T})}\right)\ge2^{k}\sigma\right)
	\tilde{\psi}_{q,2^{k}\sigma_0}(1/q)
	\right)
	\\
	& \le
	c_{6} \left (\frac{1+a}{a} \right )^2 \frac{J\log n_{V}}{n_{V}}
	K_0\left[\log\left(\Delta_{\Lambda} c_{K_0, b} n \sigma_0 +1\right)+1\right]
	\left(
	1 + 
	\sum_{k=1}^{\infty}
	k \Pr\left(\|C_\Lambda(x|D^{(n_{T})})\|_{L_{\psi_{2}}} \ge 2^{k} \sigma_0\right)
	\right)
	\\
	& \le
	c_{6} \left (\frac{1+a}{a} \right )^2 \frac{J\log n_{V}}{n_{V}}
	K_0\left[\log\left(\Delta_{\Lambda} c_{K_0, b} n \sigma_0 +1\right)+1\right]
	\tilde{h}(n_{T}).
	\label{eq:sum_prob_bound}
	\end{align}
	Taking $q \rightarrow 1$ in \eqref{eq:cv_oracle_ineq} and plugging in \eqref{eq:sum_prob_bound} to Theorem~\ref{thrm:jean_cv}, we get our desired result.
\end{proof}

\subsection{Penalized regression for additive models}

We now show that penalized regression problems for additive models satisfy the Lipschitz condition.
\subsubsection{Proof for Lemma \ref{lemma:param_add}}
\begin{proof}
	We will use the notation $\hat{\boldsymbol{\theta}}(\boldsymbol{\lambda}) \coloneqq \hat{\boldsymbol{\theta}}(\boldsymbol{\lambda} | T)$. By the gradient optimality conditions, we have
	\begin{equation}
	\label{eq:grad_opt}
	\left.\nabla_{\theta} \left [
	\frac{1}{2}\left\Vert y-g(\boldsymbol{\theta})\right\Vert _{T}^{2}+
	\sum_{j=1}^J \lambda_{j}P_{j}(\boldsymbol{\theta}^{(j)})
	\right ]
	\right|_{\boldsymbol{\theta}=\hat{\boldsymbol{\theta}}(\lambda)}=0.
	\end{equation}
	
	After implicitly differentiating with respect to $\boldsymbol{\lambda}$, we have
	\begin{equation}
	\label{eq:implicit_diff}
	\nabla_{\lambda}\left\{ \left.\nabla_{\theta}
	\left [
	\frac{1}{2}\left\Vert y-g(\boldsymbol{\theta})\right\Vert _{T}^{2}
	+ \sum_{j=1}^J \lambda_{j}P_{j}(\boldsymbol{\theta}^{(j)})
	\right ]
	\right|_{\boldsymbol{\theta}=\hat{\boldsymbol{\theta}}(\boldsymbol{\lambda})}\right\} =0.
	\end{equation}
	From the product rule and chain rule, we can then write the system of equations in \eqref{eq:implicit_diff} as
	\begin{align}
	\label{eq:param_grad}
	\left . \nabla_{\lambda}\hat{\boldsymbol{\theta}}(\boldsymbol{\lambda})
	\right|_{\boldsymbol{\theta}=\hat{\boldsymbol{\theta}}(\lambda)}
	= -
	\left (
	\left.\nabla_{\theta}^2
	L_T(\boldsymbol{\theta}, \boldsymbol{\lambda})
	\right|_{\boldsymbol{\theta}=\hat{\boldsymbol{\theta}}(\lambda)} 
	\right)^{-1}
	\diag \left \{
	\left.
	\nabla_{\theta^{(j)}}P_{j}(\boldsymbol{\theta}^{(j)})
	\right|_{\boldsymbol{\theta}=\hat{\boldsymbol{\theta}}(\lambda)}
	\right \}_{j=1:J}
	.
	\end{align}
	We can bound the norm of the second term in \eqref{eq:param_grad} by rearranging \eqref{eq:grad_opt} and using the Cauchy-Schwarz inequality:
	\begin{align*}
	\left\Vert \left.\nabla_{\theta^{(j)}}P_{j}(\boldsymbol{\theta}^{(j)})\right|_{\boldsymbol{\theta}=\hat{\boldsymbol{\theta}}(\lambda)}\right\Vert_2
	&
	\le  \frac{1}{\lambda_{min}}\left\Vert y-g(\hat{\boldsymbol{\theta}}(\boldsymbol{\lambda}))\right\Vert _{T}
	\left \|
	\left\Vert
	\nabla_{\theta^{(j)}}g_{j}(x|\boldsymbol{\theta}^{(j)})
	\right\Vert_{2}
	\right \|_T.
	\end{align*}
	Since $g_j$ is Lipschitz by assumption, then
	\begin{align}
	\left\Vert
	\nabla_{\theta^{(j)}}g_{j}(x|\boldsymbol{\theta}^{(j)})
	\right\Vert_{2}
	\le \ell_j(x).
	\end{align}
	Also, by the definition of $\hat{\boldsymbol{\theta}}(\boldsymbol{\lambda})$, we have
	\begin{align}
	\frac{1}{2}\left\Vert y-g(\hat{\boldsymbol{\theta}}(\boldsymbol{\lambda}))\right\Vert _{T}^{2}
	& \le \frac{1}{2}\left\Vert \epsilon \right \Vert_T^2 + C^*_{\Lambda}.
	\end{align}
	Hence
	\begin{align}
	\left\Vert \left.\nabla_{\theta}P_{j}(\boldsymbol{\theta}^{(j)})\right|_{\boldsymbol{\theta}=\hat{\boldsymbol{\theta}}(\lambda)}\right\Vert_2
	& \le \frac{\|\ell_j\|_T}{\lambda_{min}}
	\sqrt{\left\Vert \epsilon \right \Vert_T^2 + 2 C^*_{\Lambda}}.
	\end{align}
	Plugging in the results from above and using the assumption that the Hessian of the objective function has a minimum eigenvalue of $m(T)$, we have for all 
	\begin{align}
	\left .
	\nabla_{\lambda_k}\hat{\boldsymbol{\theta}}^{(j)}(\boldsymbol{\lambda})
	\right|_{\boldsymbol{\theta}=\hat{\boldsymbol{\theta}}(\lambda)}
	& = \boldsymbol{0}
	\text{ if } j\ne k
	\\
	\left \|
	\left .
	\nabla_{\lambda_j}\hat{\boldsymbol{\theta}}^{(j)}(\boldsymbol{\lambda})
	\right|_{\boldsymbol{\theta}=\hat{\boldsymbol{\theta}}(\lambda)}
	\right \|_2
	& = \left \|
	\left .
	\nabla_{\lambda_j}\hat{\boldsymbol{\theta}}(\boldsymbol{\lambda})
	\right|_{\boldsymbol{\theta}=\hat{\boldsymbol{\theta}}(\lambda)}
	\right \|_2
	\\
	& \le \frac{1}{m(T)} \frac{\|\ell_j\|_T}{\lambda_{min}}
	\sqrt{\left\Vert \epsilon \right \Vert_T^2 + 2 C^*_{\Lambda}}.
	\end{align}
	Since the norm of the gradient is bounded, $\hat{\boldsymbol{\theta}}^{(j)}(\boldsymbol{\lambda})$ must be Lipschitz:
	\begin{align}
	\left\Vert \hat{\boldsymbol{\theta}}^{(j)}(\boldsymbol{\lambda}^{(1)})
	-\hat{\boldsymbol{\theta}}^{(j)}(\boldsymbol{\lambda}^{(2)})\right\Vert _{2}
	& \le
	\frac{1}{m(T)} \frac{\|\ell_j\|_T}{\lambda_{min}}
	\sqrt{\left\Vert \epsilon \right \Vert_T^2 + 2 C^*_{\Lambda}}
	\left |{\lambda}^{(1)}_j-{\lambda}^{(2)}_j \right |.
	\label{eq:lipschitz_params}
	\end{align}
	Finally we combine the above results to get
	\begin{align}
	& \left |
	g \left (x \middle | \hat{\boldsymbol{\theta}}(\boldsymbol{\lambda}^{(1)})\right )
	- g \left (x \middle | \hat{\boldsymbol{\theta}}(\boldsymbol{\lambda}^{(2)})\right )
	\right |\\
	& \le
	\sum_{j=1}^J
	\left |
	g_j \left (x \middle | \hat{\boldsymbol{\theta}}(\boldsymbol{\lambda}^{(1)})\right )
	- g_j \left (x \middle | \hat{\boldsymbol{\theta}}(\boldsymbol{\lambda}^{(2)})\right )
	\right | \\
	& \le \sum_{j=1}^J \ell_j(x_j)
	\left\Vert \hat{\boldsymbol{\theta}}^{(j)}(\boldsymbol{\lambda}^{(1)})
	-\hat{\boldsymbol{\theta}}^{(j)}(\boldsymbol{\lambda}^{(2)})\right\Vert _{2}\\
	& \le \sum_{j=1}^J \ell_j(x_j)
	\frac{1}{m(T)} \frac{\|\ell_j\|_T}{\lambda_{min}}
	\sqrt{\left\Vert \epsilon \right \Vert_T^2 + 2 C^*_{\Lambda}}
	\left |{\lambda}^{(1)}_j-{\lambda}^{(2)}_j\right |\\
	& \le
	\frac{1}{m(T) \lambda_{min}}
	\sqrt{
		\left(
		\left\Vert \epsilon \right \Vert_T^2 + 2 C^*_{\Lambda}
		\right)
		\left(
		\sum_{j=1}^J \|\ell_j\|_T^2 \ell_j^2(x_j)
		\right)
	}
	\left \|
	\boldsymbol{\lambda}^{(1)}-\boldsymbol{\lambda}^{(2)}
	\right \|_{2}
	\end{align}
\end{proof}

\subsubsection{Proof for Lemma \ref{lemma:nonsmooth}}

Before proving Lemma~\ref{lemma:nonsmooth}, we need to introduce some notation.
Let $\mathcal{L}(\boldsymbol{\lambda}^{(1)},\boldsymbol{\lambda}^{(2)})$
be the line segment connecting $\boldsymbol{\lambda}^{(1)}$ and $\boldsymbol{\lambda}^{(2)}$.
Let $\mu_{1}(z)$ be the 1-dimensional Lebesgue measure in the direction
of $z$ (so if $z$ is a continuous line segment, $\mu_{1}(z)=\|z\|_{2}$;
if $z$ is composed of multiple line segments $z_{i}$, then $\mu(z)=\sum\mu(z_{i})$).

Before proving the Lipschitz property over all of $\Lambda$, we show that the fitted function is Lipschitz over $\Lambda_{smooth}$.
For convenience, define $\Lambda_{smooth}^{c} \coloneqq \Lambda \setminus \Lambda_{smooth}$.

\begin{lemma}
	Suppose that $g_j(\boldsymbol{\theta})(x)$ satisfies the Lipschitz condition in Lemma~\ref{lemma:param_add}.
	Let $T\equiv D^{(n_{T})}$ be a fixed set of training data. Suppose
	the penalized loss function $L_{T}\left(\boldsymbol{\theta},\boldsymbol{\lambda}\right)$
	has a unique minimizer $\hat{\boldsymbol{\theta}}(\boldsymbol{\lambda}|T)$
	for every $\boldsymbol{\lambda}\in\Lambda$. Let $\boldsymbol{U}_{\lambda}$
	be an orthonormal matrix with columns forming a basis for the differentiable
	space of $L_{T}(\cdot,\boldsymbol{\lambda})$ at $\hat{\boldsymbol{\theta}}(\boldsymbol{\lambda}|T)$.
	Suppose there exists a constant $m(T)>0$ such that the Hessian of
	the penalized training criterion at the minimizer taken with respect
	to the directions in $\boldsymbol{U}_{\lambda}$ satisfies 
	\begin{equation}
	\left._{U_{\lambda}}\nabla_{\theta}^{2}L_{T}(\boldsymbol{\theta},\boldsymbol{\lambda})\right|_{\theta=\hat{\theta}(\boldsymbol{\lambda})}\succeq m(T)\boldsymbol{I}\quad\forall\boldsymbol{\lambda}\in\Lambda
	\end{equation}
	where \textup{$\boldsymbol{I}$ is the identity matrix.}
	Suppose Condition~\ref{condn:nonsmooth1} is satisfied by some $\Lambda_{smooth}\subseteq\Lambda$.
	Define
	\begin{align}
	\Lambda_{ext}=\left\{ (\boldsymbol{\lambda}^{(1)},\boldsymbol{\lambda}^{(2)}):
	\boldsymbol{\lambda}^{(1)},\boldsymbol{\lambda}^{(2)} \in \Lambda,
	\mu_{1}\left(\mathcal{L}(\boldsymbol{\lambda}^{(1)},\boldsymbol{\lambda}^{(2)})\cap\Lambda_{smooth}^{c}\right)>0\right\}.
	\label{eq:lambda_ext}
	\end{align}
	Then any $(\boldsymbol{\lambda}^{(1)},\boldsymbol{\lambda}^{(2)})\in\Lambda_{ext}^{c}$
	satisfies \eqref{eq:param_add_lipschitz}.
	\label{lemma:lipschitz_lambda_ext_c}
\end{lemma}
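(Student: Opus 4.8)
Fix distinct $\boldsymbol{\lambda}^{(1)},\boldsymbol{\lambda}^{(2)}\in\Lambda$ with $(\boldsymbol{\lambda}^{(1)},\boldsymbol{\lambda}^{(2)})\in\Lambda_{ext}^{c}$ and $\boldsymbol{x}\in\mathcal{X}^{(1)}\times\cdots\times\mathcal{X}^{(J)}$. I would parametrize the segment by $\boldsymbol{\lambda}(t)=(1-t)\boldsymbol{\lambda}^{(1)}+t\boldsymbol{\lambda}^{(2)}$, $t\in[0,1]$, so that $\|\boldsymbol{\lambda}'(t)\|_{2}=\Delta\coloneqq\|\boldsymbol{\lambda}^{(1)}-\boldsymbol{\lambda}^{(2)}\|_{2}$, and set $\phi(t)=g(\hat{\boldsymbol{\theta}}(\boldsymbol{\lambda}(t)\,|\,T))(\boldsymbol{x})$ and $G=\{t\in[0,1]:\boldsymbol{\lambda}(t)\in\Lambda_{smooth}\}$. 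Since $(\boldsymbol{\lambda}^{(1)},\boldsymbol{\lambda}^{(2)})\in\Lambda_{ext}^{c}$, by definition $\mu_{1}\bigl(\mathcal{L}(\boldsymbol{\lambda}^{(1)},\boldsymbol{\lambda}^{(2)})\cap\Lambda_{smooth}^{c}\bigr)=0$, so $[0,1]\setminus G$ is Lebesgue-null in $[0,1]$. The strategy is: (i) show $\phi$ is locally $C_{\Lambda}(\boldsymbol{x}|T)\Delta$-Lipschitz at every $t\in G$ by reducing locally to the smooth problem of Lemma~\ref{lemma:param_add}; (ii) upgrade this, using continuity of $\hat{\boldsymbol{\theta}}(\cdot\,|\,T)$, to $|\phi(1)-\phi(0)|\le C_{\Lambda}(\boldsymbol{x}|T)\Delta$, which is exactly \eqref{eq:param_add_lipschitz}.

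\textbf{Step 1: local reduction to a smooth problem.} Fix $t_{0}\in G$. By Condition~\ref{condn:nonsmooth1} there is a ball $B\coloneqq B(\boldsymbol{\lambda}(t_{0}))$ of positive radius on which the (fixed) subspace $\Omega_{0}=\Omega^{P_{1}}(\hat{\boldsymbol{\theta}}^{(1)}(\boldsymbol{\lambda}(t_{0})|T))\times\cdots\times\Omega^{P_{J}}(\hat{\boldsymbol{\theta}}^{(J)}(\boldsymbol{\lambda}(t_{0})|T))=\spann(\boldsymbol{U}_{\boldsymbol{\lambda}(t_{0})})$ is a local optimality space for $L_{T}(\cdot,\boldsymbol{\lambda}')$ for every $\boldsymbol{\lambda}'\in B$, and $L_{T}(\cdot,\boldsymbol{\lambda}')$ is twice differentiable at $\hat{\boldsymbol{\theta}}(\boldsymbol{\lambda}'|T)$ along $\Omega_{0}$. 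Hence for $\boldsymbol{\lambda}'\in B$ the minimizer $\hat{\boldsymbol{\theta}}(\boldsymbol{\lambda}'|T)$ also solves the reduced problem $\min_{\boldsymbol{\theta}\in\spann(\boldsymbol{U}_{\boldsymbol{\lambda}(t_{0})})}L_{T}(\boldsymbol{\theta},\boldsymbol{\lambda}')$, a twice-differentiable optimization over a fixed subspace whose restricted Hessian $\boldsymbol{U}_{\boldsymbol{\lambda}(t_{0})}^{\top}\nabla_{\theta}^{2}L_{T}\,\boldsymbol{U}_{\boldsymbol{\lambda}(t_{0})}$ at the minimizer is $\succeq m(T)\boldsymbol{I}$ by hypothesis.

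\textbf{Step 2: implicit differentiation.} On this reduced problem I would repeat the argument in the proof of Lemma~\ref{lemma:param_add} verbatim, now using the restricted gradient optimality condition (the projection onto $\spann(\boldsymbol{U}_{\boldsymbol{\lambda}(t_{0})})$ of $\nabla_{\theta}[\tfrac12\|y-g(\boldsymbol{\theta})\|_{T}^{2}+\sum_{j}\lambda_{j}P_{j}(\boldsymbol{\theta}^{(j)})]$ vanishing) and the restricted Hessian in place of $\nabla_{\theta}^{2}L_{T}$. Rearranging this optimality condition, applying Cauchy--Schwarz together with \eqref{eq:lipschitz_g}, the sub-optimality bound $\sum_{j}\lambda_{j}P_{j}(\hat{\boldsymbol{\theta}}^{(j)}(\boldsymbol{\lambda}'|T))\le\lambda_{\max}\sum_{j}P_{j}(\boldsymbol{\theta}^{(j),*})=C_{\Lambda}^{*}$ (hence $\tfrac12\|y-g(\hat{\boldsymbol{\theta}}(\boldsymbol{\lambda}'|T))\|_{T}^{2}\le\tfrac12\|\epsilon\|_{T}^{2}+C_{\Lambda}^{*}$), and the eigenvalue bound $m(T)$, one gets exactly as before $\|\nabla_{\lambda_{j}}\hat{\boldsymbol{\theta}}^{(j)}(\boldsymbol{\lambda}'|T)\|_{2}\le\frac{\|\ell_{j}\|_{T}}{m(T)\lambda_{\min}}\sqrt{\|\epsilon\|_{T}^{2}+2C_{\Lambda}^{*}}$ on $B$ and $\nabla_{\lambda_{k}}\hat{\boldsymbol{\theta}}^{(j)}=\boldsymbol{0}$ for $j\ne k$, so $t\mapsto\hat{\boldsymbol{\theta}}(\boldsymbol{\lambda}(t)|T)$ is $C^{1}$ near $t_{0}$. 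Since $\|\boldsymbol{\lambda}'(t)\|_{2}=\Delta$, the final Cauchy--Schwarz step of that proof gives, for $t$ near $t_{0}$,
\[
\Bigl|\tfrac{d}{dt}\phi(t)\Bigr|\le\sum_{j}\ell_{j}(\boldsymbol{x}^{(j)})\Bigl\|\tfrac{d}{dt}\hat{\boldsymbol{\theta}}^{(j)}(\boldsymbol{\lambda}(t)|T)\Bigr\|_{2}\le C_{\Lambda}(\boldsymbol{x}|T)\,\Delta ,
\]
with $C_{\Lambda}$ as in \eqref{eq:param_add_lipschitz}; crucially this constant does not depend on $t_{0}$. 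In particular $\phi$ is $C_{\Lambda}(\boldsymbol{x}|T)\Delta$-Lipschitz on a neighborhood of each $t\in G$.

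\textbf{Step 3: globalize along the segment.} The union $U\subseteq[0,1]$ of the Lipschitz neighborhoods from Step 2 is open, contains $G$, hence has null complement and is dense. On each connected component of $U$ a compactness/chaining argument converts the local Lipschitz constant into a global one, so $\phi$ is $C_{\Lambda}(\boldsymbol{x}|T)\Delta$-Lipschitz on each component. Because $\hat{\boldsymbol{\theta}}(\cdot\,|\,T)$ is continuous on $\Lambda$ (the unique minimizer of the jointly continuous, coercive criterion $L_{T}$ depends continuously on $\boldsymbol{\lambda}$) and $g(\cdot)(\boldsymbol{x})$ is continuous, $\phi$ is continuous on $[0,1]$; combining this with the component-wise Lipschitz bound and the fact that $[0,1]\setminus U$ is null (so that the derivative bound $|\phi'|\le C_{\Lambda}(\boldsymbol{x}|T)\Delta$ holds a.e.\ and $\phi$ may be recovered from its derivative) yields
\[
\bigl|g(\hat{\boldsymbol{\theta}}(\boldsymbol{\lambda}^{(1)}|T))(\boldsymbol{x})-g(\hat{\boldsymbol{\theta}}(\boldsymbol{\lambda}^{(2)}|T))(\boldsymbol{x})\bigr|=|\phi(1)-\phi(0)|\le C_{\Lambda}(\boldsymbol{x}|T)\,\|\boldsymbol{\lambda}^{(1)}-\boldsymbol{\lambda}^{(2)}\|_{2},
\]
which is \eqref{eq:param_add_lipschitz}.

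\textbf{Main obstacle.} Step~3 is the delicate point: passing from a uniform local/a.e.\ Lipschitz bound on the dense open set $U$ to a genuine Lipschitz bound on all of $[0,1]$. This is immediate when $\mathcal{L}\cap\Lambda_{smooth}^{c}$ meets the segment in an at most countable (e.g.\ discrete, as for the lasso/elastic-net paths) set, via the increasing-function theorem allowing a countable exceptional set; for a general null exceptional set one needs the additional input that $\phi$ (equivalently $\hat{\boldsymbol{\theta}}(\cdot\,|\,T)$ along the segment) is absolutely continuous, so that $\phi(1)-\phi(0)=\int_{0}^{1}\phi'(t)\,dt$. A secondary (routine) check is that the implicit-differentiation bookkeeping of Lemma~\ref{lemma:param_add} really does go through after projecting onto $\spann(\boldsymbol{U}_{\boldsymbol{\lambda}})$ — in particular that the penalty-gradient blocks still obey the same Cauchy--Schwarz estimate — together with the mild facts $P_{j}\ge0$ and continuity of $\hat{\boldsymbol{\theta}}(\cdot\,|\,T)$, which are used here.
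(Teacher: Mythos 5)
Your proposal is correct and follows essentially the same route as the paper: fix the segment, use Condition~\ref{condn:nonsmooth1} to reduce locally to a smooth optimization over the (locally constant) differentiable space, rerun the implicit-differentiation argument of Lemma~\ref{lemma:param_add} with the restricted gradient and restricted Hessian, and then piece the local bounds together along the segment. The only real difference is in the globalization step. The paper covers the segment by the balls $B(\boldsymbol{\ell}^{(i)})$ from Condition~\ref{condn:nonsmooth1}, intersects their boundaries with the segment to get an ordered sequence of breakpoints $\boldsymbol{p}^{(1)},\boldsymbol{p}^{(2)},\dots$, applies the Lipschitz bound of Lemma~\ref{lemma:param_add} on each sub-segment $\mathcal{L}(\boldsymbol{p}^{(i)},\boldsymbol{p}^{(i+1)})$, and telescopes with the triangle inequality, using $\sum_i |p^{(i)}_j - p^{(i+1)}_j| = |\lambda^{(1)}_j - \lambda^{(2)}_j|$; this works with finite increments and so avoids invoking absolute continuity or recovering $\phi$ from an a.e.\ derivative, which is the obstacle you flag in Step~3. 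The subtlety you identify is real but is present in both arguments in slightly different guises: the paper's version asserts that the countable union of balls covers the \emph{entire} closed segment (including the measure-zero set of non-smooth points) and that the breakpoints admit a telescoping ordering, which is not fully justified either. So your honest flagging of the gap is apt, and your Step~3 could be repaired exactly as the paper does -- by replacing the integration of $\phi'$ with a chaining/triangle-inequality argument over the sub-segments determined by the covering balls, where the increment bound comes from Lemma~\ref{lemma:param_add} applied on each piece rather than from $\int_0^1 \phi'(t)\,dt$.
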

\begin{proof}
	From Condition 1, every point $\boldsymbol{\lambda}\in\Lambda_{smooth}$
	is the center of a ball $B(\boldsymbol{\lambda})$ with nonzero radius
	where the differentiable space within $B(\boldsymbol{\lambda})$ is
	constant.
	
	Now consider any $\boldsymbol{\lambda^{(1)}},\boldsymbol{\lambda^{(2)}}\in \Lambda_{ext}$.
	By \eqref{eq:lambda_ext}, there must exist a countable set of points $\cup_{i=1}^{\infty}\boldsymbol{\ell}^{(i)}
	\subset\mathcal{L}(\boldsymbol{\lambda^{(1)}},\boldsymbol{\lambda^{(2)}})$ where
	$\cup_{i=1}^{\infty} \boldsymbol{\ell}^{(i)} \subset \Lambda_{smooth}$,
	$\boldsymbol{\lambda}^{(1)},\boldsymbol{\lambda}^{(2)}\in \cup_{i=1}^{\infty}\boldsymbol{\ell}^{(i)}$,
	and the union of their differentiable neighborhoods cover $\mathcal{L}(\boldsymbol{\lambda^{(1)}},\boldsymbol{\lambda^{(2)}})$
	entirely: 
	\[
	\mathcal{L}\left(\boldsymbol{\lambda^{(1)}},\boldsymbol{\lambda^{(2)}}\right)\subseteq\cup_{i=1}^{\infty}B\left(\boldsymbol{\ell}^{(i)}\right).
	\]
	Consider the intersections of boundaries of the differentiable neighborhoods
	with the line segment: 
	\begin{align}
	P =
	\cup_{i=1}^{\infty}
	\left[
	bd \left (
	B\left(\boldsymbol{\ell}^{(i)}\right)
	\right )
	\cap\mathcal{L}(\boldsymbol{\lambda^{(1)}},\boldsymbol{\lambda^{(2)}})
	\right].
	\end{align}
	Every point $p\in P$ can be expressed as $\alpha_{p}\boldsymbol{\lambda^{(1)}}+(1-\alpha_{p})\boldsymbol{\lambda^{(2)}}$
	for some $\alpha_{p}\in[0,1]$.
	We can order the points in $P$ by increasing $\alpha_{p}$ to get the sequence $\boldsymbol{p}^{(1)},\boldsymbol{p}^{(2)},...$.
	
	By Condition 1, the differentiable space of the training criterion
	is constant over $\mathcal{L}\left(\boldsymbol{p}^{(i)},\boldsymbol{p}^{(i+1)}\right)$
	since each of these sub-segments are contained in some $B(\boldsymbol{\ell}^{(i)})$
	for $i\in\mathbb{N}$.
	Moreover, the differentiable space over the interior of line segment $\mathcal{L}\left(\boldsymbol{p^{(i)},p^{(i+1)}}\right)$ can be decomposed as the product of differentiable spaces, which we denote as
	\begin{align}
	\Omega_{i}^{(1)} \times ... \times \Omega_{i}^{(J)}.
	\label{eq:diff_space_product}
	\end{align}
	By Condition 1, \eqref{eq:diff_space_product} is also a local optimality space.
	Let $U^{(i,j)}$ be an orthonormal basis of $\Omega_{i}^{(j)}$ for $j = 1,...,J$.
	For each $i$, we can express $\hat{\boldsymbol{\theta}}(\boldsymbol{\lambda}|T)$
	for all $\boldsymbol{\lambda}\in\mbox{Int}\left\{ \mathcal{L}\left(\boldsymbol{p^{(i)},p^{(i+1)}}\right)\right\} $
	as 
	\[
	\hat{\boldsymbol{\theta}}^{(j)}(\boldsymbol{\lambda}|T)
	=U^{(i,j)}\hat{\boldsymbol{\beta}}^{(j)}(\boldsymbol{\lambda}|T)
	\]
	\[
	\hat{\boldsymbol{\beta}}(\boldsymbol{\lambda}|T)
	= \left (
	\begin{matrix}
	\hat{\boldsymbol{\beta}}^{(1)}(\boldsymbol{\lambda}|T)
	& ... &
	\hat{\boldsymbol{\beta}}^{(J)}(\boldsymbol{\lambda}|T)
	\end{matrix}
	\right )
	=\arg\min_{\beta}L_{T}
	\left (
	\{U^{(i,j)}\boldsymbol{\beta}^{(j)} \}_{j=1}^J,
	\boldsymbol{\lambda}
	\right ).
	\]
	We can show that the fitted parameters satisfy the Lipschitz condition \eqref{eq:lipschitz_params} over $\Lambda=\mathcal{L}\left(\boldsymbol{p^{(i)},p^{(i+1)}}\right)$ by using a similar proof as in Lemma~\ref{lemma:param_add}.
	The only difference is that the proofs starts with taking directional derivatives along the columns of $U^{(i)} = (U^{(i,1)} ... U^{(i,J)})$ to establish the KKT conditions.
	Then for all $j$ and $i$, we have
	\begin{align}
	\left\Vert
	\boldsymbol{\hat{\beta}}^{(j)}(\boldsymbol{p}^{(i)}|T)
	-\boldsymbol{\hat{\beta}}^{(j)}(\boldsymbol{p}^{(i)}|T)
	\right\Vert _{2}
	\le
	\frac{1}{m(T)} \frac{\|\ell_j\|_T}{\lambda_{min}}
	\sqrt{\left\Vert \epsilon \right \Vert_T^2 + 2 C^*_{\Lambda}}
	\left |p^{(i)}_j-p^{(i+1)}_j\right |.
	\end{align}
	We can sum these inequalities by the triangle inequality:
	\begin{align*}
	\left\Vert
	\hat{\boldsymbol{\theta}}^{(j)}(\boldsymbol{\lambda}^{(1)}|T)
	-\hat{\boldsymbol{\theta}}^{(j)}(\boldsymbol{\lambda}^{(2)}|T)
	\right\Vert _{2}
	& \le
	\sum_{i=1}^{\infty}
	\left \|
	\boldsymbol{\hat{\theta}}^{(j)}(\boldsymbol{p}^{(i)}|T)
	-\boldsymbol{\hat{\theta}}^{(j)}(\boldsymbol{p}^{(i + 1)}|T)
	\right \|_{2}\\
	& \le
	\frac{1}{m(T)} \frac{\|\ell_j\|_T}{\lambda_{min}}
	\sqrt{\left\Vert \epsilon \right \Vert_T^2 + 2 C^*_{\Lambda}}
	\sum_{i=1}^{\infty} 
	\left |
	{p}^{(i)}_j-{p}^{(i+1)}_j
	\right |\\
	& =
	\frac{1}{m(T)} \frac{\|\ell_j\|_T}{\lambda_{min}}
	\sqrt{\left\Vert \epsilon \right \Vert_T^2 + 2 C^*_{\Lambda}}
	\left |{\lambda}^{(1)}_j - {\lambda}^{(2)}_j\right |.
	\end{align*}
	Finally, using the fact that $g_j$ is $\ell_j$-Lipschitz, we have by the triangle inequality and Cauchy Schwarz that
	\begin{align}
	C_\Lambda(\boldsymbol{x}|T)
	= \frac{\sqrt{\| \epsilon \|_T^2 + 2 C^*_{\Lambda}}}{m(T) \lambda_{min}}
	\sqrt{\sum_{j=1}^J \|\ell_j\|_T^2 \ell_j^2(x_j)}.
	\label{eq:nonsmooth_lipschitz_func}
	\end{align}
\end{proof}

In order to extend the result in Lemma~\ref{lemma:lipschitz_lambda_ext_c} to all of $\Lambda$, we need to show that $\Lambda_{ext}$ is a set with measure zero.
\begin{lemma}
	Suppose Condition~\ref{condn:nonsmooth2}.
	Then $\mu_{2J}(\Lambda_{ext})=0$ where $\mu_{2J}$ is the Lebesgue measure in $\mathbb{R}^{2J}$ and $\Lambda_{ext}$ was defined in \eqref{eq:lambda_ext}.
	\label{lemma:ext_measure_zero}
\end{lemma}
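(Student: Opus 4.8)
The plan is to view $\Lambda_{ext}$ as (a subset of) a set on which a nonnegative integrand has positive integral, and then to annihilate that integral with Tonelli's theorem, using that $\Lambda_{smooth}^{c}$ is $\mu_{J}$-null. Parametrize the segment by $\boldsymbol{\lambda}(t)=(1-t)\boldsymbol{\lambda}^{(1)}+t\boldsymbol{\lambda}^{(2)}$ for $t\in[0,1]$. By the definition of $\mu_{1}$, for $\boldsymbol{\lambda}^{(1)}\neq\boldsymbol{\lambda}^{(2)}$ one has $\mu_{1}\big(\mathcal{L}(\boldsymbol{\lambda}^{(1)},\boldsymbol{\lambda}^{(2)})\cap\Lambda_{smooth}^{c}\big)=\|\boldsymbol{\lambda}^{(1)}-\boldsymbol{\lambda}^{(2)}\|_{2}\int_{0}^{1}\mathbf{1}[\boldsymbol{\lambda}(t)\in\Lambda_{smooth}^{c}]\,dt$, so that $\Lambda_{ext}\subseteq\{(\boldsymbol{\lambda}^{(1)},\boldsymbol{\lambda}^{(2)})\in\Lambda\times\Lambda:\int_{0}^{1}\mathbf{1}[\boldsymbol{\lambda}(t)\in\Lambda_{smooth}^{c}]\,dt>0\}$; it therefore suffices to show this larger set is $\mu_{2J}$-null. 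To make the indicator product-measurable when composed with the continuous parametrization, I would first replace $\Lambda_{smooth}^{c}$ by a Borel set $N\supseteq\Lambda_{smooth}^{c}$ with $\mu_{J}(N)=0$, which exists by Condition~\ref{condn:nonsmooth2} and outer regularity; then $(t,\boldsymbol{\lambda}^{(1)},\boldsymbol{\lambda}^{(2)})\mapsto\mathbf{1}_{N}\big((1-t)\boldsymbol{\lambda}^{(1)}+t\boldsymbol{\lambda}^{(2)}\big)$ is Borel on the bounded set $[0,1]\times\Lambda\times\Lambda$.

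Next I would apply Tonelli to this nonnegative Borel function. Fix $t\in(0,1)$: the linear map $(\boldsymbol{\lambda}^{(1)},\boldsymbol{\lambda}^{(2)})\mapsto\big((1-t)\boldsymbol{\lambda}^{(1)}+t\boldsymbol{\lambda}^{(2)},\,\boldsymbol{\lambda}^{(2)}\big)$ is a bijection of $\mathbb{R}^{2J}$ with constant Jacobian determinant $(1-t)^{J}\neq 0$, and it carries $\{(\boldsymbol{\lambda}^{(1)},\boldsymbol{\lambda}^{(2)})\in\Lambda\times\Lambda:(1-t)\boldsymbol{\lambda}^{(1)}+t\boldsymbol{\lambda}^{(2)}\in N\}$ into $N\times\Lambda$. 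Since $\mu_{2J}(N\times\Lambda)=\mu_{J}(N)\mu_{J}(\Lambda)=0$, the change-of-variables formula gives that this slice is $\mu_{2J}$-null, i.e.\ $\int_{\Lambda\times\Lambda}\mathbf{1}_{N}\big((1-t)\boldsymbol{\lambda}^{(1)}+t\boldsymbol{\lambda}^{(2)}\big)\,d(\boldsymbol{\lambda}^{(1)},\boldsymbol{\lambda}^{(2)})=0$ for every $t\in(0,1)$. Integrating over $t$ (the endpoints $t\in\{0,1\}$ form a $t$-null set) and interchanging the order of integration back by Tonelli yields $\int_{\Lambda\times\Lambda}\big(\int_{0}^{1}\mathbf{1}_{N}(\boldsymbol{\lambda}(t))\,dt\big)\,d(\boldsymbol{\lambda}^{(1)},\boldsymbol{\lambda}^{(2)})=0$.

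Because the inner integral is nonnegative, it must vanish for $\mu_{2J}$-almost every $(\boldsymbol{\lambda}^{(1)},\boldsymbol{\lambda}^{(2)})\in\Lambda\times\Lambda$; hence $\{\int_{0}^{1}\mathbf{1}_{N}(\boldsymbol{\lambda}(t))\,dt>0\}$ is $\mu_{2J}$-null, and so is its subset $\Lambda_{ext}$ (by completeness of $\mu_{2J}$). There is no deep obstacle here: the only points requiring care are the measurability upgrade from $\Lambda_{smooth}^{c}$ to a Borel null superset (so that Tonelli applies) and the fact that the averaging map degenerates exactly at $t\in\{0,1\}$, which is why the change of variables is performed on the open interval and the two endpoints are discarded as a parameter-null set.
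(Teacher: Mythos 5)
Your proof is correct, and it takes a genuinely different route from the paper's. The paper argues by contradiction: assuming $\mu_{2J}(\Lambda_{ext})>0$, it extracts a ball contained in $\Lambda_{ext}$, uses a monotone-convergence argument to stabilize a uniform lower bound $\delta/2$ on $\mu_{1}\left(\mathcal{L}(\boldsymbol{\lambda}',\boldsymbol{\lambda}'')\cap\Lambda_{smooth}^{c}\right)$ over a smaller ball, and then integrates this bound over a one-parameter family of pairwise disjoint segments $\mathcal{L}_\alpha$ to contradict $\mu(\Lambda_{smooth}^{c})=0$. You instead bound $\Lambda_{ext}$ by the superset $\{\int_{0}^{1}\mathbf{1}_{N}(\boldsymbol{\lambda}(t))\,dt>0\}$ and kill it directly with Tonelli plus a nonsingular linear change of variables in each $t$-slice. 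Your version is cleaner and sidesteps two delicate points in the paper's argument that would need repair as written: a set of positive Lebesgue measure need not contain a ball (one would have to pass to density points), and the union $\cup_{\alpha}\mathcal{L}_\alpha$ is a two-parameter object in $\mathbb{R}^{J}$, so comparing $\int_{0}^{1}\mu_{1}(\mathcal{L}_\alpha\cap\Lambda_{smooth}^{c})\,d\alpha$ against the $J$-dimensional measure $\mu(\Lambda_{smooth}^{c})$ does not directly yield a contradiction when $J\ge 3$. Your attention to measurability (replacing $\Lambda_{smooth}^{c}$ by a Borel null superset so Tonelli applies) and to the degeneracy of the averaging map at $t\in\{0,1\}$ is exactly the care this statement needs; what the paper's approach buys in exchange is only a more geometric, picture-driven narrative, at the cost of rigor.
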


\begin{proof}
	Suppose for contradiction that $\mu_{2J}(\Lambda_{ext})>0$.
	If this is the case, then there exists a ball $B_{r}\left(\left(\boldsymbol{\lambda}^{(1)},\boldsymbol{\lambda}^{(2)}\right)\right)$ contained in $\Lambda_{ext}$ with nonzero radius $r>0$ centered at $\left(\boldsymbol{\lambda}^{(1)},\boldsymbol{\lambda}^{(2)}\right)$
	where $\boldsymbol{\lambda}^{(1)} \ne \boldsymbol{\lambda}^{(2)}$ and
	\begin{align}
	\mu_{1}\left(\mathcal{L}\left(\boldsymbol{\lambda}^{'},\boldsymbol{\lambda}^{''}\right)\cap\Lambda_{smooth}^{c}\right) >0
	\quad \forall\left(\boldsymbol{\lambda}^{'},\boldsymbol{\lambda}^{''}\right)\in B_{r}\left(\left(\boldsymbol{\lambda}^{(1)},\boldsymbol{\lambda}^{(2)}\right)\right).
	\end{align}
	Suppose that $\mu_{1}\left(\mathcal{L}\left(\boldsymbol{\lambda}^{(1)},\boldsymbol{\lambda}^{(2)}\right)\cap\Lambda_{smooth}^{c}\right)=\delta>0$.
	We claim that for a sufficiently small radius $r'$, we also have
	\begin{align}
	\mu_{1}\left(\mathcal{L}\left(\boldsymbol{\lambda}^{'},\boldsymbol{\lambda}^{''}\right)\cap\Lambda_{smooth}^{c}\right)>\delta/2>0
	\quad \forall
	\left(\boldsymbol{\lambda}^{'},\boldsymbol{\lambda}^{''}\right)
	\in B_{r'}\left(\left(\boldsymbol{\lambda}^{(1)},\boldsymbol{\lambda}^{(2)}\right)\right).
	\end{align}
	To see why this claim is true, let us define a monotonically decreasing
	sequence $\left\{ r_{i}\right\} $ where $r_{i} > 0$ for all $i \in \mathbb{N}$ and $\lim_{i\rightarrow\infty}r_{i}=0$.
	By the monotone convergence theorem,
	\begin{align}
	\lim_{i\rightarrow\infty}\inf_{\left(\boldsymbol{\lambda}^{'},\boldsymbol{\lambda}^{''}\right)\in B_{r_{i}}\left(\left(\boldsymbol{\lambda}^{(1)},\boldsymbol{\lambda}^{(2)}\right)\right)}\mu_{1}\left(\mathcal{L}\left(\boldsymbol{\lambda}^{'},\boldsymbol{\lambda}^{''}\right)\cap\Lambda_{smooth}^{c}\right)=\mu_{1}\left(\mathcal{L}\left(\boldsymbol{\lambda}^{(1)},\boldsymbol{\lambda}^{(2)}\right)\cap\Lambda_{smooth}^{c}\right)=\delta>0.
	\end{align}
	By the definition of limits, there is some sufficiently
	large $i'$ such that for $r'\coloneqq r_{i'} > 0$, we have
	\begin{align}
	\inf_{\left(\boldsymbol{\lambda}^{'},\boldsymbol{\lambda}^{''}\right)\in B_{r'}\left(\left(\boldsymbol{\lambda}^{(1)},\boldsymbol{\lambda}^{(2)}\right)\right)}\mu_{1}\left(\mathcal{L}\left(\boldsymbol{\lambda}^{'},\boldsymbol{\lambda}^{''}\right)\cap\Lambda_{smooth}^{c}\right)>\delta/2.
	\end{align}
	Given our ball is non-empty, there exist points $\left(\boldsymbol{\lambda}^{(3)},\boldsymbol{\lambda}^{(4)}\right),\left(\boldsymbol{\lambda}^{(5)},\boldsymbol{\lambda}^{(6)}\right)\in B_{r'}\left(\left(\boldsymbol{\lambda}^{(1)},\boldsymbol{\lambda}^{(2)}\right)\right)$
	where 
	\begin{align}
	{\lambda}_{j}^{(3)} > {\lambda}_{j}^{(5)},
	{\lambda}_{j}^{(4)} > {\lambda}_{j}^{(6)}
	\quad \forall j=1,..,J.
	\end{align}
	For any $\alpha \in (0,1)$, the line
	\begin{align}
	\mathcal{L}_\alpha =
	\mathcal{L}\left(\alpha\boldsymbol{\lambda}^{(3)}+(1-\alpha)\boldsymbol{\lambda}^{(5)},\alpha\boldsymbol{\lambda}^{(4)}+(1-\alpha)\boldsymbol{\lambda}^{(6)}\right)
	\end{align}
	has
	\begin{align}
	\mu_1\left(
	\mathcal{L}_\alpha
	\cap
	\Lambda_{smooth}^{c}
	\right)
	> \delta/2.
	\end{align}
	As the lines $\mathcal{L}_\alpha$ do not intersect for $\alpha \in (0,1)$, then
	\begin{align}
	\mu\left(
	\cup_{\alpha\in[0,1]}
	\left(
	\mathcal{L}_\alpha
	\cap\Lambda_{smooth}^{c}
	\right)\right)
	= \int_{0}^{1} \mu_1\left(
	\mathcal{L}_\alpha
	\cap
	\Lambda_{smooth}^{c}
	\right) d\alpha
	> \delta/2
	\end{align}
	Thus
	\begin{align}
	\mu\left(\Lambda_{smooth}^{c}\right)
	\ge
	\mu\left(
	\cup_{\alpha\in[0,1]}
	\left(
	\mathcal{L}_\alpha
	\cap\Lambda_{smooth}^{c}
	\right)\right)
	>\delta/2.
	\end{align}
	However this is a contradiction of our assumption that $\mu\left(\Lambda_{smooth}^{c}\right)=0$.
\end{proof}

Finally, combining Lemmas~\ref{lemma:lipschitz_lambda_ext_c} and \ref{lemma:ext_measure_zero}, we can show that the Lipschitz condition is satisfied over all of $\Lambda$.

\begin{proof}[Proof for Lemma~\ref{lemma:nonsmooth}]
	Since we already showed Lemma~\ref{lemma:lipschitz_lambda_ext_c}, it suffices to show that the Lipschitz condition is satisfied for any $\boldsymbol{\lambda^{(1)}},\boldsymbol{\lambda^{(2)}}\in\Lambda_{ext}$.
	Lemma~\ref{lemma:ext_measure_zero} states that $\mu_{2J}(\Lambda_{ext}) = 0$, which means that there exists a sequence
	$\left \{\left(\boldsymbol{\lambda}^{(1,i)},\boldsymbol{\lambda}^{(2,i)}\right) \right \}_{i=1}^\infty \subseteq \Lambda_{ext}^{c}$
	such that $\lim_{i\rightarrow\infty}\left(\boldsymbol{\lambda}^{(1,i)},\boldsymbol{\lambda}^{(2,i)}\right)=\left(\boldsymbol{\lambda}^{(1)},\boldsymbol{\lambda}^{(2)}\right)$.
	As $L_T$ is continuous and we have assumed that there exists a unique minimizer of $\hat{\boldsymbol{\theta}}(\boldsymbol{\lambda})$ for all $\boldsymbol{\lambda} \in \Lambda$, then $\hat{\boldsymbol{\theta}}(\boldsymbol{\lambda})$ is continuous in $\boldsymbol{\lambda}$ over all $\Lambda$.
	As $g(\boldsymbol{\theta})(x)$ is also continuous in $\boldsymbol{\theta}$, then for any $\boldsymbol{\lambda}^{(1)},\boldsymbol{\lambda}^{(2)}\in\Lambda$,
	we have
	\begin{align}
	\left |
	g(\hat{\boldsymbol{\theta}}(\boldsymbol{\lambda}^{(1)} | T)(\boldsymbol{x})
	-
	g(\hat{\boldsymbol{\theta}}(\boldsymbol{\lambda}^{(2)} | T)(\boldsymbol{x})
	\right |
	& = \lim_{i\rightarrow\infty}
	\left |
	g(\hat{\boldsymbol{\theta}}(\boldsymbol{\lambda}^{(1,i)} |T))(\boldsymbol{x})
	-
	g(\hat{\boldsymbol{\theta}}(\boldsymbol{\lambda}^{(2,i)} |T))(\boldsymbol{x})
	\right |\\
	& \le \lim_{i\rightarrow\infty}
	C_\Lambda(\boldsymbol{x}|T)
	\|\boldsymbol{\lambda}^{(1,i)}-\boldsymbol{\lambda}^{(2,i)}\|_{2}\\
	& = C_\Lambda(\boldsymbol{x}|T)
	\|\boldsymbol{\lambda}^{(1)}-\boldsymbol{\lambda}^{(2)}\|_{2}
	\end{align}
	where $C_\Lambda(\boldsymbol{x}|T)$ is defined in \eqref{eq:nonsmooth_lipschitz_func}.
\end{proof}

\subsubsection{Proof for Lemma \ref{lemma:nonparam_smooth}}

\begin{proof}
	Let $H_{0} = \left \{
	j:\left\Vert \hat{g}_{j}(\boldsymbol{\lambda}^{(2)}|T)-\hat{g}_{j}(\boldsymbol{\lambda}^{(1)}|T)\right\Vert _{D^{(n)}} \ne 0\,\, \forall j = 1,...,J
	\right \}$.
	For all $j \in H_0$, let 
	\[
	h_{j}=
	\frac{\hat{g}_{j}(\boldsymbol{\lambda}^{(2)}|T)-\hat{g}_{j}(\boldsymbol{\lambda}^{(1)}|T)}{\left\Vert \hat{g}_{j}(\boldsymbol{\lambda}^{(2)}|T)-\hat{g}_{j}(\boldsymbol{\lambda}^{(1)}|T)\right\Vert _{D^{(n)}}}.
	\]
	For notational convenience, let $\hat{g}_{1,j} = \hat{g}_{j}(\boldsymbol{\lambda}^{(1)}|T)$. Consider the optimization problem
	\begin{equation}
	\hat{\boldsymbol{m}}(\boldsymbol{\lambda})=\left\{ \hat{m}_{j}(\boldsymbol{\lambda})\right\} _{j\in H_0}
	=\argmin_{m_{j} \in \mathbb{R}: j\in H_0}
	\frac{1}{2}
	\left \|y-\sum_{j=1}^{J}\left(\hat{g}_{1,j}+m_{j}h_{j}\right) \right \|_{T}^{2}
	+\sum_{j=1}^{J}\lambda_{j}
	P_{j} \left (\hat{g}_{1,j}+m_{j}h_{j} \right ).
	\end{equation}
	By the gradient optimality conditions, we have
	\begin{equation}
	\nabla_{m} \left .
	\left[\frac{1}{2}\|y-\sum_{j=1}^{J}\left(\hat{g}_{1,j}+m_{j}h_{j}\right)\|_{T}^{2}+\sum_{j=1}^{J}\lambda_{j}P_{j}(\hat{g}_{1,j}+m_{j}h_{j})\right] \right |_{m=\hat{m}(\lambda)}
	= 0.
	\label{eq:nonparam_grad_opt}
	\end{equation}
	Implicit differentiation with respect to $\boldsymbol{\lambda}$ gives us
	\begin{equation}
	\nabla_\lambda 
	\nabla_m
	\left . \left[
	\frac{1}{2}\|y-\sum_{j=1}^{J}\left(\hat{g}_{1,j}+m_{j}h_{j}\right)\|_{T}^{2}+\sum_{j=1}^{J}\lambda_{j}P_{j}(\hat{g}_{1,j}+m_{j}h_{j})\right] \right |_{m=\hat{m}(\lambda)}
	= 0.
	\label{eq:nonparam_imp_diff}
	\end{equation}
	From the product rule and chain rule, we can write the system of equations from \eqref{eq:nonparam_imp_diff} as
	\begin{align}
	\nabla_{\lambda} \hat{\boldsymbol{m}}(\boldsymbol{\lambda}) = - \left(
	\nabla_{m}^2 L_T(\boldsymbol{m}, \boldsymbol{\lambda})
	\right )^{-1}
	\diag \left \{ \left.
	\frac{\partial}{\partial m_{j}}P_{j}(\hat{g}_{1,j}+m_{j}h_{j})\right|_{m=\hat{m}(\lambda)}
	\right \}_{j=1}^J
	\label{eq:nonparam_grad}
	\end{align}
	where $L_T(\boldsymbol{m}, \boldsymbol{\lambda})$ is the loss in \eqref{eq:nonparam_grad_opt}.
	
	We now bound the second term in \eqref{eq:nonparam_grad}.
	From \eqref{eq:nonparam_grad_opt} and Cauchy Schwarz, we have for all $k=1,...,J$
	\begin{equation}
	\left|\frac{\partial}{\partial m_{k}}P_{k}(\hat{g}_{1,k}+m_{k}h_{k})\right|_{m=\hat{m}(\lambda)}
	\le 
	\frac{1}{\lambda_{min}}
	\left\Vert y-\sum_{j=1}^{J}\left(\hat{g}_{1,j}+\hat{m}_{j}(\boldsymbol{\lambda})h_{j}\right)
	\right\Vert _{T}\|h_{k}\|_{T}.
	\end{equation}
	From the definition of $h_k$, we know that $\|h_{k}\|_{T} \le \sqrt{\frac{n_{D}}{n_{T}}}$.
	By definition of $\hat{m}(\boldsymbol{\lambda})$ and $\hat{g}_{1}$, we also have
	\begin{align*}
	\frac{1}{2}\left\Vert y-\sum_{j=1}^{J}\left(\hat{g}_{1,j}+\hat{m}_{j}(\boldsymbol{\lambda})h_{j}\right)\right\Vert _{T}^{2}
	\le
	\frac{1}{2}
	\left\Vert y-\sum_{j=1}^{J}\hat{g}_{1,j}\right\Vert _{T}^{2}
	+\sum_{j=1}^{J}\lambda_{j}P_{j}(\hat{g}_{1,j})
	\le \frac{1}{2} \|\epsilon\|_T^2 + C^*_\Lambda.
	\end{align*}
	Hence
	\begin{equation}
	\left|\frac{\partial}{\partial m_{k}}P_{k}(\hat{g}_{1,k}+m_{k}h_{k})\right|_{m=\hat{m}(\lambda)}
	\le
	\frac{1}{\lambda_{min}}
	\sqrt{
		\left(
		\|\epsilon\|_T^2 + 2 C^*_\Lambda
		\right)
		\frac{n_{D}}{n_{T}}
	}.
	\end{equation}
	By \eqref{eq:gateuax}, we know $\nabla_{m}^2 L_T(\boldsymbol{m}, \boldsymbol{\lambda}) \succeq m(T)I$.
	So for all $k$,
	\begin{align}
	\|\nabla_{\lambda}\hat{m}_{k}(\boldsymbol{\lambda})\|_2
	& \le
	\frac{m(T)}{\lambda_{min}}
	\sqrt{
		\left(
		\|\epsilon\|_T^2 + 2 C^*_\Lambda
		\right)
		\frac{n_{D}}{n_{T}}
	}
	\end{align}
	By the mean value inequality and Cauchy Schwarz, we have
	\begin{equation}
	\left|\hat{m}_{k}(\boldsymbol{\lambda}^{(2)})-\hat{m}_{k}(\boldsymbol{\lambda}^{(1)})\right| 
	\le
	\frac{m(T)}{\lambda_{min}}
	\sqrt{
		\left(
		\|\epsilon\|_T^2 + 2 C^*_\Lambda
		\right)
		\frac{n_{D}}{n_{T}}
	}.
	\end{equation}
	By construction,
	$
	\left|
	\hat{m}_k(\boldsymbol{\lambda}^{(2)})-\hat{m}_k(\boldsymbol{\lambda}^{(1)})
	\right|  =
	\left \| 
	\hat{g}_k(\boldsymbol{\lambda}^{(2)}|T)-\hat{g}_k(\boldsymbol{\lambda}^{(1)}|T)
	\right  \|_{D^{(n)}}
	$.
	So we obtain our desired result in \eqref{eq:nonparam_lipshitz_thrm}.
\end{proof}

\subsection{Examples: detailed derivations}

\noindent \textbf{Example~\ref{ex:ridge}} (Multiple ridge penalties)
Here we present the details for deriving \eqref{eq:param_add_lipschitz} for Example \ref{ex:ridge}.
The additive components $g_j(\boldsymbol{\theta}^{(j)})(\boldsymbol{x}^{(j)})$ are linear functions that are $\ell_j$-Lipschitz where $\ell_j(\boldsymbol{x}^{(j)}) = \|\boldsymbol{x}^{(j)}\|_2$.
Then by Lemma~\ref{lemma:param_add}, the fitted function $g(\hat{\boldsymbol{\theta}}(\boldsymbol{\lambda}))(\boldsymbol{x})$ satisfy Assumption~\ref{assump:lipschitz} over $\mathbb{R}^p$ with
\begin{align}
C_\Lambda \left ( \boldsymbol{x} | T \right ) =
n^{2t_{\min}}
\sqrt{
	C^*_{T}
	\left(
	\sum_{j = 1}^J
	\|\boldsymbol{x}^{(j)}\|_2^2
	\left(\frac{1}{n_T} \sum_{(x_i, y_i) \in T} \|\boldsymbol{x}_i^{(j)}\|^2_2\right)
	\right)
}
\label{eq:ex_ridge}
\end{align}
where $C^*_{T}$ is defined in Example~\ref{ex:ridge} of the main manuscript.

\noindent \textbf{Example~\ref{example:sobolev}} (Multiple sobolev penalties)
here we present the details for deriving \eqref{eq:param_add_lipschitz} for Example~\ref{example:sobolev}
Since the solution to \eqref{eq:smoothing_spline} must be the sum of natural cubic splines \citep{buja1989linear}, we can parameterize the space using a Reproducing Kernel Hilbert Space with inner product
\begin{align}
\langle f, g \rangle = \int_{0}^1 f^{''}(x) g^{''}(x) dx
\end{align}
and the reproducing kernel
\begin{align}
R(s, t) = st(s \wedge t)
+ \frac{s + t}{2} (s \wedge t)^2
+ \frac{1}{3}
(s \wedge t)^3
\end{align}
\citep{heckman2012theory}.
Then one can instead solve for \eqref{eq:smoothing_spline} over the functions $g$ of the form
\begin{align}
g(x_1,..., x_J) = \alpha_0 + \sum_{j=1}^J g_j(x_j)
\end{align}
where the functions $g_j$ are split into a linear component and an orthogonal non-linear component
\begin{align}
g_j(x_j) = \alpha_{1j} x_j + \sum_{i=1}^{n_T} \theta_{ij} R(x_{ij}, x_j).
\end{align}
For notational simplicity, we will also denote $\vec{R}(x | D)_{ij} = R(x_{ij}, x_j)$.
We will also write
\begin{align}
g_{j, \perp}(x_j) = \sum_{i=1}^{n_T} \theta_{ij} R(x_{ij}, x_j).
\end{align}

Using this finite-dimensional representation, we find that
\begin{align}
\int_{0}^1 \left(g_j^{''}(x)\right)^{2} dx
= \sum_{u = 1}^{n_T} \sum_{v=1}^{n_T} \theta_{uj} \theta{vj} R(x_{uj}, x_{vj})
= \theta_j^\top K_j \theta_j
\label{eq:sobolev_finite}
\end{align}
where the matrix $K_j$ has elements
$
K_{j, (u, v)} = R(x_{uj}, x_{vj}).
$
Since any $g_j$ with non-zero $\boldsymbol{\theta}_j$ will have a positive Sobolev penalty, then the matrix $K_j$ must be positive definite.
Using the formulation above, we re-express \eqref{eq:smoothing_spline} as the finite-dimensional problem
\begin{align}
\hat{\alpha_0}(\boldsymbol{\lambda}),
\hat{\boldsymbol{\alpha}_1}(\boldsymbol{\lambda}),
\hat{\boldsymbol{\theta}}(\boldsymbol{\lambda})
& = \argmin_{\alpha_0, \boldsymbol{\alpha}_1, \boldsymbol{\theta}}
\frac{1}{2}
\left \|
\boldsymbol{y}_T -
\alpha_0 \boldsymbol{1}
- X_T \boldsymbol{\alpha}_1
- K \boldsymbol{\theta}
\right \|^2_2
+
\frac{1}{2}
\boldsymbol{\theta}^\top
\diag \left (
\left \{
\lambda_j K_j
\right \} \right ) \boldsymbol{\theta}.
\end{align}
where $K = (K_1 ... K_J)$.
In order to make the fitted functions $\hat{g}_j$ identifiable, we add the usual constraint that $\sum_{i=1}^{n_T} g_j(x_{ij}) = 0$ for all $j$.
We also assume that $X_T^\top X_T$ is nonsingular to ensure that there is a unique $\hat{\alpha}_1$.

The KKT conditions then gives us
\begin{align}
\hat{\alpha}_0 &= \frac{1}{n_T}\sum_{(x_i, y_i)\in T} y_i \\
\hat{\boldsymbol{\alpha}}_1(\boldsymbol{\lambda})
& = (X_T^\top X_T)^{-1} X_T^\top
(
\boldsymbol{y}_T - \hat{\alpha}_0 \boldsymbol{1}
- K \hat{\boldsymbol{\theta}}(\boldsymbol{\lambda})
)
\label{eq:kkt_sobolev_linear}
\\
\begin{split}
\hat{\boldsymbol{\theta}}(\boldsymbol{\lambda})
& =
\diag(K_j^{-1/2})
\left(
K^{(1/2)\top}
P_{X_T}^\top
K^{(1/2)} + \diag(\lambda_j I)
\right)^{-1}
K^{(1/2)\top}
P_{X_T}^\top
(I - \frac{1}{n}\boldsymbol{1} \boldsymbol{1}^\top)
\boldsymbol{y}_T
\end{split}
\label{eq:kkt_sobolev}
\end{align}
where $K^{(1/2)} = (K_1^{1/2} ... K_J^{1/2})$, $I$ is the $n_T\times n_T$ identity matrix, and $P_{X_T}^\top = I - X_T (X_T^\top X_T)^{-1} X_T^\top$.

To apply Theorem~\ref{thrm:train_val}, we need to characterize how $\hat{g}(\boldsymbol{\lambda})(\cdot)$ varies with $\boldsymbol{\lambda}$.
Since we have the closed form solution to \eqref{eq:kkt_sobolev}, we use it to directly bound the Lipschitz factor $C_\Lambda(\boldsymbol{x} | D^{(n_T)})$.
From \citet{green1993nonparametric}, we know that the value of the cubic $\hat{g}_j$ on the interval $[t_L, t_R]$ can be defined using its values and second derivatives at the ends of the interval.
Let $h = t_R - t_L$.
Then the value of the cubic
\begin{align}
\begin{split}
\hat{g}_{j,\perp}(x_j)
& = \hat{\alpha_{1j}} x_j
+ \frac{(x_j - t_L) \hat{g}_{j, \perp} (t_R) + (t_R - t) \hat{g}_{j, \perp}(t_L)}{h}\\
& \quad - \frac{1}{6}(x_j - t_L)(t_R - x_j) \left\{
\left(
1 + \frac{x_j - t_L}{h}
\right) \hat{g}''_{j, \perp}(t_R^+)
\left(
1 + \frac{t_R - x_j}{h}
\right) \hat{g}''_{j, \perp}(t_L^+)
\right \}.
\label{eq:interp}
\end{split}
\end{align}
Let $\hat{\boldsymbol{\gamma}}_j$ be the vector of second derivatives of $\hat{g}''_{j, \perp}$ for observations in the training data.
Since the fitted functions $\hat{g}_{j, \perp}$ must be natural cubic splines, $\hat{\boldsymbol{\gamma}}_j$ and $\hat{\boldsymbol{\theta}}_j$ have a linear relationship:
\begin{align}
\hat{\boldsymbol{\gamma}}_j & = R^{-1}_j Q^\top_j K_j \hat{\boldsymbol{\theta}}_j
\label{eq:second_deriv}
\end{align}
where the matrix $R_j$ is a banded diagonally dominant matrix and $Q_j$ is a banded negative-semi-definite matrix that depend on the covariates $x_j$ in the training data.
For the definitions of $R_j$ and $Q_j$, refer to \citet{green1993nonparametric}.
Let $h_j(D^{(n_T)})$ be the smallest distance between observations of the $j$th covariates in the training data $T$.
Then using the Gershgorin circle theorem \citep{gershgorin1931uber}, one can show that all the eigenvalues of $R_j$ are larger than $\frac{1}{3} h_j(D^{(n_T)})$ and all the eigenvalues of $Q_j$ have magnitudes no greater than $4/h_j(D^{(n_T)})$.
Thus using \eqref{eq:interp} and \eqref{eq:second_deriv}, we have that
\begin{align}
\left \|
\nabla_{\lambda} \hat{g}_{j, \perp}(\boldsymbol{\lambda})(x_j)
\right \|_2
\le
\frac{c}{h_j(D^{(n_T)})^2}
\left\|
\nabla_{\lambda} K_j \hat{\boldsymbol{\theta}}_j(\boldsymbol{\lambda})
\right \|_2
\end{align}
for some absolute constant $c > 0$.
To bound the second term on the right hand side, we know from \eqref{eq:kkt_sobolev} that
\begin{align}
& \nabla_{\lambda_\ell} K_j \hat{\boldsymbol{\theta}}_j(\boldsymbol{\lambda})\\
& = 
\left[
\begin{matrix}
0 & .. & 0 & K_j^{1/2} & 0 & .. & 0
\end{matrix}
\right]
\left(
K^{(1/2), \top} P_{X_T}^\top K^{(1/2)} + \diag\{\lambda_j I\}_{j=1:J}
\right)^{-2}
K^{(1/2), \top}
P_{X_T}^\top (I - \frac{1}{n} \boldsymbol{1} \boldsymbol{1}^\top) \boldsymbol{y}_T
\end{align}
if $\ell = j$.
Otherwise $\nabla_{\lambda_\ell} K_j \hat{\boldsymbol{\theta}}_j(\boldsymbol{\lambda}) = 0$.
Thus
\begin{align}
\left\|
\nabla_{\lambda_\ell} K_j \hat{\boldsymbol{\theta}}_j(\boldsymbol{\lambda})
\right \|_2
\le
\lambda_{\min}^{-2} \|\boldsymbol{y}_T\|_2 \sqrt{\|K_j\|_2 \sum_{{j'}=1}^J \|K_{j'}\|_2^2}
\end{align}
The eigenvalues of $K_j$ are bounded above by the largest row sum, which is no more than $2 n_T$ (assuming all training covariates are between 0 and 1).
Putting the results above together, we have
\begin{align}
\left \|
\nabla_{\lambda} \hat{g}_{j, \perp}(\boldsymbol{\lambda})(x_j)
\right \|_2
\le
\frac{c \sqrt{J} n_T}{h_j(D^{(n_T)})^2 \lambda_{\min}^2}
\|\boldsymbol{y}_T\|_2.
\end{align}
Also, we have from \eqref{eq:kkt_sobolev_linear} that
\begin{align}
\left \| \nabla_{\lambda} \hat{\boldsymbol{\alpha}_1}(\boldsymbol{\lambda}) \right \|_2
& =
\left \|
\left(
X_T^\top X_T
\right)^{-1}
X_T^\top
\nabla_{\lambda_j} K \hat{\boldsymbol{\theta}}(\boldsymbol{\lambda})
\right \|_2 \\
& =
\left \|
\left(
X_T^\top X_T
\right)^{-1}
X_T^\top
\nabla_{\lambda_j} K_j \hat{\boldsymbol{\theta}}_j(\boldsymbol{\lambda})
\right \|_2 \\
& \le
\left \|
\left(
X_T^\top X_T
\right)^{-1}
X_T^\top
\right \|_2
\lambda_{\min}^{-2} \|\boldsymbol{y}_T\|_2 n_T \sqrt{J}
\end{align}
Finally we can conclude that
\begin{align}
\begin{split}
\left \|
\hat{g}_j(\boldsymbol{\lambda}^{(1)})(x_j)
- \hat{g}_j(\boldsymbol{\lambda}^{(2)})(x_j)
\right \|_2
& \le
\left(
|x_j|
\left \|
\left(
X_T^\top X_T
\right)^{-1}
X_T^\top
\right \|_2
+
\frac{c }{h_j(D^{(n_T)})^2}
\right)\\
& \quad \times
\sqrt{J}
n_T
\lambda_{\min}^{-2}
\|\boldsymbol{y}_T\|_2
\| \boldsymbol{\lambda}^{(1)} - \boldsymbol{\lambda}^{(2)}\|_2
\end{split}
\label{eq:sob_add_comp}
\end{align}
By triangle inequality, we get the Lipschitz factor for the fitted model $\hat{g}$ by summing up \eqref{eq:sob_add_comp} for $j = 1,..,J$.
We find that the Lipschitz factor in \eqref{eq:param_add_lipschitz} is
\begin{align}
C_\Lambda(\boldsymbol{x} | T)
=
\left(
J
\left \|
\left(
X_T^\top X_T
\right)^{-1}
X_T^\top
\right \|_2
+
\sum_{j=1}^J \frac{c }{h_j(T)^2}
\right)
\sqrt{J}
n^{2t_{\min} + 1}
\|\boldsymbol{y}\|_T.
\label{eq:sobolev_lipschitz}
\end{align}

\noindent\textbf{Example~\ref{ex:elastic_net_tv}} (Multiple elastic nets, training-validation split)
Here we check that all the conditions for Lemma~\ref{lemma:nonsmooth} are satisfied.

First we check Condition~\ref{condn:nonsmooth1}.
Since the absolute value function $|\cdot|$ is twice-continuously differentiable everywhere except at zero, the directional derivatives of $||\boldsymbol \theta^{(j)}||_1$ at $\hat{\boldsymbol{\theta}}(\boldsymbol{\lambda})$ only exist along directions spanned by the columns of $\boldsymbol I_{I^{(j)}(\boldsymbol \lambda)}$.
Thus the penalized training loss $L_T(\cdot, \boldsymbol{\lambda})$ is twice differentiable with respect to the directions in
\begin{align}
\Omega^{L_T(\cdot, \lambda)}(\hat{\boldsymbol{\theta}}(\boldsymbol{\lambda} | T))
= \spann(I_{I^{(1)}(\boldsymbol \lambda)}) \times ... \times \spann(I_{I^{(J)}(\boldsymbol \lambda)}).
\label{eq:en_diff_space}
\end{align}

Moreover, the elastic net solution paths are piecewise linear \citep{zou2003regression}.
This implies that the nonzero indices of the elastic net estimates stay locally constant for almost every $\boldsymbol{\lambda}$; so \eqref{eq:en_diff_space} is also a local optimality space for $L_T(\cdot, \boldsymbol{\lambda})$.
In addition, this implies that Condition~\ref{condn:nonsmooth2} is satisfied.

We also check that the Hessian of the penalized training loss has a minimum eigenvalue bounded away from zero.
Consider the following orthogonal basis of \eqref{eq:en_diff_space} at $\hat{\boldsymbol{\theta}}(\boldsymbol{\lambda})$: $U(\boldsymbol{\lambda}) = \{U^{(j)}(\boldsymbol{\lambda})\}_{j = 1}^J$ where
\begin{align}
U^{(j)} =
\left(
\begin{matrix}
\boldsymbol{0} \\
I_{I^{(j)}(\boldsymbol \lambda)}\\
\boldsymbol{0}
\end{matrix}
\right)
\quad \forall j = 1,...,J.
\end{align}
The Hessian matrix of $L_T(\cdot, \boldsymbol{\lambda})$ with respect to directions $U(\boldsymbol{\lambda})$ is
\begin{align}
\boldsymbol U(\boldsymbol{\lambda})^\top \boldsymbol{X}_{T}^\top \boldsymbol{X}_{T} \boldsymbol U(\boldsymbol{\lambda}) + \lambda_1 w \boldsymbol{I}
\label{eq:en_hessian}
\end{align}
where $\boldsymbol{X}_{T} = (\boldsymbol{X}^{(1)} ... \boldsymbol{X}^{(J)})$
and $\boldsymbol{I}$ is the identity matrix with length equal to the number of nonzero elements in $\hat{\boldsymbol{\theta}}(\boldsymbol{\lambda})$.
Since the first summand is positive semi-definite and $\lambda_1 > \lambda_{\min}$, \eqref{eq:en_hessian} has a minimum eigenvalue of $\lambda_{\min}w$.

\noindent\textbf{Example~\ref{eq:elastic_net_cv}} (Multiple elastic nets, cross-validation)
Here we present details for establishing an oracle inequality when multiple elastic net penalties are tuned via the averaged version of $K$-fold cross-validation.
First we check the conditions in Theorem~\ref{thrm:kfold} are satisfied.
In the problem setup, $X$ is a log-concave vector and
$
\sup_{\|a\|_\infty = 1} \left\| X^\top a \right \|_{L_{\psi_2}} < c_R < \infty
$
for some constant $c_R$.
Using a similar procedure as \citet{lecue2012oracle}, we can then show that \eqref{eq:cv_assump1} and \eqref{eq:cv_assump2} in Assumption~\ref{assump:tail_margin} are satisfied with
$
K_0 \coloneqq 
(\|\boldsymbol{\theta}^*\|_\infty + K_0')
c_R$.

Next we find the Lipschitz factor.
We can upper bound the Lipschitz factor of the thresholded model with the Lipschitz factor of the un-thresholded model.
So Assumption~\ref{assump:lipschitz} is satisfied over $\mathbb{R}^p$ with
\begin{align}
C_\Lambda(\boldsymbol{x} | D^{(n_T)})
=
\frac{n^{2t_{\min}}}{w}
R^2 \sqrt{
	J p
	\left(
	\|\epsilon\|_{D^{(n_T)}}^{2}
	+\sum_{j=1}^J
	2 \|\boldsymbol{\theta}^{*,(j)}\|_1
	+ w\|\boldsymbol{\theta}^{*,(j)}\|_2^2
	\right)
}.
\label{eq:elastic_lipschitz_cv}
\end{align}

Finally, to apply Theorem~\ref{thrm:kfold}, we must find a bound for \eqref{eq:prob_bound_cv}.
Let $\sigma_0 = O_p(n^{4t_{\min}}R^4Jp/w^2 )$.
Using the fact that
$\left \| C_\Lambda(\cdot | D^{(n_T)}) \right \|_{L_{\psi_2}}^2$ is a linear function of $\|\epsilon\|_{D^{(n_T)}}^2$, which is a sub-exponential random variable, we have that
\begin{align}
\sum_{k = 1}^\infty k \Pr \left (
\| C_\Lambda(\cdot | D^{(n_T)}) \|_{L_{\psi_2}}
\ge 2^k \sigma_0
\right )
& \le
\sum_{k = 1}^\infty k \Pr \left (
\|\epsilon\|_{D^{(n_T)}}^2
\ge 2^{2k}
\right )
\le
c_1
\exp \left (
- \frac{c_0 n_T}{\|\epsilon\|^2_{L_{\psi_{2}}}}
\right )
\end{align}
for constants $c_0, c_1 > 0$.
Plugging in this bound to Theorem~\ref{thrm:kfold} gives us our desired result.

\bibliographystyle{unsrtnat}
\bibliography{hyperparam-theory}

\end{document}